\documentclass[10pt]{article} 

\usepackage[preprint]{rlj} 

\usepackage{amssymb}            
\usepackage{mathtools}          
\usepackage{mathrsfs}           
\usepackage{graphicx}           
\usepackage{subcaption}         
\usepackage[space]{grffile}     
\usepackage{url,booktabs}                
\usepackage{bm}
\allowdisplaybreaks

\title{Bandits for Efficient Experimentation: Adapting to Control Group, Preferences, and Context Drifts}

\setrunningtitle{Bandits with Control Group, Preferences, and Context Drifts}

\author{Udvas Das, Waris Radji, Debabrota Basu, Odalric-Ambrym Maillard}

\emails{udvas.das@inria.fr}

\affiliations{
Univ. Lille, Inria, CNRS,
Centrale Lille, UMR 9189 – CRIStAL 
F-59000 Lille, France
}

\contribution{
    In this work, we propose \framework~, the first MED-like algorithm that handles non-stationarity of contexts in every episode. We also provide instance-dependent upper bound on regret.
    }
    {
    No prior algorithm in the existing literature handles non-stationarity in contexts. 
    }

\contribution{
    We impose a safety constraint with respect to a base-policy (control-group). As we do not know the true mean parameter $\theta$ beforehand, this constraint is also stochastic in nature. We show \framework~can adapt to this constraint and the instance dependent regret upper bound involves a constraint-aware gap $\tilde{\Delta}$. We also prove with high probability, \framework~enjoys $\tilde{\bigO}(d)$ constraint violation
    }
    {
    Existing literature on non-stationary bandits and safe bandits are presently unwed. We affirmatively bridge these two literature.
    }

\contribution{
    We propose a novel constraint-aware Lagrangian dual penalised version of the G-optimal design to efficiently compute exact probabilities for each arm at every step. 
    }
    {
    To best of our knowledge, no prior work handles customised preferences per user and propose a preference divergent allocation strategy.
    }
\contribution{
    We conduct numerical experiments across synthetic datasets with varying types of context drifts. \framework~ outperforms the standard baseline algorithm significantly. 
    }
    {
    Our numerical results suggest that \framework~significantly outperforms conservative baselines that ignores the drift and preference structure.
    }

\keywords{Heteroskedastic Bandits, Context-drift, Customised preference, Control group, Minimum empirical divergence, Instance-dependent regret.} 

\summary{We consider a variant of the linear contextual stochastic multi-armed bandits,
where the learner must provide recommendations to a group of users, 
each having its personalised preference vector,  
and in the presence of context distributions that are drifting over time. 
Under practitioner-friendly assumptions, we reduce this setting to linear bandit with stationary mean but heteroskedastic and non-stationary noise. 
We further study the case when the learner must satisfy a \textit{baseline performance threshold}.
We introduce \texttt{Dri-MED}, an algorithm inspired from the linear version of the MED strategy, and carefully adapted to handle the non-stationary heteroskedastic noise. We prove upper bounds on instance-dependent regret and also expected number of constraint violations.
Our numerical results suggest that \texttt{Dri-MED} significantly outperforms conservative baselines that ignores the drift and preference structure.

}
\usepackage{array}
\usepackage{makecell}
\usepackage{float,xcolor,longtable}
\usepackage{tcolorbox}
\usepackage{mathtools}
\usepackage{amsmath, amssymb, natbib, graphicx, url}
\usepackage{algorithm}
\usepackage{algorithmic}
\usepackage{dsfont,enumitem}
\usepackage{footnote}
\usepackage{caption}
\usepackage{subcaption}
\usepackage{mwe}
\usepackage{cancel}
\usepackage{multirow}

\makesavenoteenv{tabular}
\makesavenoteenv{table}

\newcommand{\E}{\mathbb{E}}

\newcommand{\contextdist}[1]{\mathcal{C}_{#1}}

\newcommand \defn {\mathrel{\triangleq}}

\DeclareMathOperator*{\argmax}{arg\,max}
\DeclareMathOperator*{\argmin}{arg\,min}

\newcommand \expect {\mathop{\mbox{\ensuremath{\mathbb{E}}}}\nolimits}

\newcommand{\indicator}{\mathds{1}}

\newcommand{\reals}{\mathbb{R}}

\usepackage{amsthm}
\newtheorem{theorem}{Theorem}
\newtheorem{corollary}{Corollary}
\newtheorem{lemma}{Lemma}

\newtheorem{definition}{Definition}

\newtheorem{assumption}{Assumption}

\makeatletter
\newtheorem*{rep@theorem}{\rep@title}
\newcommand{\newreptheorem}[2]{%
	\newenvironment{rep#1}[1]{%
		\def\rep@title{\textbf{#2} \ref{##1}}%
		\begin{rep@theorem}}%
		{\end{rep@theorem}}}
\makeatother
\newreptheorem{theorem}{Theorem}
\newreptheorem{lemma}{Lemma}
\newreptheorem{proposition}{Proposition}
\newreptheorem{assumption}{Assumption}
\newreptheorem{corollary}{Corollary}

\DeclareRobustCommand{\bigO}{\text{\usefont{OMS}{cmsy}{m}{n}O}}
\allowdisplaybreaks%
\newif\ifdoublecol
\doublecolfalse

\usepackage[nohints]{minitoc}

\usepackage{mathrsfs}
\DeclareRobustCommand{\bigO}{%
  \text{\usefont{OMS}{cmsy}{m}{n}O}%
}

\usepackage{tikz}
\usetikzlibrary{automata}
\usetikzlibrary{bayesnet, arrows, calc, shapes, backgrounds,arrows,decorations.pathmorphing,fit,positioning}
\tikzset{
   container/.style = {rectangle, rounded corners, draw=yellow, dashed,
fit=#1, inner sep=6mm, node contents={}},
circle-label/.style = {circle, draw}
        }
\tikzset{box/.style={draw, diamond, thick, text centered, minimum height=0.5cm, minimum width=1cm, text width=0.9cm}}
\tikzset{line/.style={draw, thick, -latex'}}
\allowdisplaybreaks
\usepackage{pgfplots}

\newcommand{\framework}{{\color{blue}\ensuremath{\texttt{Dri-MED}}}}

\usepackage{layouts}

\def\bp{\mathbf{p}}

\def\btheta{{\boldsymbol\theta}}

\def\bnu{{\boldsymbol\nu}}

\def\bpi{{\boldsymbol\pi}}

\def\cA{\mathcal{A}}

\def\cC{\mathcal{C}}
\def\cD{\mathcal{D}}
\def\cE{\mathcal{E}}
\def\cF{\mathcal{F}}
\def\cG{\mathcal{G}}

\def\cJ{\mathcal{J}}

\def\cL{\mathcal{L}}

\def\cN{\mathcal{N}}

\def\cP{\mathcal{P}}

\def\cU{\mathcal{U}}
\def\cV{\mathcal{V}}

\def\E{\mathbb{E}}

\def\det{{\sf det}}

\usepackage{todonotes}

\newcommand{\Prob}{\mathbb{P}}

\newcommand{\Reg}{\mathrm{Reg}}

\newcommand{\vp}{\phi}

\newcommand{\Dpi}{\Delta_{\bpi_0}}
\newcommand{\Dpirs}{\underline{\Delta}_{\bpi_0}}

\newcommand{\inner}[2]{\langle #1,\, #2\rangle}
\newcommand{\thetahat}{\hat{\theta}}

\newcommand{\suml}{\sum_{\ell=1}^L}
\newcommand{\sumh}{\sum_{h=1}^H}
\newcommand{\context}{\bf c_\ell}
\newcommand{\gram}[2]{V_{#1,#2}^{\gamma_{\mathrm{decay}}}}
\newcommand{\graml}[1]{V_{#1}^{\gamma_{\mathrm{decay}}}}
\newcommand{\discount}{\gamma_{\rm decay}}
\newcommand{\truegap}{\Delta_{A_{h,\ell},h,\ell}}
\newcommand{\truegaprs}{\underline{\Delta}_{A_{h,\ell},h,\ell}}
\newcommand{\estgap}{\hat{\Delta}_{A_{h,\ell},h,\ell}}
\newcommand{\estgaprs}{\hat{\underline{\Delta}}_{A_{h,\ell},h,\ell}}
\newcommand{\bestarm}{a_{h,\ell}^*}
\newcommand{\feature}[1]{\phi_{#1,h,\ell}}

\newcommand{\bluecheck}{}%
\DeclareRobustCommand{\bluecheck}{%
  \tikz\fill[scale=0.4, color=blue]
  (0,.35) -- (.25,0) -- (1,.7) -- (.25,.15) -- cycle;%
}\newcommand{\xmark}{%
\tikz[scale=0.23, color=red] {
    \draw[line width=0.7,line cap=round] (0,0) to [bend left=6] (1,1);
    \draw[line width=0.7,line cap=round] (0.2,0.95) to [bend right=3] (0.8,0.05);
}}
\begin{document}

\maketitle  

\begin{abstract}
We consider a variant of the linear contextual stochastic multi-armed bandits,
where the learner must provide recommendations to a group of users, 
each having its personalised preference vector,  
and in the presence of context distributions that are drifting over time. 
Under practitioner-friendly assumptions, we reduce this setting to linear bandit with stationary mean but heteroskedastic and non-stationary noise. 
We further study the case when the learner must ensure the mean reward of each decision must exceed that of a baseline strategy $\bpi_0$ at each decision step.
We introduce \texttt{Dri-MED}, an algorithm inspired from the linear version of the MED strategy, and carefully adapted to handle the non-stationary heteroskedastic noise.
We show that the instance-dependent regret scales as $\tilde{\bigO}\left(\frac{\kappa}{\tilde{\Delta}}d^2(\log(T)\right)$, where $\tilde{\Delta}$ is the constraint-aware sub-optimality gap subject to policy $\pi_0$, with variance-aware multiplicative term $\kappa$ that we carefully handle using heteroskedastic regression. We further show \texttt{Dri-MED} enjoys $\tilde{\bigO}(d)$ expected constraint violations. 
Our numerical results suggest that \texttt{Dri-MED} significantly outperforms conservative baselines that ignores the drift and preference structure.
\end{abstract}

\vspace*{-0.5em}\section{Introduction}



Design of Experiments (DoE) has been an integral component of agricultural research, clinical trials, and natural sciences~\citep{neyman,aylmer1926arrangement,diggle2011,hoshmand2018design}. Suppose a farmer testing whether a new fertilizer, seed variety, or irrigation protocol
outperforms the current standard cannot simply apply the new treatment to one plot and
draw conclusions. Numerous factors such as soil fertility gradients, microclimatic variation, pest pressure, might confound any naive comparison
\citep{gomez1984statistical, tanner2023experimenting}. This fundamental challenge DoE aims to address lies in separating the signal of a treatment effect from the noise
of environmental heterogeneity. Pioneered by \citep{aylmer1926arrangement} at the Rothamsted
Experimental Station, DoE provides a principled statistical framework for planning
agricultural field trials so that \textit{valid, reliable, and efficient} conclusions can be drawn
from inherently noisy observations \citep{rangaswamy1995text}. 

The first and most fundamental objective is \emph{Validity}, i.e., ensuring that an observed difference between a treated plot and a control plot can be causally attributed to the
treatment itself, and not to pre-existing differences between plots. Thus, randomized experiment is at the core of sound statistical validation~\citep{banerjee2020theory,raccuglia2016machine,kohavi2015online} of a experimental design. Treatments must be assigned to experimental units by a chance
mechanism, also referred to as an allocation distribution.

\emph{Reliability:} Validity alone is insufficient in practice.
Even a perfectly randomized experiment may fail to be \emph{reliable} if, in the process
of exploring new treatments, some field sites systematically receive practices that perform
far worse than the current default.
In agriculture, this is not merely a statistical concern but an operational and ethical one.
A farmer whose plot is assigned a poorly performing treatment suffers an economically
damaging yield loss, potentially threatening food security or livelihood. Thus, it becomes imperative to enforce a \textit{performance/safety constraint}~\citep{pacchiano2021stochastic,das2024learning,amani2019linear}, to ensure reliability.
Safety constraints are typically imposed by performing a baseline action/policy over a control group in each site.

\emph{Efficiency:} Finally, as real-world experimentation is costly, we want to minimize the suboptimal decisions over time and avoid unsafe decisions underperforming that of default baseline. 

This common, albeit still idealized situation naturally fits the sequential decision-making~\citep{sutton,lattimore_szepesvari_2020} framework, where one observes the impact of an action or intervention by deploying it in an agricultural site and then adapts the future decisions accordingly. Now, suppose we observe a fixed set of farmers across different season. The change in environmental factors can be modelled as drifts in the decision making contexts over multiple interaction episodes. 
In addition, a critical but frequently overlooked dimension of agricultural experimentation is that different farmers evaluate the outcomes of a treatment differently~\citep{aylmer1926arrangement,rangaswamy1995text}.
A smallholder farmer may weigh yield stability above raw yield maximization, or a farmer in a water-scarce region may prioritize water-use efficiency over all other metrics. Hence, it calls for a design diverse enough that can adapt to these customised preferences.

In this work, we study this problem of designing valid, reliable, and efficient sequential experiments for a finite set of farmers experimenting across multiple agricultural sites with control groups and preferences, and across multiple episodes with plausible context drifts to find out optimal interventions over time. Though we elucidate our problem motivation with the agricultural example, the same problem reoccurs across domains, such as clinical trials for drugs and treatments~\citep{reda2020machine,munro2021safety}, aircraft design optimisation~\citep{mavrotas2009effective}, recommender systems~\citep{feijer2025calibrated,saha2026one} to name a few.

\subsection{Problem Setting}\label{sec:formulation}
We formalise this problem as a variant of the multi-armed bandit problem with a finite set of arms $\cA$, from which a learner must recommend actions to be played to a group of $H$ users (or experimental sites), in $L$ successive rounds called \textit{episodes}. 

\textbf{Contexts.} The rewards depend on a context that might evolve over episodes. Precisely, within an episode $\ell\in[L]$, the contexts $C_{h,\ell}$ are generated from a context distribution $\contextdist{\ell}$ for all users $h\in[H]$. But the context distribution is non-stationary, i.e. it varies over the episodes $\ell$.
Then, the learner must recommend one action for each user $h\in[H]$. 
Upon recommending the actions $A_{h,\ell}$,
the learner receives  $M$-dimensional feedbacks $Y_{h,\ell}$, where
$Y_{h,\ell}$ is a random vector of dimension $M$ generated by an unknown distribution, with a specific linear structure (Assumption~\ref{ass:linear}).

\textbf{Preferences.} Each user $h$ further has a preference $\bp_h\in\mathbb{R}^M$ over the vectorial feedback. This further yields a reward $r_{h,\ell}(a)\defn Y_{h, \ell}^\top {\bf p}_h$ for user $h$ upon receiving an action $A_{h,\ell}=a$. 
The goal of the learner is to suggest a sequence of actions $\{A_{h,\ell}\}$ to the users over multiple episodes that maximises the expected cumulative reward, i.e., $\expect\left[\suml\sumh r_{h,\ell}(A_{h,\ell})\right]$.

\textbf{Baseline policy.} To simulate the control group and action, a mixed baseline policy $\pi_0\in\cP(\cA)$ is given to the learner. Given $\pi_0$ and a tolerance level $(1-\epsilon)\in [0,1]$, the learner must ensure to accumulate at least $(1-\epsilon)\mathbb{E}_{\pi_0}[r_{h,\ell}]$  reward for each user in each round. This additional constraint forces the learner to recommend policies that are at least as good or better than the baseline policy applied on the control group.

\noindent\textbf{Performance metrics:} In this setting, performance of an algorithm is measured with two metrics: 
cumulative regret and expected cumulative number of constraint violations. 
Lower values of both the metrics indicate better performance for an algorithm.

\textbf{Regret.} The cumulative regret is defined as
\begin{equation}\label{eq:regret}
    \Reg = \mathbb{E}\!\left[
        \sum_{\ell=1}^L \sum_{h=1}^H
        \Delta_{A_{h,\ell},h,\ell}
    \right]
    = 
        \expect\left[\sum_{\ell=1}^L \sum_{h=1}^H
        \Bigl({\bf m}^\star_{h,\ell}-{\bf m}_{h,\ell}(A_{h,\ell})\Bigr)\right].
\end{equation}
Here, the sub-optimality gaps $\Delta_{a,h,\ell}\defn {\bf m}^\star_{h,\ell}-{\bf m}_{h,\ell}(a)$, where ${\bf m}_{h,\ell}(a)$ denotes the mean of reward $r_{h,\ell}(a)$ and ${\bf m}^\star_{h,\ell}\defn \max\limits_{a}{\bf m}_{h,\ell}(a)$. Also, $\Delta_0$ denotes the minimum sub-optimality gap from $\pi_0$.

\textbf{Constraint Violation.} On the other hand, the safety is enforced by measuring the cumulated constraint violation with respect to a stationary baseline policy $\pi_0\in\cP(\cA)$,
\begin{equation}\label{eq:constraints}
\text{Violation}({\pi_0}) = \mathbb{E}\!\left[
        \sum_{\ell=1}^L \sum_{h=1}^H \indicator\!\Big\{{\bf m}_{h,\ell}(A_{h,\ell})<(1-\epsilon)\mathbb{E}_{a\sim \pi_0}[{\bf m}_{h,\ell}(a)]\Big\}\right]\,.
\end{equation}
We observe that while the works on safe linear bandits need to assume explicitly existence of a safe arm satisfying the constraint~\citep{pacchiano2021stochastic}, we do not need such assumption. 
This is due to existence of a mixed baseline policy and the constraint defined by it. Thus, $\forall h\in[H], \ell\in[L]$, there exists at leasts one action $a\in \mathcal{A}$ satisfying ${\bf m}_{a,h,\ell}\geq(1-\epsilon)\mathbb{E}_{a\sim \pi_0}[{\bf m}_{h,\ell}(a)]$ for any $\epsilon<1$.

Now, we elaborate the structural assumptions that we adhere to for the rest of the paper.

\begin{assumption}[Gaussian  feedback signal with context-independent mean]\label{ass:cont_indep}
 The feedback signal is Gaussian with mean assumed context-independent. Formally,  for any $h\in[H]$ and $\ell\in[L]$, $Y_{h,\ell}  \mid (A_{h,\ell} = a,C_{h,\ell}=c) \sim \cN(m_a,\Sigma_{a,c})$, where
  $m_a\in\mathbb{R}^{M}$ is \textbf{constant} in $c$ and 
   $\Sigma_{a,c} \in \mathbb{R}^{M \times M}$ is \textbf{a positive semi-definite matrix}
  that varies with both arm $a$ and context $c$. In particular\footnote{
  This implies independence between sites, since the distribution of $Y_{h,\ell}$ does not depend on $(A_{h',\ell'},C_{h',\ell}')_{(h',\ell')\neq (h,\ell)}$  
  }
\begin{equation}\label{eq:mean_indep}
    \mathbb{E}[Y_{h,\ell} \mid A_{h,\ell} = a,C_{h,\ell}=c] = m_a\,,\quad
 	\mathrm{Cov}[Y_{h,\ell} \mid A_{h,\ell} = a,\, C_{h,\ell}=c]
 	= \Sigma_{a, c}.
\end{equation}
\end{assumption}
In particular, under Assumption~\ref{ass:cont_indep},
the mean rewards are $\ell$-independent, i.e., 
$\forall \ell, {\bf m}_{h,\ell}(a)= m_a\bp_h$.
Hence, the optimal mean ${\bf m}^\star_\ell$ and action $a^\star_{h}$ are also episode independent and user dependent. Assumption~\ref{ass:cont_indep} is prevalent in many real-life settings. For example, in clinical trials mean response for a given dose remains stationary but variance in patients' responses is heteroskedastic~\citep{legedza2001heterogeneity}. In case of recommender systems having arms as content categories (sports, politics), average engagement (click, watch-time) or reward for a category is stable over weeks but per-user variance can vary~\citep{saha2026one,feijer2025calibrated}.
\begin{assumption}[Linear structure]\label{ass:linear}
	The mean signal satisfies $m_a= \Phi(a)\theta$,  for some 
	unknown parameter $\theta \in \mathbb{R}^d$ with $\|\theta\|_2 \le S_*$,
	and a \textbf{known} feature matrix $\Phi(a) \in \mathbb{R}^{M \times d}$.
	In particular,   the normalized reward    $\tilde r_{h,\ell}(a)=r_{h,\ell}(a)/\sigma_{a,h,\ell}$ satisfies
	$	\tilde r_{h,\ell}(a) \mid (C_{h,\ell}=c) \sim \cN(\theta^\top \phi_{a,h,\ell},1)$, where 
	\begin{equation}\label{eq:scalar_var}
		\phi_{a,h,\ell}=\frac{1}{\sigma_{a,h,\ell}} \Phi(a)^{\top}\bp_h \in \mathbb{R}^d, \quad
		\sigma^2_{a,h,\ell}
		= \bp_h^\top \Sigma_{a,C_{h,\ell}}\,\bp_h>0.
	\end{equation}\vspace*{-1em}
\end{assumption}
The variance $\sigma^2_{a,h,\ell}$ in Assumption~\ref{ass:linear}, is heteroskedastic in three senses simultaneously:
it varies across actions $a$, users $h$ (through $\bp_h$),
and episodes $\ell$ (through $C_{h,\ell} \sim \contextdist{\ell}$). We further introduce three quantities: (a) maximum variance per step $\bar{\sigma}^2_{h,\ell} = \max\limits_a \sigma_{a,h,\ell}^2$, (b) condition number of covariances over the episodes $\kappa \defn \frac{\max_{a,h,\ell}\sigma_{a,h,\ell}^2}{\min_{a,h,\ell}\sigma_{a,h,\ell}^2} \leq \infty$, and (c) discounted cumulative variance $\Sigma_{LH} \defn \suml{(\discount)}^{L-\ell} \sumh \bar{\sigma}^2_{h,\ell}$ for some discount factor $\gamma_{decay} \in (0,1]$.


\noindent\textbf{Discussion.}
Context-independence in Assumption~\ref{ass:cont_indep} is naturally satisfied in applications where one base action, called the control, is available at each site location $h$, and the practitioner considers as feedback signal the difference of observed effects of action $a$ and of the control. While we assume Gaussian noise, our theory and algorithm design extend to the general sub-Gaussian noise.

\subsection{Outline and Contributions} 

We consider the problem of linear contextual bandits (LinCB) under non-stationary or drifting context distribution for $H$ users encountered in each of $L$ episodes, where each user comes with a customised preference feedback. We investigate the following questions:
\begin{tcolorbox}[top=1pt,bottom=1pt,left=2pt,right=2pt]
1. Can we design an algorithm to solve this problem of safe LinCB under context drifts and customized preferential feedback, and derive an instance-dependent upper bound on its regret?

2. If we have a performance constraint subject to a control-group, or equivalently a safety constraint with respect to a base-policy $\bpi_0$, can we adapt to it, i.e., can we derive an upper bound on expected number of constraint violations?

3. Can we optimally design sampling probabilities at each step that will be simultaneously constraint-aware and preference divergent (sufficient exploration in all customised preference directions)?
\end{tcolorbox}


We aim to design a randomized strategy that efficiently computes the closed form probabilities of selecting each arm, since they prove to be beneficial in downstream tasks such as offline evaluation via inverse propensity score. In this work, we affirmatively answer the above questions and state them as main contributions below:

\textbf{1. Context Drift and Customised Preference.} In this work, we propose \framework~, the first MED-like algorithm that handles non-stationarity of contexts in every episode. We also prove an upper bound on the instance dependent regret of order $\tilde{\bigO}\left(  \frac{\kappa d^2}{\tilde{\Delta}}\log^2(LH)\right)$. This non-trivial condition number captures sensitivity due to strength of drift in contexts across episodes. 

\textbf{2. Safety Constraint.} In our setting, we impose a safety constraint with respect to a base-policy (control-group). As we do not know the true mean parameter $\theta$ beforehand, this constraint is also stochastic in nature. We show \framework~can adapt to this constraint and the instance dependent regret upper bound involves a constraint-aware gap $\tilde{\Delta}$. We also prove with high probability, \framework~enjoys $\tilde{\bigO}(d)$ constraint violation.

\textbf{3. Constraint-aware and Preference Divergent Design.} We propose a novel constraint-aware Lagrangian dual penalised version of the G-optimal design~\citep{balagopalan2024minimum} to efficiently compute exact probabilities for each arm at every step. To best of our knowledge, no prior work handles customised preferences per user and propose a preference divergent allocation strategy. We also verify this claim in our experimental analysis, as \framework~ assigns more allocation on the best arm consistently over baselines.

\textbf{4. Empirical Performance Gain.} We conduct numerical experiments across synthetic datasets with varying types of context drifts, namely abrupt, periodical, gradual or no drift. We observe \framework~ outperforms the standard baseline algorithm significantly. Additionally, we propose an IMED~\citep{baudry2023fast,honda2015non} version of \framework~ for empirical comparison, namely \texttt{Dri-IMED}~(see Section \ref{supp:dri-imed}). The performance of \texttt{Dri-IMED} proves to be competitive consistently with \framework~across all types of drift. \vspace*{-0.8em}

\begin{table}[t!]
\resizebox{\textwidth}{!}{
\begin{tabular}{c c c c c}
\toprule
 Algorithms & Instance Dependent Regret & Minimax Regret & Context Drift & Safety Constraint\\ 
\midrule
 OFUL~\citep{abbasi2011improved} & $\tilde{\bigO}\left( \frac{d^2}{\Delta_{\min}}\log^3 (T)\right)$ &$ \tilde{\bigO}\left( d\sqrt{T}\right)$ & \xmark & \xmark\\
 SpannerIGW~\citep{zhu2022contextual} & $\Omega \left(\frac{1}{\Delta_{\min}}\sqrt{T}\right)$& ${\bigO}\left( \sqrt{dT\log K}\right)$ & \xmark & \xmark\\
 OPLB~\citep{pacchiano2021stochastic}& \xmark & $\tilde{\bigO}\left( \frac{d\sqrt{T}}{\Delta_{\min}}\right)$ & \xmark & \bluecheck\\
 SOLID~\citep{tirinzoni2020asymptotically} & \xmark & $\tilde{\bigO}\left( \sqrt{dT} \right)$ & \xmark & \xmark\\
 LinIMED~\citep{bian2024indexed} & \xmark & $\tilde{\bigO}\left( d\sqrt{T}\right)$ & \xmark & \xmark\\
 LinMED~\citep{balagopalan2024minimum} & $\tilde{\bigO}\left( \frac{d^2}{\Delta_{\min}}\log^2 (T)\right)$ & $\tilde{\bigO}\left( d\sqrt{T}\right)$ & \xmark &  \xmark\\
 \framework~ (This work) &  $\tilde{\bigO}\left( \frac{\kappa d^2}{\tilde{\Delta}}\log^2(T)\right)$ & $\tilde{\bigO}\left( d\sqrt{\kappa T}\right)$ & \bluecheck &\bluecheck \\
\bottomrule
\end{tabular}}\vspace*{-.5em}
\caption{Comparison of Regret Upper Bounds of \framework~against SOTA methods. Here, $T=LH$ and $\tilde{\Delta} = \min\{\Delta_{\min},\Delta_0\}$, and $\kappa \defn \frac{\max_{a,h,\ell}\sigma_{a,h,\ell}^2}{\min_{a,h,\ell}\sigma_{a,h,\ell}^2}$.}\label{tab:comparison}\vspace*{-1.5em}
\end{table}

\section{Related Work} 
This problem lies at the crossroad of different settings of bandit literature. While the linear parametrisation of rewards allows us to leverage the rich literature of linear contextual bandits, the context drifts over episodes are related to non-stationary and heteroskedastic linear bandits. Finally, the baseline policy requires us to bring techniques from bandits with safety constraints and extend them further. Here, we summarise the relevant literature.

Since the seminal OFUL (homoskedastic) strategy is proposed by \cite{abbasi2011improved}, the literature in linear multi-armed bandits has considerably expanded over the last decade \citep{krause2011contextual,valko2014spectral,lattimore2017end,chowdhury2017kernelized,abeille2017exploration,durand2018streaming},
culminating in provably instance-dependent optimal and efficient strategies LinIMED~\citep{bian2024indexed} and LinMED~\citep{balagopalan2024minimum}, respectively inspired from the IMED~\citep{honda2015non,baudry2023fast} and MED (aka Maillard sampling)~\citep{honda2011asymptotically,qin2023kullback,qin2025achieving}  for unstructured bandits. 

But the classical linear contextual bandits do not consider non-stationarity of contexts and also assume the variance across contexts and rounds to be static.
To mitigate non-stationarity, \cite{russac2019weighted}  and \cite{faury2021regret} propose time-discounted regression estimates in (generalized) linear bandits under bounded variation budget. 

Heteroskedasticity in linear bandits has been studied in \citep{he2025variance}, establishing variance-aware lower performance bounds. 
\cite{kirschner2018information,kim2026jointly} assume known local variance to provide worst-case regret bounds in $\mathcal{O}((d\log(T)+\log(1/\delta))\sqrt{T})$, while \cite{bregere2019target} extends to the unknown heteroskedastic variance. 

Batch bandits have received increasing attention over the past decade~\citep{perchet2016batched,gao2019batched,jin2021almost}, where the goal is to perform similarly to the pure sequential bandit problem with a minimal number of adaptively chosen batch sizes. Closer to this setting, \cite{zhang2020inference} study when constant batch sizes are given as a constraint. This problem is also reminiscent of episodic reinforcement learning and combinatorial bandits, although with simpler structure.

Regarding the performance constraint, safety with respect to constraint violation has been studied in \cite{amani2019linear}, \cite{pacchiano2021stochastic}. 
Finally, we mention the somewhat related notion of satisficing objective \cite{feng2025satisficing}, as well as the study of bandits under linear constraints from \cite{das2024learning}, for the different pure exploration objective.
\begin{algorithm}[t!]
\caption{LinMED~\citep{balagopalan2024minimum}}
\label{alg:linmed}
\begin{algorithmic}[1]
    \STATE Initialise $\hat{\theta}_0 = 0$, $V_0 = \lambda I$
    \FOR{$t = 1, 2, \ldots$}
        \STATE Observe arm set $\mathcal{A}_t$; compute $\hat{a}_t = \arg\max_{a'\in\mathcal{A}_t}\langle\hat{\theta}_{t-1}, a'\rangle$
        \STATE Compute gaps $\hat{\Delta}_{a,t} \leftarrow \langle\hat{\theta}_{t-1},\, \hat{a}_t - a\rangle$ for all $a \in \mathcal{A}_t$
        \STATE Compute weights $f_t(a)$ via for all $a \in \mathcal{A}_t$
        \STATE $p_t'(a) \leftarrow g(q_t^{\rm opt}(a))f_t(a)\,/\!\sum_{b\in\mathcal{A}_t} g(q_t^{\rm opt}(b))f_t(b)$
        \STATE $\mathcal{B}_t \leftarrow \{a \in \mathcal{A}_t : \|a\|^2_{V_{t-1}^{-1}} > 1\}$
        \IF{$|\mathcal{B}_t| > 0$}
            \STATE $p_t(a) \leftarrow \tfrac{1}{2}p_t'(a) + \tfrac{1}{2}\,\indicator\{a = B_t\}$, \ $B_t \in \mathcal{B}_t$ arbitrary
        \ELSE
            \STATE $p_t(a) \leftarrow p_t'(a)$
        \ENDIF
        \STATE Sample $A_t \sim p_t$; observe $Y_t$; update $V_t \leftarrow V_{t-1} + A_tA_t^\top$, \ $\hat{\theta}_t \leftarrow V_t^{-1}\sum_{s=1}^t A_s Y_s$
    \ENDFOR
\end{algorithmic}
\end{algorithm}

\textbf{A Primer on MED and LinMED.} 
MED 
(Minimum Empirical Divergence) is a family of randomised bandit algorithms based on the probability matching philosophy~\citep{honda2011asymptotically}. MED-type algorithms pull each arm according to a probability dictating how likely it might be the optimal arm.
Though originally proposed for bounded rewards and unstructured bandits, it has been extended to sub-Gaussian rewards, and is often referred to as Maillard sampling~\citep{bian2024indexed}.
\citet{balagopalan2024minimum} extended the MED strategy to the linear bandits and proposed LinMED. At each round $t$, it identifies the 
empirical best arm $\hat{a}_t = \argmax_{a \in \mathcal{A}_t} \langle \hat{\theta}_{t-1}, a \rangle$ 
and computes exponential weights $f_t(a) = \exp\!\left(-\dfrac{\hat{\Delta}_{a,t}^2}{\beta_{t-1}(\delta_{t-1})\,\|\hat{a}_t - a\|^2_{V_{t-1}^{-1}}}\right)$, where $\hat{\Delta}_{a,t} \coloneqq \langle \hat{\theta}_{t-1},\, \hat{a}_t - a \rangle$ 
is the estimated sub-optimality gap and $\beta_t(\delta_t)$ is a confidence scaling factor. These weights are used to compute an approximate $G$-optimal design 
$q_t^{\mathrm{opt}}$ over $\mathcal{A}_t$, which is then blended with a mass on the 
empirical best arm and a uniform component to form the final sampling distribution $p_t$. 
A saturation check $\mathcal{B}_t = \{a \in \mathcal{A}_t : \|a\|^2_{V_{t-1}^{-1}} > 1\}$ 
forces uniform exploration of under-sampled arms. The pseudocode is given in 
Algorithm~\ref{alg:linmed}. For further details, we refer to~\citep{balagopalan2024minimum}.

\vspace{-.5em}
\section{\framework: Drift Adaptive Minimum Empirical Divergence Algorithm}\label{sec:algorithm}\vspace{-.5em}

Now we are ready to propose our algorithm \framework: drift adaptive minimum empirical divergence for safe linear contextual bandits under customised preference feedback. 
\framework~ extends the MED-type algorithms in this setting. We choose MED-type algorithms for two reasons: (a) it yields tight problem-dependent regret bound for both independent arms~\citep{baudry2023fast} and linear contextual settings~\citep{balagopalan2024minimum}, (b) it allows an experimental design over the actions~\citep{zhu2022contextual,balagopalan2024minimum}, i.e., it controls a non-zero probability of pulling each action for every user that leads to enough data collection for each of the actions and facilitates downstream evaluation of goodness of different actions against the baseline policy.
The main challenges that \framework~resolve in addition to the LinMED algorithm~\citep{balagopalan2024minimum} are adapting to heteroskedastic noise across users with preferences, context drifts across episodes, and constraint violations against the baseline policies. We resolve them in three phases.

\setlength{\textfloatsep}{10pt}
\begin{algorithm}[t!]
\caption{\framework: \textbf{Dri}ft adaptive \textbf{M}inimum \textbf{E}mpirical \textbf{D}ivergence}
\label{alg:episodic-pref-imed-full}

\begin{algorithmic}[1]
\REQUIRE Baseline parameters $\pi_0, \epsilon$, regularization parameter $\lambda$, $S_*$
\REQUIRE Discount over drifts $\gamma_{\text{decay}} \in [0,1]$ and confidence levels $\{\delta_\ell\}$
\REQUIRE {Initial Lagrangian multiplier} $\nu_0 > 0$, {Lagrangian step size schedule} $\{\eta_{h,\ell}=(\ell h)^{-\frac12}\}$

\STATE Initialize $\hat{\theta}_{0,0} = 0$, $V_0 = \lambda I_d$, $\bar{C}_0 = 0$, and $N_0(a) = 0 \; \forall a$

\FOR{episode $\ell = 1$ to $L$}
    \STATE Initialize an episode with $V_{0,\ell}^{\gamma_{\text{decay}}} = \discount^{L-\ell} V_{\ell-1}^{\discount} $, $\hat{\theta}_{0,\ell} = \frac{1}{\gamma_{\text{decay}}}\hat{\theta}_{\ell-1}$, and $\nu_{h,\ell} = \nu_{\ell-1}$
    
    \textcolor{blue}{\textbf{Phase 1: Context observation \& drift detection}}
    \STATE Observe contexts $\{C_{h,\ell}\}_{h=1}^H \sim \contextdist{\ell}$ and preferences $\{\bp_{h}\}_{h=1}^H$
    \STATE For all $a \in \mathcal{A}$, compute per-user features $\phi_{a,h,\ell} = \frac{1}{\sigma_{a,h,\ell}}\Phi(a)^\top \bp_{h}$ and mean feature $\bar{\phi}_{a,\ell} = \frac{1}{H} \sumh \phi_{a,h,\ell}$
        
    \textcolor{blue}{\textbf{Phase 2: Episode-specific baseline \& constraint}}
    \STATE Confidence width on baseline $\sqrt{\beta_{\ell}(\delta_\ell)} = \sqrt{\log\frac{\det V_{\ell-1}^{\gamma_{\text{decay}}}}{\det V_0} + 2\log\frac{1}{\delta_\ell}} + \sqrt{\lambda}S_*$
    \STATE Compute the optimistic baseline $\hat{\mu}_{0,\ell}^+$ with Equation~\eqref{eq:optimistic_baseline}
    
    \STATE Set the constraint threshold $\tau_\ell = (1-\epsilon)\hat{\mu}_{0,\ell}^+$\\
    \textcolor{blue}{\textbf{Phase 3: Within-episode decisions}}
    \FOR{user $h = 1$ to $H$}
        
        \STATE Play an action $A_{h,\ell} \gets$  Apply Algorithm~\ref{alg:med_step}
        \STATE Observe $Y_{h,\ell}$, $\bp_h$, and $\tilde{r}_{h,\ell} = \frac{1}{\sigma_{A_{h,\ell},h,\ell}}\bp_h^\top Y_{h,\ell}$\\
        \textcolor{blue}{\textbf{Update Lagrangian multiplier, Gram matrix, and parameter estimate}}
        \STATE Constraint violation indicator: $\xi_{h,\ell} = \indicator\left\{\langle \hat{\theta}_{h,\ell}, \phi_{A_{h,\ell},h,\ell} \rangle < \tau_\ell\right\}$
        \STATE Update the Lagrangian multiplier 
        $\nu_{h,\ell} \leftarrow \max\left\{0, \nu_{h,\ell} + \eta_{h,\ell}\left(\xi_{h,\ell} - \frac{\epsilon}{1-\epsilon}\right)\right\}$
        
        
        \STATE Update $N_{h+1,\ell}(A_{h,\ell}) = N_{h,\ell}(A_{h,\ell}) + 1$, $V_{h,\ell}^{\gamma_{\text{decay}}} = V_{h-1,\ell}^{\gamma_{\text{decay}}} + \phi_{A_{h,\ell},h,\ell}\phi_{A_{h,\ell},h,\ell}^\top$
        \STATE Compute $\hat{\theta}_{h,\ell} = (V_{h,\ell}^{\gamma_{\text{decay}}})^{-1} \sum_{s=1}^h  {\phi_{A_{s,\ell},s,\ell} r_{A_{s,\ell}}}$
    \ENDFOR
        
    \STATE Store episodic statistics $V_\ell^{\gamma_{\text{decay}}} \gets V_{H,\ell}^{\gamma_{\text{decay}}}, \hat{\theta}_\ell \gets \hat{\theta}_{H,\ell}, \bar{C}_\ell, \mu_{0,\ell}, \nu_{h,\ell}$
    
\ENDFOR

\end{algorithmic}
\end{algorithm}

\begin{algorithm}[t!]
    \caption{Baseline Adaptive Minimum Empirical Divergence (MED) Step}
    \label{alg:med_step}

\begin{algorithmic}[1]
    \REQUIRE Episode $\ell \in [L]$, user id $h\in[H]$, user's context $\left\{\phi_{a,h,\ell}\right\}_{a=1}^{|\cA|}$, $\thetahat_{h,\ell}, V_{h,\ell}^{\gamma_{\rm decay}},\lambda$.
    \REQUIRE Constraint threshold $\tau_\ell$, Lagrangian multiplier $\nu_{h,\ell}, \alpha_{\rm emp}, \alpha_{\rm opt}$.
    \STATE  \textcolor{blue}{\textbf{Compute confidence radius:}} ${\beta_{h,\ell}(\alpha_{h,\ell})}^{1/2}  =  \sqrt{\log\frac{\det V_{h,\ell}^{\gamma_{\rm decay}}}{\det V_0} + 2\log\frac{1}{\alpha_{h,\ell}}} + \sqrt{\lambda} S_*$
    
    \STATE For each arm $a\in\cA$: $\text{LCB}(a,h,\ell) = \langle \hat{\theta}_{h,\ell}, \phi_{a,h,\ell} \rangle - {\beta_{h,\ell}(\alpha_{h,\ell})}^{1/2} \|\phi_{a,h,\ell}\|_{(V_{h,\ell}^{\gamma_{\rm decay}})^{-1}}$
    \STATE Empirical best action: $\hat{a}_{h,\ell} = \argmax_{a\in\cA} \langle \thetahat_{h,\ell}, \phi_{a,h,\ell} \rangle$
    \STATE Empirical gaps: $\hat{\Delta}_{a,h,\ell} = \langle\thetahat_{h,\ell},\,\phi_{\hat{a}_{h,\ell},h,\ell} - \phi_{a,h,\ell}\rangle$ 
    \STATE \textcolor{blue}{\textbf{Compute Constraint-aware pulling probabilities:}} For every $ a \in \mathcal{A}$, compute $\tilde{f}_h(a)$ with Equation~\eqref{eq:design}.
        \STATE Rescale arms:\quad
               $\mathcal{A}^{(h)} = \bigl\{\sqrt{\tilde{f}_h(a)}\cdot\phi_{a,h,\ell}
               \;\big|\; a\in\mathcal{A}\bigr\}$
        \STATE $q_h^{\mathrm{opt}} = \textcolor{red}{\mathrm{ApproxDesign}}\left(\mathcal{A}^{(h)}\right)$ $\gets $ Algorithm~\ref{alg:approx_design}
        \STATE
        $q_h(a) = \alpha_{\mathrm{opt}}\,q_h^{\mathrm{opt}}(a)
                + \alpha_{\mathrm{emp}}\,\mathbf{1}[a=\hat{a}_{h,\ell}]
                + (1-\alpha_{\mathrm{opt}}-\alpha_{\mathrm{emp}})\,\tfrac{1}{|\mathcal{A}|}$
        \STATE Compute final design probabilities:
        $p_h'(a) = {q_h(a)\,\tilde{f}_h(a)}/{\displaystyle\sum_{b}q_h(b)\,\tilde{f}_h(b)}$

        \STATE Check saturation event for each arms: $B_{h,\ell} = \bigl\{a:\|\phi_{a,h,\ell}\|^{2}_{(V_{h,\ell}^{\gamma_{\rm decay}})^{-1}}>1\bigr\}$
        \STATE
        $p_h(a) =
        \begin{cases}
            \tfrac{1}{2}\,p_h'(a) + \tfrac{1}{2}\,\mathbf{1}[a\in B_{h,\ell}]
              & \text{if } B_{h,\ell} \neq \emptyset \\
            p_h'(a) & \text{otherwise}
        \end{cases}$

        \STATE Sample $A_{h,\ell}\sim p_h$
\end{algorithmic}
\end{algorithm}


\textit{Phase 1:} At the beginning of $\ell$-th episode, we observe the contexts $\{C_{h,\ell}\}_{h=1}^H \sim \contextdist{\ell}$ and preferences $\{\bp_h\}_{h=1}^H$ for all the $H$ users. We compute per-user feature vectors $\Phi(a)^\top \bp_h$ and then normalise with their corresponding $\sigma_{a,h,\ell}$ to tackle heteroskedasticity.

\textit{Phase 2:} Since the true reward mean of the baseline policy $\bpi_0$, i.e., $\mu_0 \defn \expect_{a\sim \pi_0}\left[\frac{1}{H}\sumh \tilde{r}_h(a)\right]$ is unknown in the beginning, we construct an optimistic estimate of it at the beginning of each episode $\ell$ using the observations available till that episode. The optimistic estimate is built in two steps.

First, we use the available renormalized rewards and features $\{\tilde{r}_{h,s}, \phi_{A_{h,s},h,s}\}_{s=1,h=1}^{\ell-1,H}$ to create a ridge estimate of the underlying linear parameter $\theta$. Specifically,
\vspace{-.5em}
\begin{align}
\hat{\theta}_{\ell-1} &= \argmin_{\theta}\sum_{s=1}^{\ell-1} \gamma_{\rm decay}^{\ell-1-s} \sum_{h=1}^H  \left( \theta^\top \phi_{A_{h,s},h,s} - \tilde{r}_{{h,s}} \right)^2 + \frac{\lambda}{2}\|\theta\|_2^2 = (V_{\ell-1}^{\gamma_{\text{decay}}})^{-1} \sum_{s,h=1}^{\ell-1,H}  {\phi_{A_{h,s},h,s} \tilde{r}_{{h,s}}}\,,
\end{align}\vspace*{-1.5em}

where $V_{\ell-1}^{\gamma_{\text{decay}}} \defn \sum_{s=1}^{\ell-1} \gamma_{\rm decay}^{\ell-1-s} \sum_{h=1}^H \phi_{A_{h,s},h,s} \phi_{A_{h,s},h,s}^\top + \lambda \gamma_{\rm decay}^{\ell-1} I$ is an episodically discounted design matrix (aka Gram matrix) to tackle episodic context drifts.
We use $\hat{\theta}_{\ell-1}$ to construct an empirical estimate $\hat{\mu}_{0,\ell} \defn \langle \thetahat_{\ell-1}, \sum_a \pi_0(a) \sumh {\phi}_{a,h,\ell} \rangle$ of the reward of the baseline policy $\pi_0$ at the beginning of episode $\ell$. The proposed estimate extends the homoskedastic weighted linear regression~\citep{russac2019weighted,kim2026jointly} for non-stationary linear bandits to this episodic setting with heteroskedastic noise. 

Second, we compute the confidence width around the empirical estimate $\hat{\mu}_{0,\ell}$ using $V_{\ell-1}^{\gamma_{\text{decay}}}$. Then, we inflate the empirical mean estimate $\hat{\mu}_{0}$ using this confidence width and saturation information of arms to get an \textit{optimistic} estimate\vspace*{-.5em}
\begin{align}\label{eq:optimistic_baseline}
    \hat{\mu}_{0,\ell}^+ = \hat{\mu}_{0,\ell} + \sqrt{\beta_{\ell}(\delta_\ell)} \left\|\sum\nolimits_a \pi_0(a) \bar{\phi}_{a,\ell}\right\|_{(V_{\ell-1}^{\gamma_{\text{decay}}})^{-1}}\,,
\end{align}\vspace*{-1em}

where the confidence width ${\beta_{\ell}(\delta_\ell)} \defn \left(\sqrt{\log\frac{\det V_{\ell-1}^{\gamma_{\text{decay}}}}{\det V_0} + 2\log\frac{1}{\delta_\ell}} + \sqrt{\lambda}S_*\right)^2$.
Being optimistic in terms of base policy mean is being \textit{pessimistic about constraint satisficing}. This design resonates with the existing safe bandit literature~\citep{pacchiano2021stochastic, das2024learning}. 

Finally, this yields the constraint threshold $\tau_\ell = (1-\epsilon)\hat{\mu}_{0,\ell}^+$ for the present episode.    

\textit{Phase 3: The baseline adaptive MED step.} In beginning of this phase, we play an arm according to Algorithm~\ref{alg:med_step}. The sampling strategy uses Minimum Empirical Divergence at its core. The novelty lies in the adaption for our episodic setting with non-stationary contexts and baseline. We first compute the empirical best arm and estimate the gaps $\estgap$. We use these estimated gaps to calculate exponential weights over all arms as\vspace*{-.5em}
\begin{align}
        f_h(a)&= \exp\!\left(
        -{\hat{\Delta}_{a,h,\ell}^{2}}/
        \left({\beta_{h,\ell}(\alpha_{h,\ell})\|\phi_{\hat{a}_{h,\ell},h,\ell}-\phi_{a,h,\ell}\|^{2}_{(V_{h,\ell}^{\gamma})^{-1}}}\right)\right)\\
        \tilde{f}_h(a)&= f_h(a)\cdot \exp\!\Bigl(-\nu_{h,\ell} \cdot\max\{0,\,\tau_\ell - \mathrm{LCB}(a,h,\ell)\}\Bigr)\label{eq:design}
\end{align}\vspace*{-2em}

To make the weights constraint-aware, we introduce a Lagrangian dual based penalty with Lagrangian multiplier $\nu_{h,\ell}$. Specifically, we show in Lemma~\ref{lem:nu-bound} this dual parameter grows at a rate $\bigO(\sqrt{\ell h})$. Next, we rescale the arms using normalized features. We introduce a novel baseline-aware {\color{red}ApproxDesign()} algorithm (refer Section~\ref{supp;app_design}) or $C_{\rm opt}$-optimal design that ensures saturation of the true best arm efficiently, i.e.,  $\qquad\left\|a_{h,\ell}^*\right\|_{(V\left(p_h\right)^{\discount})^{-1}}^2 \leq C_{\rm opt}\bigO\left(d \log (d)\right).$

Though the baseline-awareness is adapted in the optimal design via $q_h(a)$. While we compute the final design probability in Line 9, we show in Lemma~\ref{lem:prob_lb}, the denominator is lower bounded by a dual-penalised term. Specifically $\sum_b q_h(b)\tilde{f}_h(b) > \alpha_{\rm opt}e^{-\nu_{h,\ell}}.$ This prevents unsafe arms from saturating and efficiently discards them. Finally, upon checking the saturation status of each arm and construct the augmented arm set as $B_{h,\ell} = \bigl\{a:\|\phi_{a,h,\ell}\|^{2}_{(V_{h,\ell}^{\gamma_{\rm decay}})^{-1}}>1\bigr\}$. Any arm in this set has not been explored enough to be sure of it's mean parameter, thus we pull this arm with a toss of a fair coin. Otherwise we play by the design probabilities. At the end, we observe the feedback signal, normalize the reward we compute. Then we check whether the constraint has been violated by the playing arm and update the Lagrangian multiplier $\nu_{h,\ell}$ accordingly.

\vspace{-.5em}\section{Regret Analysis of \framework}\label{sec:regret}\vspace{-.5em}
In this section, we provide upper bounds on both instance dependent regret, and expected constraint violation for the strategy \framework, along with very high-level proof intuitions. 
\begin{assumption}[Boundedness]\label{ass:gap_upper_bound} 
 1. $\max\limits_{a,h,\ell}\|\phi_{a,h,\ell}\|_2 \leq 1$,  
    2.  $\max\limits_{h}\|{\bf p}_h\|_2 \leq 1$, and 
   3. $\max\limits_{a,h,\ell}\Delta_{a,h,\ell} \leq B$.
\end{assumption}

\textbf{Standard Regret.} We first decompose the original regret in Equation~\ref{eq:regret} into two parts based on the event $\cV_{h,\ell}(A_{h,\ell}) = \left\{{\bf m}_{h,\ell}(A_{h,\ell})<(1-\epsilon)\mathbb{E}_{a\sim \pi_0}[{\bf m}_{h,\ell}(a)] \right\}$ as:
\begin{align}\label{eq:regret_decomp}
    \Reg =\underbrace {\expect \left[\suml\sumh \truegap \indicator\left\{ \overline{\cV_{h,\ell}(A_{h,\ell})}\right\} \right]}_{\defn \Reg^{\rm safe}}+ \underbrace{\expect \left[ \suml\sumh \truegap \indicator\left\{ \cV_{h,\ell}(A_{h,\ell}) \right\}\right]}_{\defn \Reg^{\rm viol}} 
\end{align}

The novel decomposition in Equation~\eqref{eq:regret_decomp} allows to analyze the regret of \framework~ in two stages: 1. bound on sum of suboptimality over time under the event that we do not pull any truly unsafe arm, 2. under the event we pull arms that are unsafe.

\textit{$\Reg^{\rm safe}$.} We formalize the regret upper bound guaranty under safety constraint  below in Theorem~\ref{thm:opt_reg_id}. 

\begin{tcolorbox}[top=1pt,bottom=1pt,left=2pt,right=2pt]
\begin{theorem}[Instance Dependent Regret Upper Bound under Constraint Satisfaction]\label{thm:opt_reg_id}
Under Assumption~\ref{ass:cont_indep},\ref{ass:linear}, and \ref{ass:gap_upper_bound}, Algorithm~\ref{alg:episodic-pref-imed-full} and~\ref{alg:med_step} jointly satisfies --
    \begin{align*}
        \Reg^{\rm safe} = \tilde{\bigO}\left( \kappa\frac{d^2\left(\log^2(LH)+\log\left( \Sigma_{LH}\right)\right)}{\Delta_{\min}} \right)
    \end{align*}
\end{theorem}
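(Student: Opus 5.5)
The plan is to follow the LinMED analysis of \citet{balagopalan2024minimum}, run in the normalized (homoskedastic) feature space given by Assumption~\ref{ass:linear}, and to track the effect of the discounting $\discount$ and of the dual penalty in $\tilde f_h$. Throughout, index time by the pair $(h,\ell)$, so there are $T=LH$ steps.

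\textbf{Step 1: good event.} Define $\cE_{h,\ell}=\{\|\thetahat_{h,\ell}-\theta\|_{\graml{h,\ell}}\le \sqrt{\beta_{h,\ell}(\alpha_{h,\ell})}\}$. Because $\tilde r_{h,\ell}$ has unit variance after normalization, the self-normalized bound of \citet{abbasi2011improved} applies. Its weighted version for discounted Gram matrices \citep{russac2019weighted} handles the $\discount$ factors. Since the mean is stationary (Assumption~\ref{ass:cont_indep}), the weighted regression has no drift bias term, so $\cE_{h,\ell}$ holds with probability at least $1-\alpha_{h,\ell}$. Take $\alpha_{h,\ell}\propto 1/(h\ell)^2$. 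Then the complement contributes $B\sum\alpha_{h,\ell}=\bigO(B)$ to $\Reg^{\rm safe}$.

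\textbf{Step 2: regret on $\cE_{h,\ell}$.} Each step is split into three cases.
\begin{enumerate}
\item \emph{Saturation steps} ($B_{h,\ell}\neq\emptyset$). By the elliptical potential lemma, their number is $\bigO(d\log(1+T/\lambda d))$.
\item \emph{Steps where the empirical best arm $\hat a_{h,\ell}$ is suboptimal.} Under $\cE$ and non-saturation, $\hat\Delta$ is close to $\Delta$.
\item \emph{Steps where a suboptimal arm $a$ is sampled.} Its probability is at most $\tilde f_h(a)/\sum_b q_h(b)\tilde f_h(b)$.
\end{enumerate}
For case 3, the numerator is bounded using $\tilde f_h\le f_h$, since the penalty factor is at most $1$. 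The denominator is bounded via Lemma~\ref{lem:prob_lb}: on safe steps, where the optimal arm is not penalized beyond $e^{-\nu_{h,\ell}}$, it is at least $\alpha_{\rm opt}e^{-\nu_{h,\ell}}$. Lemma~\ref{lem:nu-bound} gives $\nu_{h,\ell}=\bigO(\sqrt{h\ell})$. The key point is that the penalty only enters through $\max\{0,\tau_\ell-\mathrm{LCB}\}$, which vanishes for truly safe arms once their confidence widths are small. Restricted to $\overline{\cV}$, the ratio therefore reduces to the LinMED ratio $f_h(a)/q_h(a^*)$ up to constants.

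With this, the LinMED argument goes through. For each suboptimal $a$, the expected number of pulls on non-saturated good steps is controlled by the sum over steps of $\exp(-\Delta_a^2/(c\,\beta\|\phi_{a^*}-\phi_a\|^2_{V^{-1}}))$. The $C_{\rm opt}$-design guarantee $\|a^*\|^2_{V(p)^{-1}}\le C_{\rm opt}\bigO(d\log d)$ ensures $\|\phi_{a^*}-\phi_a\|_{V^{-1}}$ shrinks like $\sqrt{d\log d/N}$. Summing the gaps then gives a contribution of order $d\,\beta_T/\Delta_{\min}$.

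\textbf{Step 3: bounding $\beta$ and recovering $\kappa$.} Compute $\log\det\graml{h,\ell}$ under discounting. Expressed in normalized units, it is at most $d\log(1+\Sigma_{LH}/(\lambda d\,\sigma_{\min}^2))$, which gives $\beta_T=\bigO(d\log(LH)+d\log\Sigma_{LH})$. The gap is measured in raw reward units, whereas the normalized gap is $\Delta/\sigma$. Converting the $1/\Delta^2$-type denominators between the two scales costs a factor $\sigma_{\max}^2/\sigma_{\min}^2=\kappa$. Multiplying through and adding the extra $\log(LH)$ factor from the peeling over steps gives $\tilde\bigO\left(\kappa d^2(\log^2(LH)+\log\Sigma_{LH})/\Delta_{\min}\right)$.

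\textbf{Main obstacle.} The hardest step is the denominator bound in case 3. The factor $e^{-\nu_{h,\ell}}$ must not blow up, which is why it should be applied only on $\overline{\cV}$. I would first try to show that once $a^*$ is non-saturated and $\cE$ holds, $\mathrm{LCB}(a^*)\ge\tau_\ell$ whenever $a^*$ is feasible. This would make its penalty exactly $1$ and remove $\nu$ from the bound entirely. Failing that, I would absorb the $e^{\bigO(\sqrt{h\ell})}$ factor by counting only steps after $\nu$ stabilizes. A secondary difficulty is the shift $\thetahat_{0,\ell}=\thetahat_{\ell-1}/\discount$ at each episode boundary. It is handled by noting that the boundaries only rescale $V$, so the confidence set carries over between episodes.
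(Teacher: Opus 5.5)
Your skeleton matches the paper's. You run the LinMED decomposition of \citet{balagopalan2024minimum} on the normalized features, with a self-normalized good event, an elliptical-potential count for saturated rounds, and the design bound for the optimal arm. Then $\log\det$ produces $\Sigma_{LH}$ and a $\sigma_{\max}/\sigma_{\min}$ conversion produces $\kappa$. The paper uses the safe event differently: it caps the played arm's gap and starts the peeling at $\Dpirs+\epsilon\mu_0$ instead of $B$.

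\textbf{The gap.} It is the step you flag yourself, and your proposed fix does not work.
\begin{itemize}
\item The event $\overline{\cV_{h,\ell}(A_{h,\ell})}$ is about the sampled arm. The normalizer $\sum_b q_h(b)\tilde f_h(b)$ is fixed before sampling and carries the penalties of $\hat a_{h,\ell}$ and $\bestarm$, so restricting to safe rounds leaves it untouched.
\item Non-saturation plus $\cG_1$ does not give $\mathrm{LCB}(\bestarm,h,\ell)\ge\tau_\ell$. You only know $\mathrm{LCB}(\bestarm,h,\ell)\ge\theta^\top\feature{\bestarm}-2\beta_{h,\ell}(\alpha_{h,\ell})^{1/2}\|\feature{\bestarm}\|_{(\gram{h}{\ell})^{-1}}$ and $\tau_\ell\ge(1-\epsilon)\mu_0$. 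You would therefore need $\|\feature{\bestarm}\|^2_{(\gram{h}{\ell})^{-1}}$ below a constant times $(\theta^\top\feature{\bestarm}-(1-\epsilon)\mu_0)^2/\beta_{h,\ell}(\alpha_{h,\ell})$. That is far stronger than $\le 1$, because $\beta_{h,\ell}$ grows like $d\log(LH)$.
\item Such rounds are not counted by Lemma~\ref{lemma:epc_count}, since $\bestarm$ is not the played arm.
\item The LinMED mechanism that forces exploration of $\bestarm$ is the design bound on $\|\feature{\bestarm}\|^2_{V(p_h)^{-1}}$. It scales with $1/\tilde f_h(\bestarm)$, so it is inflated by exactly $\exp(\nu_{h,\ell}\max\{0,\tau_\ell-\mathrm{LCB}(\bestarm,h,\ell)\})$. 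The reasoning is circular: a penalized optimal arm is sampled less, so its LCB need not rise. Every component of $p_h'$, including the uniform one, is multiplied by $\tilde f_h$.
\item Your fallback has no support. The only control on the dual variable, Lemma~\ref{lem:nu-bound}, allows $\nu_{h,\ell}$ of order $\sqrt{LH}$. An unabsorbed $e^{\nu_{h,\ell}}$ could then be $e^{\Theta(\sqrt{LH})}$, and nothing shows $\nu_{h,\ell}$ stabilizes.
\end{itemize}
The paper does not close this point either. It states Lemma~\ref{lem:prob_lb} with the factor $e^{-\nu_{h,\ell}}$, then uses LinMED's bounds with $1/\alpha_{\rm emp}$ and $H_{\max}=e$, i.e.\ it treats the penalty as $\bigO(1)$. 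A complete argument needs a new ingredient. One option is a high-probability $\bigO(1)$ bound on $\nu_{h,\ell}$ via the drift inequality of Lemma~\ref{lem:lyapunov}. Another is exploration of $\bestarm$ that bypasses $\tilde f_h$.

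\textbf{Two further steps.} You take these the same way the paper does, and both need hypotheses the statement does not impose.
\begin{itemize}
\item Lemma~\ref{lemma:epc_count} needs a Gram matrix that only grows. With $\discount<1$ it is shrunk at every episode boundary, and the determinant argument loses $d\log(1/\discount)$ per episode, an extra term of order $dL\log(1/\discount)$. Your $\bigO(d\log(1+T/\lambda d))$ count therefore requires $\discount=1$ or $1-\discount\lesssim 1/L$.
\item The identification ``normalized gap $=\Delta/\sigma$'' behind your $\kappa$ step holds only if $\sigma_{a,h,\ell}$ does not depend on $a$. Otherwise $\theta^\top(\phi_{a^\star_h,h,\ell}-\phi_{a,h,\ell})=m_{a^\star_h}^\top\bp_h/\sigma_{a^\star_h,h,\ell}-m_a^\top\bp_h/\sigma_{a,h,\ell}$ can be negative. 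The arm on which MED concentrates, $\argmax_a m_a^\top\bp_h/\sigma_{a,h,\ell}$, then need not be $a^\star_h$.
\end{itemize}
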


\end{tcolorbox}
\textit{Proof Concept.} The main challenge while analysing regret in our setting lies in elegantly incorporate the constraint violation event, and also to capture the effect of the episodic drift in context. To prove upper bound on the regret under constraint satisficing, we adapt the proof structure of~\cite{balagopalan2024minimum} to our episodic interaction setting with known, but shifting reward noise variance. The adaptation comes in two folds: 1. while analysing $\Reg^{\rm safe}$ the first challenge is handled by controlling the true gap $\truegap$ using peeling technique around $(\Dpi + \epsilon \mu_{\bpi_0})$, instead of the maximum gap $B$, 2. we realise the effect drifting context comes implicitly through the non-stationary noise variance involved in the feedback signal $Y_{A_{h,\ell},h,\ell}$. Thus, the log-dependency on sum of variances in Theorem~\ref{thm:opt_reg_id} comes naturally while we try to bound the saturation event of feature of $A_{h,\ell}$. i.e., $\expect\left[ \|\phi_{A_{h,\ell}}\|_{(\gram{h}{\ell})^{-1}} > 1\right]$. We prove this by using~\citep[Lemma 4]{abbasi2011improved} . For more details, refer to Section~\ref{supp:regret_ub} in the supplementary materials.   

\textit{$\Reg^{\rm viol}$.} This part of the whole regret is about characterising the suboptimality cost of the arms that are unsafe in true sense. We formally state the upper bound for this part below:

\begin{tcolorbox}[top=1pt,bottom=1pt,left=2pt,right=2pt]
\begin{theorem}[Instance Dependent Regret Upper Bound under Constraint Violation]\label{thm:reg_const_id}
Under Assumption~\ref{ass:cont_indep},\ref{ass:linear}, and \ref{ass:gap_upper_bound}, Algorithm~\ref{alg:episodic-pref-imed-full} and~\ref{alg:med_step} jointly satisfies --
    \begin{align*}
        \Reg^{\rm viol} =\tilde{\bigO}\left( \sigma_{\max}^2\frac{B-\Dpi}{(\Dpi+\epsilon\mu_0)^2} d^2\left(\log^2(LH)+\log\left( \Sigma_{LH}\right)\right)\right)
    \end{align*}
    
\end{theorem}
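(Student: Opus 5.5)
To bound $\Reg^{\rm viol}$, I would first use the structure of the violation event to control the gap. On $\cV_{h,\ell}(A_{h,\ell})$ the pulled arm satisfies ${\bf m}_{h,\ell}(A_{h,\ell})<(1-\epsilon)\mathbb{E}_{a\sim\pi_0}[{\bf m}_{h,\ell}(a)]$. Writing $\mu_0$ for the baseline mean, this means $\truegap \ge {\bf m}^\star_{h,\ell}-(1-\epsilon)\mu_0 = \Dpi+\epsilon\mu_0$, where $\Dpi$ is the gap of the baseline. Assumption~\ref{ass:gap_upper_bound} also gives $\truegap\le B$. So every truly unsafe pull has a gap in the window $[\Dpi+\epsilon\mu_0,\,B]$. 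This motivates the bound
\[
\Reg^{\rm viol}\le \sum_{\ell,h}\expect\bigl[\truegap\,\indicator\{\truegap\ge \Dpi+\epsilon\mu_0\}\bigr].
\]
I would then reuse the decomposition of Theorem~\ref{thm:opt_reg_id} with $\Delta_{\min}$ replaced by the effective margin $\Dpi+\epsilon\mu_0$. Following \cite{balagopalan2024minimum}, I would split each step according to three events: (i) the saturation event $B_{h,\ell}\neq\emptyset$; (ii) the bad-estimate event, where $\theta$ leaves the confidence ellipsoid of radius $\beta_{h,\ell}(\alpha_{h,\ell})^{1/2}$; and (iii) the good event, where $\hat{a}_{h,\ell}$ is near-optimal and the MED weights control the pull probabilities.

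Event (i) is handled as in Theorem~\ref{thm:opt_reg_id} via the elliptical potential lemma~\citep[Lemma 4]{abbasi2011improved} applied to the discounted Gram matrix. This gives $\bigO(d\log(\Sigma_{LH}/\lambda d))$ saturated pulls, and each costs at most $B$. Event (ii) contributes $\sum_{h,\ell}\alpha_{h,\ell}B=\bigO(B\log(LH))$ once we choose $\alpha_{h,\ell}\propto 1/(\ell h)$.

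For the good event (iii), I would peel the gaps dyadically over $[\Dpi+\epsilon\mu_0,B]$. On the good event, $\tilde f_h(a)\le f_h(a)\le \exp\bigl(-\truegap^2/(4\beta\|\phi_{a^*}-\phi_a\|^2_{V^{-1}})\bigr)$, and the denominator lower bound of Lemma~\ref{lem:prob_lb} gives $p_h(a)\lesssim e^{\nu_{h,\ell}}\tilde f_h(a)/\alpha_{\rm opt}$. The probability-matching argument of LinMED then shows that the expected number of pulls at gap level $\Delta$ is $\bigO(\beta d\log(\cdot)/\Delta^2)$. Each such pull costs at most $B-\Dpi$ beyond the baseline-level regret, which I would charge to $\Reg^{\rm safe}$-type terms. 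Summing over the peeled levels, the smallest level dominates and yields $\frac{B-\Dpi}{(\Dpi+\epsilon\mu_0)^2}\,\beta\,d$. With $\beta=\bigO(d\log(LH)+\log\Sigma_{LH})$ this gives the stated $d^2(\log^2(LH)+\log\Sigma_{LH})$.

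The factor $\sigma^2_{\max}$ enters because the violation event is defined on the unnormalized means ${\bf m}$. The algorithm, however, works with normalized features $\phi=\Phi^\top\bp/\sigma$. Converting a gap $\Delta$ in original units into normalized units divides it by at most $\sigma_{\max}$, and squaring this in $1/\Delta^2$ produces $\sigma^2_{\max}$.

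The main obstacle is that the exponential penalty $e^{\nu_{h,\ell}}$ from the dual denominator could inflate the pull probabilities. My first attempt would be to use Lemma~\ref{lem:nu-bound}, which gives $\nu_{h,\ell}=\bigO(\sqrt{\ell h})$. I would then observe that for truly unsafe arms the pessimistic threshold $\tau_\ell\ge(1-\epsilon)\mu_0$ (valid on the good event, since $\hat\mu^+_{0,\ell}$ is optimistic) forces $\tau_\ell-\mathrm{LCB}\ge \Dpi+\epsilon\mu_0-\text{width}>0$. The penalty $e^{-\nu(\tau_\ell-\mathrm{LCB})}$ therefore multiplies both the numerator and the denominator, and it only shrinks unsafe arms relative to safe ones, so the ratio remains controlled up to constants absorbed in $\tilde{\bigO}$. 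If this cancellation fails, I would fall back to restricting the analysis to steps where $\nu_{h,\ell}$ is bounded by a logarithmic term, and charge the remaining steps to the logarithmic factors.
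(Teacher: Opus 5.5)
You follow the paper's route: a violating pull has $\truegap\ge\Dpi+\epsilon\mu_0$, this margin replaces $\Delta_{\min}$ in the LinMED decomposition, and $\sigma_{\max}^2$ comes from $\Dpirs\ge\Dpi/\sigma_{\max}$. The genuine gap is the step you yourself flagged.

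\textbf{The penalty step.} Your cancellation argument does not hold as stated. The penalty compares a pessimistic $\mathrm{LCB}$ with the optimistic threshold $\tau_\ell$, so it is not tied to true safety. It can penalize $\hat a_{h,\ell}$ (whose $\alpha_{\rm emp}$ mass is what Lemma~\ref{lem:prob_lb} uses) and $a^*_{h,\ell}$ more than the unsafe arm you are bounding.

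For the denominator, the comparison that does work is with $b^\dagger\defn\argmax_b\mathrm{LCB}(b,h,\ell)$:
\begin{itemize}
\item The penalty is monotone in the LCB, so $\tilde f_h(a)\le f_h(a)\,e^{-\nu_{h,\ell}(\tau_\ell-\mathrm{LCB}(b^\dagger,h,\ell))_+}$ for every $a$.
\item $\mathrm{LCB}(b^\dagger,h,\ell)\ge\mathrm{LCB}(\hat a_{h,\ell},h,\ell)$ plus the reverse triangle inequality gives $\hat\Delta_{b^\dagger,h,\ell}\le\beta_{h,\ell}(\alpha_{h,\ell})^{1/2}\|\phi_{\hat a_{h,\ell},h,\ell}-\phi_{b^\dagger,h,\ell}\|_{(\gram{h}{\ell})^{-1}}$, i.e.\ $f_h(b^\dagger)\ge e^{-1}$.
\item The uniform part of $q_h$ (assuming $\alpha_{\rm opt}+\alpha_{\rm emp}<1$) then yields $p_h'(a)\le\frac{e|\cA|}{1-\alpha_{\rm opt}-\alpha_{\rm emp}}f_h(a)$, free of $\nu_{h,\ell}$.
\end{itemize}

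What does not cancel is the penalty inside the optimal-design guarantee that the $\overline{\cJ_{h,\ell}}$ term relies on, because the design is computed on arms rescaled by $\sqrt{\tilde f_h}$. With $V(p_h)=\sum_a p_h(a)\phi_{a,h,\ell}\phi_{a,h,\ell}^\top$, the same computation gives
\[
\|\phi_{a^*_{h,\ell},h,\ell}\|^2_{V(p_h)^{-1}}\le\frac{2C_{\rm opt}d\log d}{\alpha_{\rm opt}f_h(a^*_{h,\ell})}\exp\Bigl(\nu_{h,\ell}\bigl[(\tau_\ell-\mathrm{LCB}(a^*_{h,\ell},h,\ell))_+-(\tau_\ell-\mathrm{LCB}(b^\dagger,h,\ell))_+\bigr]\Bigr).
\]
The exponent is positive whenever $\mathrm{LCB}(a^*_{h,\ell},h,\ell)<\min\{\tau_\ell,\mathrm{LCB}(b^\dagger,h,\ell)\}$, i.e.\ while the best arm is less certified than some other arm. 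Lemma~\ref{lem:nu-bound} only gives $\nu_{h,\ell}=\bigO(\sqrt{\ell h})$, so this factor is uncontrolled.

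Your fallback does not rescue it: even $\nu_{h,\ell}$ of order $\log(LH)$ makes the factor polynomial in $LH$. You would need $\nu_{h,\ell}(\tau_\ell-\mathrm{LCB}(a^*_{h,\ell},h,\ell))_+=\bigO(1)$, for instance via a drift bound on the dual, a proof that $a^*_{h,\ell}$ is certified outside a countable set of rounds, or a capped penalty. The paper does not supply this either. It states Lemma~\ref{lem:prob_lb} with $e^{-\nu_{h,\ell}}$ and the design lemma with no penalty factor. It then imports LinMED's constants ($1/\alpha_{\rm emp}$, $1/\alpha_{\rm opt}$, $H_{\max}=e$) on the strength of the remark that $\alpha_{\rm emp},\alpha_{\rm opt}$ are ``scaled by $\exp(-\nu_{h,\ell})$''.

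\textbf{Smaller points relative to the paper.}
\begin{itemize}
\item \emph{Peeling.} The paper deliberately drops peeling for this term, because the violation event already bounds every gap from below. With a flat charge of $B-\Dpi$ per pull, your dyadic levels are redundant.
\item \emph{Baseline-level part.} The $\Dpi$ part of a violating pull cannot be charged to $\Reg^{\rm safe}$, which sums only over non-violating rounds. The paper writes $\truegaprs=\Dpirs+(\truegaprs-\Dpirs)$ and bounds $\Dpirs A_1$, where $A_1$ is the expected violation count, by a separate counting argument (the one behind Theorem~\ref{thm:cont_viol}). Reusing your own pull count there would put $B$ instead of $B-\Dpi$ in the prefactor.
\item \emph{Your bound on $f_h(a)$.} Writing it via $\Delta_{a,h,\ell}$ and $\|\phi_{a^*}-\phi_a\|$ silently trades $\hat a_{h,\ell}$ for $a^*_{h,\ell}$ and $\hat\Delta$ for $\Delta$. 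That is exactly what the paper's $\cU$/$\cF$/$\cJ$ split is for, and the $\log\Sigma_{LH}$ term comes from the $\overline{\cJ_{h,\ell}}$ case through the design bound above.
\end{itemize}

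\textbf{Two steps you share with the paper that also need care.}
\begin{itemize}
\item The elliptical-potential (count) lemmas are for undiscounted Gram matrices. With $\gamma_{\rm decay}<1$ one has $\gram{h}{\ell}\preceq(\lambda+\frac{2H}{1-\gamma_{\rm decay}})I$, so rounds with $\|\phi_{A_{h,\ell},h,\ell}\|^2_{(\gram{h}{\ell})^{-1}}>\varepsilon_2$ stop being rare once $\varepsilon_2$ is small. Take $\gamma_{\rm decay}=1$, since the mean is stationary.
\item The conversion $\truegaprs\ge\truegap/\sigma_{\max}$ (the paper's $\truegap=\sigma_{A_{h,\ell},h,\ell}\truegaprs$) is only guaranteed when $\sigma_{a,h,\ell}$ does not depend on $a$. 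The normalized features make the algorithm target $\argmax_a{\bf m}_{h,\ell}(a)/\sigma_{a,h,\ell}$, which can differ from $a^*_{h,\ell}$.
\end{itemize}
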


\end{tcolorbox}

\textit{Proof Concept.} Existing literature~\citep{bian2024indexed,balagopalan2024minimum} use peeling technique to leverage a lower bound on true gap that in turn, helps upper bounding the pulling probability of arm $A_{h,\ell}$. From the definition of the event $\cV_{h,\ell}(\alpha_{h,\ell)}$, we immediately show $\truegap > \Dpi+\epsilon\mu_{\bpi_0}$. Thus, 1. we do not need additional peeling over the value of $\truegap$, 2. we again leverage this natural lower bound while handling the conditioning event on concentration of $\thetahat_{h,\ell}$ around true parameter $\theta$. For the detailed proof, refer to Section~\ref{supp:regret_ub}.

\textit{Final Regret.} We combine $\Reg^{\rm safe}$ and $\Reg^{\rm viol}$ to state the final regret upper bound of \framework:

\begin{tcolorbox}[top=1pt,bottom=1pt,left=2pt,right=2pt]
\begin{theorem}[Instance Dependent Regret Upper Bound of~\framework]\label{thm:mainmed}
Let us denote $ \frac{1}{\tilde{\Delta}} \defn \max\left\{ \frac{1}{\Delta_{\min}},\frac{B-\Dpi}{(\Dpi+\epsilon\mu_0)^2}\right\}$. Under Assumption~\ref{ass:cont_indep},\ref{ass:linear}, and \ref{ass:gap_upper_bound}, \framework~ exhibits
    \begin{align*}
        \Reg = \tilde{\bigO}\left( \kappa \frac{1}{\tilde{\Delta}}d^2\left(\log^2(LH)+\log\left(\Sigma_{LH}\right)\right)\right)\,.
    \end{align*}
\end{theorem}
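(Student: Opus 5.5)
The plan is to combine Theorem~\ref{thm:opt_reg_id} and Theorem~\ref{thm:reg_const_id} through the decomposition in Equation~\eqref{eq:regret_decomp}. Since $\cV_{h,\ell}(A_{h,\ell})$ and its complement partition the sample space, we have the exact identity $\Reg = \Reg^{\rm safe} + \Reg^{\rm viol}$. Both summands have already been bounded by the earlier results, under exactly the assumptions of the present statement (Assumptions~\ref{ass:cont_indep}, \ref{ass:linear}, \ref{ass:gap_upper_bound}).

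First, I bound $\Reg^{\rm safe}$ by Theorem~\ref{thm:opt_reg_id}. This gives $\kappa\, d^2 (\log^2(LH)+\log\Sigma_{LH})/\Delta_{\min}$ up to logarithmic factors. By the definition of $\tilde\Delta$, we have $1/\Delta_{\min}\le 1/\tilde\Delta$, so this term is $\tilde\bigO(\kappa d^2(\log^2(LH)+\log\Sigma_{LH})/\tilde\Delta)$.

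Second, I bound $\Reg^{\rm viol}$ by Theorem~\ref{thm:reg_const_id}. This gives $\sigma_{\max}^2 \frac{B-\Dpi}{(\Dpi+\epsilon\mu_0)^2} d^2(\log^2(LH)+\log\Sigma_{LH})$, and the middle factor is at most $1/\tilde\Delta$ by definition.

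The one step needing care is converting $\sigma_{\max}^2$ into $\kappa$. The features $\phi_{a,h,\ell}$ are normalized by $\sigma_{a,h,\ell}$, so the relevant ratio between the unnormalized reward scale and the normalized gaps is $\max\sigma^2/\min\sigma^2$. I would therefore reread the proof of Theorem~\ref{thm:reg_const_id} to check that the $\sigma_{\max}^2$ there arises as $\sigma_{\max}^2/\sigma_{\min}^2=\kappa$ when gaps are expressed in the same normalized units as in Theorem~\ref{thm:opt_reg_id}. If instead the gaps there are in unnormalized units, I would rescale: a normalized gap equals a true gap divided by a standard deviation lying in $[\sigma_{\min},\sigma_{\max}]$. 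This rescaling yields a factor $\sigma_{\max}^2/\sigma_{\min}^2=\kappa$ once $(\Dpi+\epsilon\mu_0)^2$ is measured on the normalized scale. I expect this scale-matching to be the main obstacle; everything else is routine.

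Adding the two bounds then gives at most $2\max\{\kappa,\sigma_{\max}^2/\sigma_{\min}^2\}\, d^2(\log^2(LH)+\log\Sigma_{LH})/\tilde\Delta$. Absorbing constants into $\tilde\bigO$ yields the claimed bound.
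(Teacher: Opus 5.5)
Your decomposition is exactly the paper's proof. The paper establishes this theorem only by writing $\Reg=\Reg^{\rm safe}+\Reg^{\rm viol}$ and adding the bounds of Theorems~\ref{thm:opt_reg_id} and~\ref{thm:reg_const_id}, with each gap factor bounded by $1/\tilde\Delta$.

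The one step that does not go through as you describe is the one you flagged. The statement defines $\tilde\Delta$ with the same $\Dpi$, $B$ and $\epsilon\mu_0$ that appear in Theorem~\ref{thm:reg_const_id}. So the ratio between that theorem's bound and the target term $\kappa(B-\Dpi)/(\Dpi+\epsilon\mu_0)^2$ is exactly $\sigma_{\max}^2/\kappa=\sigma_{\min}^2$. No rescaling removes this factor:
\begin{itemize}
\item Re-expressing $(\Dpi+\epsilon\mu_0)^2$ in normalized units (dividing by $\sigma_{\min}^2$) does produce a $\kappa$, but for a different $\tilde\Delta$ than the one in the statement.
\item Rereading the appendix will not turn $\sigma_{\max}^2$ into a ratio either. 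There it comes from the conversions $\truegap\le\sigma_{\max}\truegaprs$ and $\Dpirs\ge\Dpi/\sigma_{\max}$, so it is an absolute noise scale with no $\sigma_{\min}$ in the denominator.
\end{itemize}
A unit check confirms this. With gaps in reward units, $\sigma_{\max}^2(B-\Dpi)/(\Dpi+\epsilon\mu_0)^2$ has the units of a reward, while $\kappa/\tilde\Delta$ has units of inverse reward. The target bound is therefore only meaningful once a noise scale has been fixed.

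What the combination actually gives is $\Reg=\tilde{\bigO}\bigl(\max\{\kappa,\sigma_{\max}^2\}\,d^2(\log^2(LH)+\log\Sigma_{LH})/\tilde\Delta\bigr)$. This becomes the stated bound under the normalization $\sigma_{\min}\le 1$, or if $\sigma_{\min}$ is treated as an absolute constant hidden in $\tilde{\bigO}$. The paper performs exactly this absorption without comment. Your final expression $2\max\{\kappa,\sigma_{\max}^2/\sigma_{\min}^2\}$ is simply $2\kappa$, so it hides the assumption rather than recording it. State $\sigma_{\min}\le 1$ explicitly, or keep $\max\{\kappa,\sigma_{\max}^2\}$ in the bound; with that, your argument is complete given the two earlier theorems.
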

\end{tcolorbox}

\textit{Discussion.} 1. While \citep{balagopalan2024minimum} achieves a regret of $\tilde{\bigO}\left( \kappa \frac{1}{{\Delta}_{\min}}d^2\log^2(LH)\right)$, our bound also depends on the constraint gap $\Delta_0$ due to the baseline policy and the condition number of the variances over the episodes due to context drifts. 2. While the safe contextual bandit literature provides a minimax bound on the regret under the assumption of a safe action~\cite{amani2019linear,pacchiano2021stochastic}, we remove the assumption due to access to a baseline policy while achieving a problem-dependent regret upper bound as well as a minimax regret bound of same order. 3. On the other hand, if we consider the implication of our results for heteroskedastic linear bandits, we observe that we derive a problem-dependent regret bound under heteroskedasticity while the present literature focuses on minimax regret~\citep{kirschner2018information,kim2026jointly}.


\textit{Constraint Violation.} It is imperative for a safe bandit algorithm~\citep{amani2019linear,pacchiano2021stochastic} to have an upper bound on expected number of true constraint violations (defined in Equation~\eqref{eq:constraints} for the whole interaction. For \framework, we state this guaranty formally below:

\begin{tcolorbox}[top=1pt,bottom=1pt,left=2pt,right=2pt]
\begin{theorem}[Expected Constraint Violation of~\framework]\label{thm:cont_viol}

Under Assumption~\ref{ass:cont_indep},\ref{ass:linear}, and \ref{ass:gap_upper_bound}, \framework~ suffers expected number of constraint violations $\mathrm{Violation}(\bpi_0) = \tilde{\bigO}\left( d\right)$.
\end{theorem}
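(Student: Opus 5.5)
The plan is to bound the expected number of true violations by splitting each indicator on the good concentration event, and to show that on this event the violating arms are rarely sampled. Let $\cE_{h,\ell}$ be the event that $|\langle \hat\theta_{h,\ell}-\theta,\phi\rangle|\le \beta_{h,\ell}(\alpha_{h,\ell})^{1/2}\|\phi\|_{(\gram{h}{\ell})^{-1}}$ for all arms, and that the episode-level estimate satisfies the analogous bound with $\beta_\ell(\delta_\ell)$. This comes from the self-normalised bound of \citep{abbasi2011improved} applied to the discounted, variance-normalised regression; the normalisation makes the noise $1$-sub-Gaussian. Choosing $\alpha_{h,\ell},\delta_\ell\propto 1/(\ell h)^2$, the failure events contribute only $\sum_{\ell,h}\Prob(\overline{\cE_{h,\ell}})=\bigO(1)$ to $\mathrm{Violation}(\bpi_0)$.

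On $\cE_{h,\ell}$ we have $\hat\mu^+_{0,\ell}\ge\mu_0$, so $\tau_\ell\ge(1-\epsilon)\mu_0$. For any truly unsafe arm $a$, i.e. one in $\cV_{h,\ell}(a)$, we also have $\mathrm{LCB}(a,h,\ell)\le \langle\theta,\phi_{a,h,\ell}\rangle<(1-\epsilon)\mu_0\le\tau_\ell$. I would split these pulls into two cases.

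\emph{Saturated rounds.} These are rounds where $B_{h,\ell}\neq\emptyset$, or where $\|\phi_{A_{h,\ell},h,\ell}\|^2_{(\gram{h}{\ell})^{-1}}>1$. Their expected number is controlled by the elliptical potential lemma \citep[Lemma 11]{abbasi2011improved}, adapted to the discount exactly as in the saturation step of Theorem~\ref{thm:opt_reg_id}. This gives $\bigO(d\log(1+\Sigma_{LH}/(\lambda d)))=\tilde\bigO(d)$, with the $\kappa$ factor absorbed into the logarithm since features are bounded by Assumption~\ref{ass:gap_upper_bound}.

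\emph{Non-saturated rounds.} Here the true violation margin satisfies $\tau_\ell-\mathrm{LCB}(a,h,\ell)\ge \Dpi+\epsilon\mu_0-2\beta^{1/2}\|\phi_a\|_{(\gram{h}{\ell})^{-1}}$; this is the lower bound already used in the proof of Theorem~\ref{thm:reg_const_id}. The weight $\tilde f_h(a)$ therefore carries the factor $\exp(-\nu_{h,\ell}\,(\tau_\ell-\mathrm{LCB}))$. Combining this with the denominator bound $\sum_b q_h(b)\tilde f_h(b)>\alpha_{\rm opt}e^{-\nu_{h,\ell}}$ of Lemma~\ref{lem:prob_lb}, $p_h(a)$ is at most $\alpha_{\rm opt}^{-1}$ times an exponentially small factor. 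That factor is of the MED type $\exp(-\hat\Delta^2/(\beta\|\cdot\|^2))$ times the dual penalty.

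I would then peel on $\|\phi_a\|^2_{(\gram{h}{\ell})^{-1}}\in(2^{-k-1},2^{-k}]$. In each shell the probability is at most $\exp(-c(\Dpi+\epsilon\mu_0)^2 2^{k}/\beta)$, while the expected number of pulls landing in shell $k$ is at most $\bigO(2^{k} d\log(\cdot))$ by the potential lemma. Summing over $k$, the first few shells, where $2^k\lesssim\beta/(\Dpi+\epsilon\mu_0)^2$, give at most $\tilde\bigO(d)$ up to logarithmic factors, and the higher shells give a geometric tail.

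The main obstacle is this last summation. Without care it yields $d\beta/(\Dpi+\epsilon\mu_0)^2$, and $\beta$ itself contains $d\log(LH)$, which would give $\tilde\bigO(d^2)$ rather than $\tilde\bigO(d)$. My first attempt at avoiding this is to use the dual penalty rather than the MED weight. By Lemma~\ref{lem:nu-bound}, $\nu_{h,\ell}$ grows like $\sqrt{\ell h}$, so once $\ell h\gtrsim(\Dpi+\epsilon\mu_0)^{-2}\log^2(LH)$ the penalty factor $e^{-\nu_{h,\ell}(\Dpi+\epsilon\mu_0)/2}$ is summable to $\bigO(1)$. The remaining unsafe pulls then occur only on saturated rounds, which are $\tilde\bigO(d)$, plus a burn-in whose length is independent of $d$ and is treated as instance-dependent constants hidden in $\tilde\bigO$. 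A fallback is to fold the $\beta$ factor into the $\tilde\bigO$ as a logarithmic term in the regime where $d$ is treated as fixed in $\beta$ through $\log\det$.
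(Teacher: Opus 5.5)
Your proposal has a genuine gap at its decisive step. Using the dual penalty to suppress unsafe pulls does not go through, so the argument stops at the $\tilde{\bigO}(d\beta/(\Dpi+\epsilon\mu_0)^2)=\tilde{\bigO}(d^2)$ count you already flagged. Three things fail.

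\emph{(i) No lower bound on the dual variable.} Lemma~\ref{lem:nu-bound} is only an \emph{upper} bound, $\nu_{h,\ell}\le 2\eta\sqrt{\ell h}$. The update $\nu\leftarrow\max\{0,\nu+\eta_{h,\ell}(\xi_{h,\ell}-\rho)\}$ with $\rho=\epsilon/(1-\epsilon)$ \emph{decreases} $\nu$ whenever the played arm is estimated above $\tau_\ell$. So in exactly the regime you need, where the algorithm mostly plays estimated-safe arms, $\nu_{h,\ell}$ is driven to $0$. When $\epsilon=0$ it grows only on the rounds with $\xi_{h,\ell}=1$, which are rare if the algorithm works.

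\emph{(ii) The margin is not bounded below.} For a truly unsafe arm you only have $\tau_\ell-\mathrm{LCB}(a,h,\ell)\ge(1-\epsilon)\mu_0-\langle\theta,\phi_{a,h,\ell}\rangle>0$ on the good event, and this can be arbitrarily small. The quantity $\Dpi+\epsilon\mu_0$ lower-bounds the \emph{gap} $\Delta_{a,h,\ell}$ to the optimal arm; that is what Lemma~\ref{lem:gap-lb} and the proof of Theorem~\ref{thm:reg_const_id} use. It does not bound the distance below the threshold, so the factor $e^{-\nu_{h,\ell}(\Dpi+\epsilon\mu_0)/2}$ is unjustified.

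\emph{(iii) The denominator cancels the penalty.} Even with a margin $m_a$, dividing by the denominator bound $\alpha_{\rm opt}e^{-\nu_{h,\ell}}$ that you cite from Lemma~\ref{lem:prob_lb} leaves $p_h(a)\le\alpha_{\rm opt}^{-1}f_h(a)\,e^{-\nu_{h,\ell}(m_a-1)}$. This is not small unless $m_a>1$.

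Your fallback is also invalid. $\beta_{h,\ell}$ contains $\log(\det\gram{h}{\ell}/\det V_0)\approx d\log(1+LH/(d\lambda))$, which is a genuine factor of $d$. It cannot be hidden in $\tilde{\bigO}$ when the claim is precisely about the dependence on $d$.

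For comparison, the paper's route never touches the sampling rule or $\nu_{h,\ell}$. It identifies $\mathrm{Violation}(\bpi_0)$ with the term $A_1$ from the $\Reg^{\rm viol}$ analysis and splits $A_1\le B_1+B_2$ on $\cU_{h,\ell}(A_{h,\ell})$. On $\cG_1$ it combines $\Delta>\Dpi+\epsilon\mu_0$ with the confidence ellipsoid to reduce each term to counting rounds where $\|\phi_{A_{h,\ell},h,\ell}\|^2_{(\gram{h}{\ell})^{-1}}$ exceeds a threshold of order $(\Dpi+\epsilon\mu_0)^2/\beta$. It then applies the elliptical potential count lemma (Lemma~\ref{lemma:epc_count}), with $\alpha_{h,\ell}=1/(LH)$ adding $1$ per term.

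Your $d^2$ obstacle is removed there by replacing $\beta_{h,\ell}$ in the threshold first by $\beta_{1,1}$, then by $\lambda S_*^2$, and tuning $\lambda$ so the threshold becomes $1$. However, $\beta_{h,\ell}\ge\beta_{1,1}$ \emph{lowers} the threshold and so enlarges the event, which is the wrong direction for an upper bound. Done correctly, with $\max_{h,\ell}\beta_{h,\ell}$, the count lemma again returns order $d\beta/(\Dpi+\epsilon\mu_0)^2$. So your diagnosis of the obstacle is right, but neither your dual-penalty idea nor the paper's $\lambda$-tuning gives a valid way past it. What your argument supports is the $\tilde{\bigO}(d\beta/(\Dpi+\epsilon\mu_0)^2)$ count, not $\tilde{\bigO}(d)$.
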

\end{tcolorbox}

\textit{Proof Concept.} In Lemma~\ref{lem:gap-lb}, we prove under constraint violation, the true gap is lower bounded by the gap subject to base-policy $\bpi_0$. That means $\truegap \geq \Dpi$. While analysing standard regret, we decompose $\Reg^{\rm viol}$ into two parts,
\begin{align*}
    \Reg^{\rm viol} = \Dpi\underbrace{\expect\left[ \suml\sumh \indicator\left\{ \cV_{h,\ell}(A_{h,\ell})\right\} \right]}_{\text{Violation}(\bpi_{0})} + \expect\left[ \suml\sumh (\truegap-\Dpi)\indicator\left\{ \cV_{h,\ell}(A_{h,\ell})\right\} \right]
\end{align*}
one of which is the expected constraint violation. We prove upper bound on this by simply using Corollary~\ref{cor:epc_rescaled} adapted~\citep[Lemma C.2]{jun2024noise}. We discuss it at length in Section~\ref{supp:regret_ub}.

\textit{Implication.}
(1) The variance dependency is captured by $\kappa$ and $\Sigma_{LH} \defn \suml{\discount}^{L-\ell} \sumh \bar{\sigma}^2_{h,\ell}$ appearing in the logarithmic term. This logarithmic part is reminiscent of the term $\sum_{t}\omega_t^2$ appearing in 
\cite[Thm.1]{russac2019weighted}, further adapted here to the heteroskedastic setup.

(2) Our bound is  instance-dependent, scaling as polylog of $T$, contrasting with minimax regret bounds.  From \cite{he2025variance,kim2026jointly}, the minimax lower bound is of order $d\sqrt{\sum_{t}\sigma_t^2}$ where $\sigma_t^2$ is the variance of reward $r_t$. 
Compare to the simplified setting of \cite{balagopalan2024minimum}, we replace the term $1/\Delta_{\min}$ with  the larger $\kappa/\tilde \Delta$, mainly due to the safety constraint, and the handling of heteroskedastic variance.
We conjecture the dependency on $\kappa$ might be further reduced.

(3) The intricate threshold within $\tilde \Delta$ appears as the constraint induced by $\pi_0$  does not vanish even as $\epsilon\to1$. Indeed, even in this case, the 
mean rewards must stay positive. 
We pay the largest regret when $\Delta_{\pi_0}=0$,  forcing a maximal factor $B/(\epsilon^2\mu_0^2)$, while as $\Delta_{\pi_0}$ approaches the largest gap $B$,
we recover the leading factor $1/\Delta_{\min}$ from the unconstrained setup~\citep{balagopalan2024minimum}.



\section{Numerical Experiments}\label{sec:xps}

We evaluate \framework~and \texttt{Dri-IMED}, a deterministic variant of our algorithm that we do not theoretically analyse (Appendix \ref{supp:dri-imed}), on a synthetic episodic contextual linear bandit with preference feedback and context drift, focusing on gradual and periodic drift regimes. Full environment details, additional drift regimes, ablation studies, and hyperparameter settings are provided in Appendix~\ref{app:experiments}.

\textbf{Baselines.}
We compare against four stationary linear bandit algorithms that ignore both drift and preference structure: \texttt{OFUL}~\citep{abbasi2011improved}, \texttt{LinMED}~\citep{balagopalan2024minimum}, \texttt{LinIMED}~\citep{bian2024indexed}, and \texttt{LinTS}~\citep{agrawal2013thompson}. To our knowledge, no prior algorithm addresses this combined setting. Thus, the baselines serve as the natural reference points\footnote{Source code can be found on \url{https://github.com/riiswa/context_drift_lin_bandits/}}.

\textbf{Environment.}
We consider a synthetic environment with $H=10$ users, $K=5$ arms, and $L=1000$ episodes; full parameter details are given in Appendix~\ref{app:experiments} (Algorithm~\ref{alg:env-gen}). We study two drift regimes, both parametrised by magnitude $\kappa=100$: \emph{gradual drift}, where the context scale grows linearly as $f(\ell) = \kappa \cdot \ell/L$, producing a smooth and monotone increase in noise; and \emph{periodic drift}, where $f(\ell) = \kappa \cdot \frac{1}{2}(1 + \sin(2\pi\ell / (L/50)))$, producing recurring fluctuations that repeatedly challenge the constraint mechanism. All results are averaged over 128 independent seeds.

\begin{figure}[t!]
    \centering
    \includegraphics[width=\textwidth]{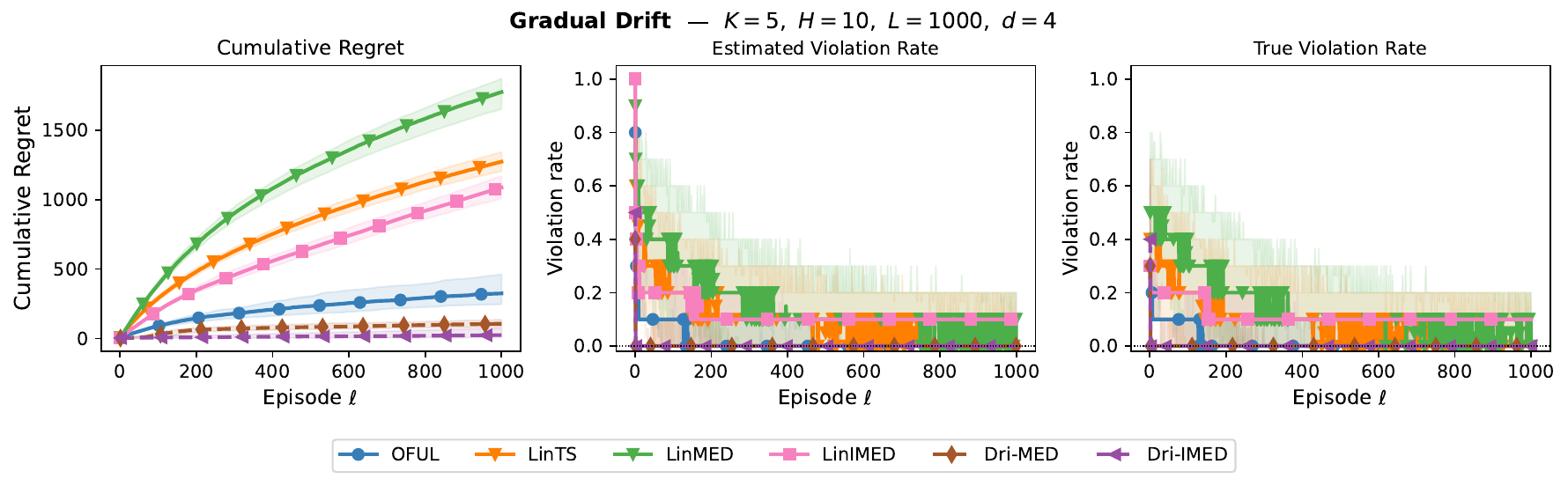}\\[4pt]
    \includegraphics[width=\textwidth]{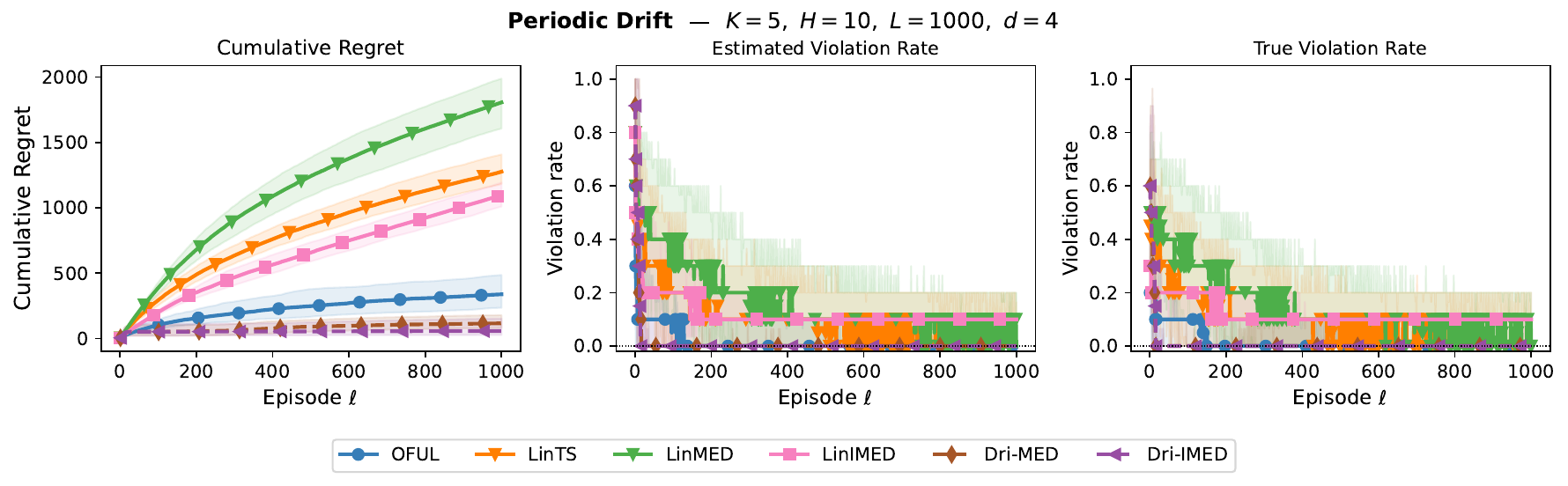}
    \caption{Cumulative regret (left), estimated violation rate (center), and true violation rate (right) for gradual and periodic drift. Shaded bands show the 5--95\% quantile range over 128 seeds. Results for no-drift and abrupt drift are provided in Appendix~\ref{app:experiments}.}   \label{fig:results_main}
\end{figure}

\textbf{Regret Evolution and Constraint Violation.}
Figure~\ref{fig:results_main} reports cumulative regret for both drift regimes. \framework~and \texttt{Dri-IMED} achieve substantially lower regret than all baselines, with the gap widening as episodes progress. Among the baselines, \texttt{OFUL} performs best yet still incurs regret an order of magnitude larger than that of our methods. \texttt{LinMED} and \texttt{LinTS} perform the worst, confirming that ignoring heteroskedasticity and preference structure is costly. Performance is stable across both drift types, validating that the drift-adaptive discounted regression successfully absorbs the non-stationarity, whether it is smooth or oscillatory.
Both \framework~and \texttt{Dri-IMED} rapidly drive the true constraint violation rate to zero after an initial exploration phase, while all stationary baselines exhibit persistent violations throughout. 

\textbf{Design Probabilities.}
The arm allocation analysis (Appendix~\ref{app:experiments}, Figure~\ref{fig:arm_allocation}) reveals a clear behavioural distinction between algorithms. \framework~concentrates almost all pull mass on the oracle-optimal arm $a^\star_h$ for every user. In contrast, \texttt{LinMED} spreads significant mass across suboptimal arms, unable to distinguish arm quality under the heteroskedastic preference structure it ignores.

For more experimental results and ablation studies, refer to Section~\ref{app:experiments} in supplementary materials.

\vspace*{-1em}\section{Discussions, Limitations, and Future Works}
We study a class of sequential experimental problems with a group of users having heterogeneous preferences and temporally evolving context distributions as a linear contextual bandit with heteroskedastic and non-stationary noise but stationary mean. In this context, we propose \framework{} that achieves (a) validity, i.e. a distribution of scores over its actions, (b) reliability, i.e. improved performance than a baseline action/policy applied over a control group, and (c) efficiency, i.e. low regret. We prove that \framework{} achieves logarithmic regret while exhibiting better numerical performance than the existing baselines.

The main limitation of this work is the stationary reward mean assumption. Though this assumption with heteroskedastic noise holds true for multiple applications, it would interesting to generalise our analysis for non-stationary means. In addition, the present MED strategies work only for sub-Gaussian rewards, whereas in real-life experiments, the noise distribution can exhibit heavy tails. Designing heavy-tail robust MED algorithms in this context, and in general remain open questions.
\subsubsection*{Broader Impact Statement}
\label{sec:broaderImpact}
This work mainly aimed for theoretical advancements in the safe, heteroskedastic contextual bandit literature. The authors does not see any potential negative impact that should be mentioned here.

\subsubsection*{Acknowledgments}
\label{sec:ack}
The authors would like to acknowledge PEPR project FOUNDRY (ANR23-PEIA-0003) for their support. DB and UD would like to acknowledge ANR JCJC project REPUBLIC (ANR-22-CE23-0003-01). We would also like to acknowledge the anonymous reviewers for their constructive feedback. Authors are members of the Inria team-project Scool.

\bibliography{main}

@article{balagopalan2024minimum,
  title={Minimum empirical divergence for sub-gaussian linear bandits},
  author={Balagopalan, Kapilan and Jun, Kwang-Sung},
  journal={arXiv preprint arXiv:2411.00229},
  year={2024}
}

@incollection{diggle2011,
    author = {Diggle, Peter J. and Chetwynd, Amanda G.},
    isbn = {9780199543182},
    title = {5 Experimental design: agricultural field experiments and clinical trials},
    booktitle = {Statistics and Scientific Method: An Introduction for Students and Researchers},
    publisher = {Oxford University Press},
    year = {2011},
    month = {08},
    doi = {10.1093/acprof:oso/9780199543182.003.0005},
    eprint = {https://academic.oup.com/book/0/chapter/195525344/chapter-pdf/43895872/acprof-9780199543182-chapter-5.pdf},
}

@article{neyman,
  title={On the Application of Probability Theory to Agricultural Experiments. Essay on},
  author={Neyman, Jerzy},
  journal={Statistical Science},
  volume={5},
  number={4,465-480},
  year={1923},
  note={reprinted in 1990}
}

@article{bian2024indexed,
  title={Indexed minimum empirical divergence-based algorithms for linear bandits},
  author={Bian, Jie and Tan, ntnt YF},
  journal={arXiv preprint arXiv:2405.15200},
  year={2024}
}

@article{honda2015non,
  title={Non-asymptotic analysis of a new bandit algorithm for semi-bounded rewards},
  author={Honda, Junya and Takemura, Akimichi},
  journal={The Journal of Machine Learning Research},
  volume={16},
  number={1},
  pages={3721--3756},
  year={2015},
  publisher={JMLR. org}
}

@article{baudry2023fast,
  title={Fast asymptotically optimal algorithms for non-parametric stochastic bandits},
  author={Baudry, Dorian and Pesquerel, Fabien and Degenne, R{\'e}my and Maillard, Odalric-Ambrym},
  journal={Advances in Neural Information Processing Systems},
  volume={36},
  pages={11469--11514},
  year={2023}
}

@article{qin2023kullback,
  title={Kullback-leibler maillard sampling for multi-armed bandits with bounded rewards},
  author={Qin, Hao and Jun, Kwang-Sung and Zhang, Chicheng},
  journal={Advances in Neural Information Processing Systems},
  volume={36},
  pages={60514--60526},
  year={2023}
}

@article{qin2025achieving,
  title={Achieving adaptivity and optimality for multi-armed bandits using Exponential-Kullback Leibler Maillard Sampling},
  author={Qin, Hao and Jun, Kwang-Sung and Zhang, Chicheng},
  journal={arXiv preprint arXiv:2502.14379},
  year={2025}
}

@article{honda2011asymptotically,
  title={An asymptotically optimal policy for finite support models in the multiarmed bandit problem},
  author={Honda, Junya and Takemura, Akimichi},
  journal={Machine Learning},
  volume={85},
  number={3},
  pages={361--391},
  year={2011},
  publisher={Springer}
}

@article{russac2019weighted,
	title={Weighted linear bandits for non-stationary environments},
	author={Russac, Yoan and Vernade, Claire and Capp{\'e}, Olivier},
	journal={Advances in Neural Information Processing Systems},
	volume={32},
	year={2019}
}

@article{faury2021regret,
  title={Regret bounds for generalized linear bandits under parameter drift},
  author={Faury, Louis and Russac, Yoan and Abeille, Marc and Calauz{\`e}nes, Cl{\'e}ment},
  journal={arXiv preprint arXiv:2103.05750},
  year={2021}
}

@article{he2025variance,
  title={Variance-Dependent Regret Lower Bounds for Contextual Bandits},
  author={He, Jiafan and Gu, Quanquan},
  journal={arXiv preprint arXiv:2503.12020},
  year={2025}
}

@inproceedings{kirschner2018information,
  title={Information directed sampling and bandits with heteroscedastic noise},
  author={Kirschner, Johannes and Krause, Andreas},
  booktitle={Conference On Learning Theory},
  pages={358--384},
  year={2018},
  organization={PMLR}
}

@inproceedings{abeille2017linear,
  title={Linear thompson sampling revisited},
  author={Abeille, Marc and Lazaric, Alessandro},
  booktitle={Artificial Intelligence and Statistics},
  pages={176--184},
  year={2017},
  organization={PMLR}
}

@article{durand2018streaming,
  title={Streaming kernel regression with provably adaptive mean, variance, and regularization},
  author={Durand, Audrey and Maillard, Odalric-Ambrym and Pineau, Joelle},
  journal={Journal of Machine Learning Research},
  volume={19},
  number={17},
  pages={1--34},
  year={2018}
}

@inproceedings{bregere2019target,
  title={Target tracking for contextual bandits: Application to demand side management},
  author={Br{\'e}g{\`e}re, Margaux and Gaillard, Pierre and Goude, Yannig and Stoltz, Gilles},
  booktitle={International Conference on Machine Learning},
  pages={754--763},
  year={2019},
  organization={PMLR}
}

@article{kim2026jointly,
  title={A Jointly Efficient and Optimal Algorithm for Heteroskedastic Generalized Linear Bandits with Adversarial Corruptions},
  author={Kim, Sanghwa and Lee, Junghyun and Yun, Se-Young},
  journal={arXiv preprint arXiv:2602.10971},
  year={2026}
}

@inproceedings{feng2025satisficing,
  title={Satisficing Regret Minimization in Bandits},
  author={Feng, Qing and Ma, Tianyi and Zhu, Ruihao},
  booktitle={The Thirteenth International Conference on Learning Representations},
  year={2025}
}

@article{amani2019linear,
  title={Linear stochastic bandits under safety constraints},
  author={Amani, Sanae and Alizadeh, Mahnoosh and Thrampoulidis, Christos},
  journal={Advances in Neural Information Processing Systems},
  volume={32},
  year={2019}
}

@inproceedings{pacchiano2021stochastic,
  title={Stochastic bandits with linear constraints},
  author={Pacchiano, Aldo and Ghavamzadeh, Mohammad and Bartlett, Peter and Jiang, Heinrich},
  booktitle={International conference on artificial intelligence and statistics},
  pages={2827--2835},
  year={2021},
  organization={PMLR}
}

@inproceedings{das2024learning,
  title={Learning to explore with Lagrangians for bandits under unknown constraints},
  author={Das, Udvas and Basu, Debabrota},
  booktitle={Seventeenth European Workshop on Reinforcement Learning},
  year={2024}
}

@article{krause2011contextual,
  title={Contextual gaussian process bandit optimization},
  author={Krause, Andreas and Ong, Cheng},
  journal={Advances in neural information processing systems},
  volume={24},
  year={2011}
}

@inproceedings{lattimore2017end,
  title={The end of optimism? an asymptotic analysis of finite-armed linear bandits},
  author={Lattimore, Tor and Szepesvari, Csaba},
  booktitle={Artificial Intelligence and Statistics},
  pages={728--737},
  year={2017},
  organization={PMLR}
}

@inproceedings{chowdhury2017kernelized,
  title={On kernelized multi-armed bandits},
  author={Chowdhury, Sayak Ray and Gopalan, Aditya},
  booktitle={International Conference on Machine Learning},
  pages={844--853},
  year={2017},
  organization={PMLR}
}

@inproceedings{valko2014spectral,
  title={Spectral bandits for smooth graph functions},
  author={Valko, Michal and Munos, R{\'e}mi and Kveton, Branislav and Koc{\'a}k, Tom{\'a}{\v{s}}},
  booktitle={International conference on machine learning},
  pages={46--54},
  year={2014},
  organization={PMLR}
}

@phdthesis{abeille2017exploration,
  title={Exploration-exploitation with Thompson sampling in linear systems},
  author={Abeille, Marc},
  year={2017},
  school={Universit{\'e} de Lille 1}
}

@inproceedings{jin2021almost,
  title={Almost optimal anytime algorithm for batched multi-armed bandits},
  author={Jin, Tianyuan and Tang, Jing and Xu, Pan and Huang, Keke and Xiao, Xiaokui and Gu, Quanquan},
  booktitle={International Conference on Machine Learning},
  pages={5065--5073},
  year={2021},
  organization={PMLR}
}

@article{perchet2016batched,
  title={Batched bandit problems},
  author={Perchet, Vianney and Rigollet, Philippe and Chassang, Sylvain and Snowberg, Erik},
  journal={The Annals of Statistics},
  pages={660--681},
  year={2016},
  publisher={JSTOR}
}

@article{gao2019batched,
  title={Batched multi-armed bandits problem},
  author={Gao, Zijun and Han, Yanjun and Ren, Zhimei and Zhou, Zhengqing},
  journal={Advances in Neural Information Processing Systems},
  volume={32},
  year={2019}
}

@article{zhang2020inference,
  title={Inference for batched bandits},
  author={Zhang, Kelly and Janson, Lucas and Murphy, Susan},
  journal={Advances in neural information processing systems},
  volume={33},
  pages={9818--9829},
  year={2020}
}

@inproceedings{agrawal2013thompson,
  title={Thompson sampling for contextual bandits with linear payoffs},
  author={Agrawal, Shipra and Goyal, Navin},
  booktitle={International conference on machine learning},
  pages={127--135},
  year={2013},
  organization={PMLR}
}

@book{tanner2023experimenting,
  title={Experimenting on the Farm: Introduction to Experimental Design},
  author={Tanner, Kari Christine and Jones, Gordon B and Verhoeven, Elizabeth Claire},
  year={2023},
  publisher={Oregon State University Extension Service}
}

@book{gomez1984statistical,
  title={Statistical procedures for agricultural research},
  author={Gomez, Kwanchai A and Gomez, Arturo A},
  year={1984},
  publisher={John wiley \& sons}
}

@article{hazan2016introduction,
  title={Introduction to online convex optimization},
  author={Hazan, Elad},
  journal={Foundations and Trends in Optimization},
  volume={2},
  number={3-4},
  pages={157--325},
  year={2016},
  publisher={Emerald Publishing Limited}
}

@article{abbasi2011improved,
  title={Improved algorithms for linear stochastic bandits},
  author={Abbasi-Yadkori, Yasin and P{\'a}l, D{\'a}vid and Szepesv{\'a}ri, Csaba},
  journal={Advances in neural information processing systems},
  volume={24},
  year={2011}
}

@article{jun2024noise,
  title={Noise-adaptive confidence sets for linear bandits and application to bayesian optimization},
  author={Jun, Kwang-Sung and Kim, Jungtaek},
  journal={arXiv preprint arXiv:2402.07341},
  year={2024}
}

@inproceedings{zhu2022contextual,
  title={Contextual bandits with large action spaces: Made practical},
  author={Zhu, Yinglun and Foster, Dylan J and Langford, John and Mineiro, Paul},
  booktitle={International Conference on Machine Learning},
  pages={27428--27453},
  year={2022},
  organization={PMLR}
}

@article{tirinzoni2020asymptotically,
  title={An asymptotically optimal primal-dual incremental algorithm for contextual linear bandits},
  author={Tirinzoni, Andrea and Pirotta, Matteo and Restelli, Marcello and Lazaric, Alessandro},
  journal={Advances in Neural Information Processing Systems},
  volume={33},
  pages={1417--1427},
  year={2020}
}

@article{aylmer1926arrangement,
  title={The arrangement of field experiments},
  author={Fisher, Ronald},
  journal={Journal of the Ministry of Agriculture},
  volume={33},
  pages={503--515},
  year={1926},
  publisher={Ministry of Agriculture and Fisheries}
}

@book{rangaswamy1995text,
  title={A text book of agricultural statistics},
  author={Rangaswamy, R},
  year={1995},
  publisher={New age international},
}

@article{banerjee2020theory,
  title={A theory of experimenters: Robustness, randomization, and balance},
  author={Banerjee, Abhijit V and Chassang, Sylvain and Montero, Sergio and Snowberg, Erik},
  journal={American Economic Review},
  volume={110},
  number={4},
  pages={1206--1230},
  year={2020},
  publisher={American Economic Association 2014 Broadway, Suite 305, Nashville, TN 37203}
}

@article{kohavi2015online,
  title={Online controlled experiments and A/B tests},
  author={Kohavi, Ron and Longbotham, Roger},
  journal={Encyclopedia of machine learning and data mining},
  pages={1--11},
  year={2015},
  publisher={New York: Springer Publishing}
}

@article{raccuglia2016machine,
  title={Machine-learning-assisted materials discovery using failed experiments},
  author={Raccuglia, Paul and Elbert, Katherine C and Adler, Philip DF and Falk, Casey and Wenny, Malia B and Mollo, Aurelio and Zeller, Matthias and Friedler, Sorelle A and Schrier, Joshua and Norquist, Alexander J},
  journal={Nature},
  volume={533},
  number={7601},
  pages={73--76},
  year={2016},
  publisher={Nature Publishing Group UK London}
}

@book{hoshmand2018design,
  title={Design of experiments for agriculture and the natural sciences},
  author={Hoshmand, Reza},
  year={2018},
  publisher={Chapman and Hall/CRC}
}

@book{sutton,
  title={Reinforcement learning: An introduction},
  author={Sutton, Richard S and Barto, Andrew G and others},
  year={1998},
  publisher={MIT press Cambridge}
}

@book{lattimore_szepesvari_2020, 
place={Cambridge}, title={Bandit Algorithms}, DOI={10.1017/9781108571401}, publisher={Cambridge University Press}, author={Lattimore, Tor and Szepesvári, Csaba}, year={2020}}

@article{reda2020machine,
  title={Machine learning applications in drug development},
  author={R{\'e}da, Cl{\'e}mence and Kaufmann, Emilie and Delahaye-Duriez, Andr{\'e}e},
  journal={Computational and structural biotechnology journal},
  volume={18},
  pages={241--252},
  year={2020},
  publisher={Elsevier}
}

@article{munro2021safety,
  title={Safety and immunogenicity of seven COVID-19 vaccines as a third dose (booster) following two doses of ChAdOx1 nCov-19 or BNT162b2 in the UK (COV-BOOST): a blinded, multicentre, randomised, controlled, phase 2 trial},
  author={Munro, Alasdair PS and Janani, Leila and Cornelius, Victoria and Aley, Parvinder K and Babbage, Gavin and Baxter, David and Bula, Marcin and Cathie, Katrina and Chatterjee, Krishna and Dodd, Kate and others},
  journal={The Lancet},
  volume={398},
  number={10318},
  pages={2258--2276},
  year={2021},
  publisher={Elsevier}
}

@article{mavrotas2009effective,
  title={Effective implementation of the $\varepsilon$-constraint method in multi-objective mathematical programming problems},
  author={Mavrotas, George},
  journal={Applied mathematics and computation},
  volume={213},
  number={2},
  pages={455--465},
  year={2009},
  publisher={Elsevier}
}

@article{feijer2025calibrated,
  title={Calibrated Recommendations with Contextual Bandits},
  author={Feijer, Diego and Abdollahpouri, Himan and Gupta, Sanket and Clare, Alexander and Wen, Yuxiao and Wasson, Todd and Dimakopoulou, Maria and Nazari, Zahra and Kretschman, Kyle and Lalmas, Mounia},
  journal={arXiv preprint arXiv:2509.05460},
  year={2025}
}

@article{saha2026one,
  title={One Good Source is All You Need: Near-Optimal Regret for Bandits under Heterogeneous Noise},
  author={Saha, Aadirupa and Bhat, Amith and Luo, Haipeng},
  journal={arXiv preprint arXiv:2602.14474},
  year={2026}
}

@article{legedza2001heterogeneity,
  title={Heterogeneity in phase I clinical trials: prior elicitation and computation using the continual reassessment method},
  author={Legedza, Anna TR and Ibrahim, Joseph G},
  journal={Statistics in Medicine},
  volume={20},
  number={6},
  pages={867--882},
  year={2001},
  publisher={Wiley Online Library}
}
\bibliographystyle{rlj}

\beginSupplementaryMaterials
\appendix
\section{Notations}
\begin{minipage}{\textwidth}
\centering
\begin{longtable}{ll}
\hline
\textbf{Notation} & \textbf{Description}  \\
\hline  
$\ell$ & Episode index $\in [1,L]$  \\
$h$ & User index $\in [1,H]$\\
$\theta$ & True reward parameter with $\|\theta\|_2\leq S_*$ \\
$\thetahat_{h,\ell}$ & Estimate of the true parameter $\theta$ for $h$-th user in $\ell$-th episode\\
$K$ & Number of actions\\
$\context$ & Context distribution at episode $\ell$\\
$\Phi(a)$ & $M\times d$ feature matrix for arm $a\in \cA$\\ 
$Y_{h,\ell}$ & $M$-dimensional feedback signal $\expect[Y_{h,\ell}\mid A_{h,\ell}=a,C_{h,\ell}= c] = \Phi(a)\theta$\\
$\bp_{h}$ & $d$-dimensional preference vector for $h$-th user\\
$r_{h,\ell}$ & Scalar reward for $h$-th user in $\ell$-th episode $= Y_{h,\ell}^{\top}\bp_h$
\\
$\phi_{a,h,\ell}$ & $d$-dimensional context for arm $a$ $= \frac{1}{\sigma_{a,h,\ell}}\Phi(a)^{\top}\bp_h$\\
$\Sigma_{a,h,\ell}$ & $= {\rm Cov}[Y_{h,\ell}\mid A_{h,\ell}=a,C_{h,\ell} = c]$\\
$\sigma_{a,h,\ell}$ & $={\rm Var}(\eta_{h,\ell}\mid A_{h,\ell} = a) $ (Variance of noise in the feedback, if arm $a$ is pulled)\\
$\sigma_{h,\ell}^2$ & $= \min_{a\in \cA} \sigma_{a,h,\ell}^2$\\
$\sigma_{\max}^2$ & $=\max_{a,h,\ell}\sigma^2_{a,h,\ell}$\\
$\sigma_{\min}^2$ & $=\min_{a,h,\ell}\sigma^2_{a,h,\ell}$\\
$\Delta_{a,h,l}$ & $=\theta^\top(\phi_{a^*_{\ell},h,\ell} - \phi_{a,h,\ell}) \ge 0$\\
$\hat{\Delta}_{a,h,l}$ & $= \thetahat_t^\top(\phi_{a^*_{\ell},h,\ell} - \phi_{a,h,\ell})$\\
$\pi_0$ & Baseline policy\\
$\Dpi$ & $= \min_{\ell\in[L]}\theta^{\top}(\phi_{a^*_{\ell},h,\ell}-\phi_{\pi_0})$\\
$\mu_{0,\ell}$ & Estimated reward of the baseline policy $\frac{1}{H}\sum_{h=1}^H \sum_a \pi_0(a) \langle \thetahat_{h,\ell}, \phi_{a,h,\ell}\rangle$\\
$\mu_{0,\ell}^+$ & Optimistic reward of baseline $\mu_{0,\ell} + \sqrt{\beta_{\ell}(\delta_\ell)} \left\|\frac{1}{H}\sum_h \sum_a \pi_0(a) \phi_{a,h,\ell}\right\|_{(V_{\ell}^{\gamma})^{-1}}$\\
$\tau_{\ell}$ & Constraint threshold $= (1-\epsilon) \mu_{0,\ell}^+$, for $\epsilon \ge 0$\\
$\tau$ & True constraint threshold $=(1-\epsilon) \mu_0$, where $\mu_0$ is the true mean reward for $\pi_0$\\
$\xi_{h,\ell}$ & Constraint violation indicator $= \indicator\left\{\langle \hat{\theta}_{h,\ell}, \phi_{A_{h,\ell},h} \rangle < \tau_\ell\right\}$\\
$\cV_{h,\ell}(A_{h,\ell)}$ & True constraint violating event $\{(h,\ell)\in[h,\ell]: \langle \theta,\phi_{A_{h,\ell},h}\rangle < \tau \}$\\
$\mu_0$ & True mean of base-policy $=\theta^{\top}\phi_{\bpi_0}$\\
\hline
\end{longtable}
\end{minipage}

For the supplementary materials, whenever we write $\underline{\Delta}_{a,h,\ell}$, it denotes the feature normalized true suboptimal gap. \clearpage
\section{Proof of Regret Upper Bound of Dri-MED (Algorithm~\ref{alg:episodic-pref-imed-full})} \label{supp:regret_ub}
\subsection{Good Event: Concentration and Confidence Width} 

We start by defining the following good events: 

\begin{align}
    \cG_1 \defn \left\{\forall h\ge1,\ell\ge 1: \|\theta - \thetahat_{h,\ell}\|_{\gram{h}{\ell}}^2 \leq \beta_{h,\ell}(\alpha_{h,\ell})  \right\}\label{eq:good_event_hl}
\end{align}


\begin{lemma}[Heteroscedastic weighted $(h,\ell)$-th confidence set]\label{lem:conf}
    Following the information acquisition rule with  action $A_{h,\ell} = a$ and it's rescaled feature vector $\phi_{a,h,\ell}$, i.e., $\gram{h}{\ell} = \gram{h-1}{\ell} + \phi_{a,h,\ell}\phi_{a,h,\ell}^{\top}$, we define
\begin{align*}
    {\beta_{h,\ell}(\alpha_{h,\ell})}^{1/2} \defn 
      \sqrt{2\log\frac{\det \gram{h}{\ell}}{\det V_0}
           + 2\log\frac{1}{\alpha_{h,\ell}}}
      + \sqrt{\lambda}\,S_* \,,
\end{align*}

where $\alpha_{h,\ell}\in (0,1)$ is to characterised later on. Then $\Prob(\cG_1) \ge 1 - \suml\sumh \alpha_{h,\ell}$.
\end{lemma}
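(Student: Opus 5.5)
We prove the lemma by reducing it to the self-normalized martingale bound of \citep[Theorem 1]{abbasi2011improved}, applied separately for each index $(h,\ell)$, and then taking a union bound over all $(h,\ell)$.

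\textbf{Step 1: order the observations and build the noise martingale.} Index the observations lexicographically, episode by episode and within an episode user by user. This gives one global time index $t=(\ell-1)H+h$ with filtration $\cF_t$ generated by all past contexts, actions and rewards together with the current context and action. Since $\sigma_{a,h,\ell}$ is known and $\cF_t$-measurable, Assumption~\ref{ass:linear} gives
\[
\tilde r_{h,\ell}=\theta^\top\phi_{A_{h,\ell},h,\ell}+\eta_{h,\ell},\qquad \eta_{h,\ell}\mid\cF_t\sim\cN(0,1).
\]
So the rescaled noise is conditionally $1$-sub-Gaussian, and $\phi_{A_{h,\ell},h,\ell}$ is $\cF_t$-measurable. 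The heteroskedasticity is fully absorbed by this normalization, which is why the radius does not depend on $\kappa$.

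\textbf{Step 2: handle the episodic discounting.} The matrix $\gram{h}{\ell}$ is a weighted Gram matrix: past samples carry factors $\discount^{\cdot}$ and the regularizer is $\lambda\discount^{\ell-1}I$. Write $\gram{h}{\ell}=\sum_s w_s\phi_s\phi_s^\top+\lambda_{h,\ell}I$ with weights $w_s\in(0,1]$ that are deterministic given the round index. A fixed target index $(h,\ell)$ freezes the weights, so we apply the self-normalized bound to the rescaled features $\sqrt{w_s}\phi_s$ and rescaled noise $\sqrt{w_s}\eta_s$, which remains $1$-sub-Gaussian since $w_s\le1$. This is the device of \citet{russac2019weighted}.

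Their argument uses two matrices, $V$ and $\tilde V$ with squared weights. I would avoid this by noting that, for a fixed target index, the weights enter as fixed multipliers of the features. Then $\sum_s w_s\phi_s\eta_s=\sum_s(\sqrt{w_s}\phi_s)(\sqrt{w_s}\eta_s)$ and $\|\sqrt{w_s}\eta_s\|_{\psi_2}\le 1$, which gives $\|\sum_s w_s\phi_s\eta_s\|_{(\gram{h}{\ell})^{-1}}^2\le 2\log\big(\det\gram{h}{\ell}/\det(\lambda_{h,\ell}I)\big)^{1/2}+2\log(1/\alpha_{h,\ell})$ with probability at least $1-\alpha_{h,\ell}$. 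This step is the main obstacle. The weights depend on the final index, so the process is not a single martingale in the target index; hence the bound is only claimed for fixed $(h,\ell)$ and the union bound comes afterwards. One must also reconcile $\det V_0$ with $\det(\lambda\discount^{\ell-1}I)$. I expect $\det V_0$ to be replaced by the determinant of the current regularizer, or the extra $d(\ell-1)\log(1/\discount)$ to be absorbed by the factor $2$ in front of the log-determinant term.

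\textbf{Step 3: error decomposition, bias term, and union bound.} The estimate satisfies
\[
\theta-\thetahat_{h,\ell}=(\gram{h}{\ell})^{-1}\Big(\lambda_{h,\ell}\theta-\sum_s w_s\phi_s\eta_s\Big),
\]
so by the triangle inequality in the $\gram{h}{\ell}$-norm,
\[
\|\theta-\thetahat_{h,\ell}\|_{\gram{h}{\ell}}\le \Big\|\sum_s w_s\phi_s\eta_s\Big\|_{(\gram{h}{\ell})^{-1}}+\sqrt{\lambda_{h,\ell}}\,\|\theta\|_2 .
\]
The bias term is at most $\sqrt{\lambda}S_*$, since $\lambda_{h,\ell}\le\lambda$ and $\|\theta\|_2\le S_*$. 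Combined with Step 2, the event in $\cG_1$ at index $(h,\ell)$ fails with probability at most $\alpha_{h,\ell}$. A union bound over $h\in[H]$ and $\ell\in[L]$ then gives $\Prob(\cG_1)\ge1-\suml\sumh\alpha_{h,\ell}$.
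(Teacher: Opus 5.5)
Your route is the paper's route: a self-normalized bound of \citet{abbasi2011improved} at each fixed index $(h,\ell)$, then a union bound over $[H]\times[L]$. You execute it more carefully than the paper's sketch in two places. First, freezing the weights for a fixed target index and rescaling by $\sqrt{w_s}$ makes the self-normalization match $\gram{h}{\ell}$ exactly; the paper only asserts that some discounted sum is a martingale. Second, you correctly use that the normalized noise is $1$-sub-Gaussian. The paper calls it $\sigma^2_{a,h,\ell}$-sub-Gaussian, which would put a $\sigma_{\max}$ factor into a radius that has none.

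However, the step you flag at the end of Step 2 is a genuine gap, and your second proposed fix does not work. With probability at least $1-\alpha_{h,\ell}$, your argument gives
\[
\Bigl\|\sum_s w_s\phi_s\eta_s\Bigr\|^2_{(\gram{h}{\ell})^{-1}}\le \log\frac{\det \gram{h}{\ell}}{\lambda_{h,\ell}^d}+2\log\frac{1}{\alpha_{h,\ell}} .
\]
To reach the stated radius you would need $\log\frac{\det \gram{h}{\ell}}{\lambda_{h,\ell}^d}\le 2\log\frac{\det \gram{h}{\ell}}{\det V_0}$, which is equivalent to
\[
\log\frac{\det \gram{h}{\ell}}{\det V_0}\ \ge\ d\log\frac{\lambda}{\lambda_{h,\ell}} .
\]
The right-hand side is strictly positive as soon as any discount has been applied. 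The left-hand side can be negative. Right after a discount, if the data collected so far is small compared with $\lambda$, then $\det\gram{h}{\ell}$ is close to $\lambda_{h,\ell}^d<\lambda^d$. For example, with a one-step discount $\discount$ per episode and $\lambda\gg H$, $\det \gram{1}{2}$ is essentially $\discount^d\lambda^d$.

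In that regime the factor $2$ makes matters worse, not better. Under strong discounting the argument of the square root in the stated $\beta_{h,\ell}(\alpha_{h,\ell})$ can even be negative, so the radius as written is not well defined and cannot be recovered verbatim. What your argument does establish is the lemma with $\det V_0$ replaced by $\det(\lambda_{h,\ell}I)$, the regularizer actually present in $\gram{h}{\ell}$, as in \citet{russac2019weighted}. You should state that as your conclusion rather than hope the discrepancy is absorbed. The paper's own proof has the same omission: it applies \citep[Theorem 2]{abbasi2011improved} as if the regularizer were still $V_0=\lambda I$.

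Finally, make explicit that Step 3 assumes $\thetahat_{h,\ell}=(\gram{h}{\ell})^{-1}\sum_s w_s\phi_s\tilde r_s$, with exactly the weights and regularizer of $\gram{h}{\ell}$. The algorithm's update line does not literally produce this estimator: it sums over the current episode only, uses raw rewards, and resets with $\thetahat_{0,\ell}=\thetahat_{\ell-1}/\discount$.
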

\begin{proof}
     We apply the self-normalized martingale inequality of~\citep[Theorem 2]{abbasi2011improved} to the Gram matrix $\gram{h}{\ell}$. At round $(h,\ell)$, the noise involved in the reward signal $\eta_{h,\ell}$ is $\sigma^2_{a,h,\ell}$-sub-Gaussian and $\cF_{h,\ell}$-measurable. 
     The weighted process $\suml \gamma^\ell\sumh  \eta_{h,\ell}\phi_{A_{h,\ell},h,\ell}$ is a martingale, and the theorem yields
    the stated bound round-by-round. Thus, a union bound over all $(h,\ell)$ gives $\Prob(\cG_1) \ge
    1 - \suml\sumh \alpha_{h,\ell}$.
\end{proof}

\subsection{Regret Decomposition}

\begin{definition}[Constraint-violating rounds]
  Let $\cV_{h,\ell}(A_{h,\ell}) \defn \left\{ \langle \theta,\phi_{A_{h,\ell},h,\ell}\rangle < \tau \right\}$ denotes whether $(h,\ell)$ is a true constraint-violating index. 
\end{definition}
We should note, as we are normalising each feature by the noise variance, the gaps are also inherently scaled by the noise variance. Thus, we define $\truegap = \sigma_{A_{h,\ell},h,\ell} \truegaprs, \estgap = \sigma_{A_{h,\ell},h,\ell} \estgaprs $, and finally ${\Dpirs}{\sigma_{\min}}\leq\Dpi\leq{\Dpirs}{\sigma_{\max}} $.  
We decompose the total expected regret as--
\begin{align*}
    \Reg &=  \expect\!\left[\sum_{\ell=1}^{L}\sum_{h=1}^{H} \truegap\right]\\
    &=  \sigma_{\max}\expect\!\left[\sum_{\ell=1}^{L}\sum_{h=1}^{H} \truegaprs\indicator\left\{\overline{\cV_{h,\ell}(A_{h,\ell})}\right\}\right]+\sigma_{\max}\expect\!\left[\sum_{\ell=1}^{L}\sum_{h=1}^{H} \truegaprs\indicator\left\{{\cV_{h,\ell}(A_{h,\ell})}\right\}\right]\\
   &\defn \Reg^{\mathrm{safe}} + \Reg^{\mathrm{viol}} 
\end{align*}

\subsection{Part I: Regret Upper Bound under Constraint Violation}

\begin{reptheorem}{thm:reg_const_id}
Algorithm~\ref{alg:episodic-pref-imed-full} and~\ref{alg:med_step} jointly satisfies --
    \begin{align*}
        \Reg^{\rm viol} =\tilde{\bigO}\left(\sigma_{\max}^2 \frac{B-\Dpi}{(\Dpi+\epsilon\mu_0)^2} d^2\left(\log^2(LH)+\log\left( \suml {(\discount)}^{L-\ell} \sumh \sigma^2_{h,\ell}\right)\right)\right)
    \end{align*}
    
\end{reptheorem}
In this section, we provide a complete proof of the regret upper bound under constraint violation, i.e, upper bound on $\Reg^{\rm viol}$. The proof structure involves two steps:

\begin{enumerate}
    \item Then for the upper bound, we first define the conditioning event that will be used to decompose the regret expression further.
    \item We bound each part of the decomposition to get a final accumulated upper bound.
\end{enumerate}

\subsubsection{Defining the Conditioning Events}

\begin{definition}\label{def:cond_events}
    For any arm $a\in\cA$, we define the following conditioning events:
        \begin{equation}
        \begin{split}
            &\text{Concentration of gaps:}~\cU_{h,\ell}(a) \defn \left\{ \estgaprs \ge \frac{\truegaprs}{1+c} \right\} \quad\quad \text{for } c\geq 0 \\
            &\text{Saturation of arm features:}~ \cE_{h,\ell} \defn \left\{ |B_{h,\ell}|>0\right\}, \text{where } B_{h,\ell} = \bigl\{a:\|\phi_{a,h,\ell}\|^{2}_{(V_{h,\ell}^{\gamma_{\rm decay}})^{-1}}>1\bigr\}\\
            &\text{Concentration of parameter estimate:}~\cJ_{h,\ell}(a) \defn \left\{ \theta^{\top}\phi_{a_{h,\ell}^*,h,\ell} - \thetahat_{h,\ell}^{\top}\phi_{a_{h,\ell}^*,h,\ell} \leq \varepsilon_1 \right\}\quad \text{for } \varepsilon_1\geq 0\\
            &\text{Partial Saturation of arm features:}~ \cF_{h,\ell}(a) \defn \left\{ \|\phi_{a,h,\ell}\|_{(\gram{h}{\ell})^{-1}} > \varepsilon_2 \right\} \quad\quad \text{for } \varepsilon\geq 0                  
        \end{split}\label{eq:cond_events}
        \end{equation}
\end{definition}

The event $\cU_{h,\ell}(a)$ signifies how accurately we are being able to estimate the sub-optimality gaps for arm $a$, $\cE_{h,\ell}$ measures the saturation level for every arm at index $(h,\ell)$, $\cJ_{h,\ell}(a)$ assigns a precision in estimating the true parameter $\theta$, and finally $F_{h,\ell}$ sets a lower bound on the saturation (number of pulls) level of arm $a$.
 
\subsubsection{Decomposition of $\Reg^{\mathrm{viol}}$}
We rewrite $\truegaprs = \Dpirs + (\truegaprs - \Dpirs)$ for $(h,\ell) \in \cV$. 
Thus, we have:
\begin{align}\label{eq:constr-decomp}
  \Reg^{\mathrm{constr}}
  =& \sigma_{\max}\expect\!\left[\suml\sumh \truegaprs\indicator\left\{{\cV_{h,\ell}(A_{h,\ell})}\right\}\right]\notag\\
  =& \sigma_{\max}\Dpirs\underbrace{ \expect\!\left[\suml\sumh \indicator\left\{{\cV_{h,\ell}(A_{h,\ell})}\right\}\right]}_{\defn A_1} 
  + \sigma_{\max}\underbrace{\expect\!\left[\suml\sumh (\truegaprs-\Dpirs)\indicator\left\{{\cV_{h,\ell}(A_{h,\ell})}\right\}\right]}_{\defn A_2}
\end{align}
Both the terms in this decomposition are non-negative as Lemma~\ref{lem:gap-lb} yields a natural
lower bound on the regret due to constraint-violation $\Reg^{\rm constr} \ge \Dpi\expect\!\left[\suml\sumh \indicator\left\{{\cV_{h,\ell}(A_{h,\ell})}\right\}\right]$.

\textbf{Step 1: Upper bound on $A_1$.} We further decompose $A_1$ using the conditioning event $\cU_{h,\ell}(A_{h,\ell})$ as below:

\begin{align}
    A_1 =&  \expect\!\left[\suml\sumh \indicator\left\{{\cV_{h,\ell}(A_{h,\ell})}\right\}\right]\notag \\
    \leq& \underbrace{\expect\!\left[\suml\sumh \indicator\left\{{\cV_{h,\ell}(A_{h,\ell})}\right\}\indicator\{\cU_{h,\ell}(A_{h,\ell})\}\right]}_{\defn B_1} +  \underbrace{\expect\!\left[\suml\sumh \indicator\left\{{\cV_{h,\ell}(A_{h,\ell})}\right\}\indicator\{\overline{\cU_{h,\ell}(A_{h,\ell})}\}\right]}_{\defn B_2}
\end{align}

\textit{Upper bound on $B_1$.} 
We first divide the term $B_1$ in two parts: one under the good event $\cG_1$ and when $\cG_1$ is false. We denote them by $B_1\mid_{\cG_1}$ and $B_1\mid_{\overline{\cG_1}}$, respectively.

First, we bound $B_1\mid_{\cG_1}$ as follows:
\begin{align*}
    B_1\mid_{\cG_1} =& \expect\!\left[\suml\sumh \indicator\left\{{\cV_{h,\ell}(A_{h,\ell})}\right\}\indicator\{\cU_{h,\ell}(A_{h,\ell})\}\right]\\
    =& \expect\!\left[\suml\sumh \indicator\left\{\truegaprs > \Dpirs+\epsilon\mu_0\right\}\indicator\left\{\estgap \ge \frac{\truegaprs}{1+c}\right\}\right]\\
    =& \expect\!\left[\suml\sumh \indicator\left\{\estgap > \frac{\Dpirs+\epsilon\mu_0}{1+c}\right\}\right]\\
    \stackrel{(a)}{\leq}& \expect\!\left[\suml\sumh \indicator\left\{\estgap - \truegaprs> \frac{\Dpirs+\epsilon\mu_0}{1+c} - B\right\}\right]\\
    =& \expect\!\left[\suml\sumh \indicator\left\{  (\theta-\thetahat_{h,\ell})^{\top}(\phi_{A_{h,\ell},h,\ell}-\phi_{\bestarm,h,\ell})>\frac{\Dpirs+\epsilon\mu_0}{1+c} - B\right\}\right]\\
    \stackrel{(b)}{\leq}&\expect\!\left[\suml\sumh \indicator\left\{  {\beta_{h,\ell}(\alpha_{h,\ell})}^{1/2}\|\phi_{A_{h,\ell},h,\ell}-\phi_{\bestarm,h,\ell}\|_{{(\gram{h}{\ell})}^{-1}}>\frac{\Dpirs+\epsilon\mu_0}{1+c} - B\right\}\right]\\
    \leq &\expect\!\Bigg[\suml\sumh \indicator\Bigg\{  \|\phi_{A_{h,\ell},h,\ell}\|_{{(\gram{h}{\ell})}^{-1}}^2+\|\phi_{\bestarm,h,\ell}\|_{{(\gram{h}{\ell})}^{-1}}^2\\
    &\qquad\qquad\qquad>\qquad\frac{1}{{\beta_{h,\ell}(\alpha_{h,\ell})}}\left(\frac{\Dpirs+\epsilon\mu_0}{1+c} - B\right)^2\Bigg\}\Bigg]\\
    \stackrel{(c)}{\leq} &2\expect\!\left[\suml\sumh \indicator\left\{  \|\phi_{A_{h,\ell},h,\ell}\|_{{(\gram{h}{\ell})}^{-1}}^2>\frac{1}{{2\beta_{1,1}(\alpha_{1,1})}}\left(\frac{\Dpirs+\epsilon\mu_0}{1+c} - B\right)^2 \right\}\right]\\
    \leq& 2\expect\!\left[\suml\sumh \indicator\left\{  \|\phi_{A_{h,\ell},h,\ell}\|_{{(\gram{h}{\ell})}^{-1}}^2>\frac{1}{2\lambda S_*^2}\left(\frac{\Dpirs+\epsilon\mu_0}{1+c} - B\right)^2 \right\}\right]\,,
\end{align*}
where $(a)$ holds due to the Assumption~\ref{ass:gap_upper_bound}, and the fact that
\begin{align*}
    \left\{  \theta^{\top}\phi_{A_{h,\ell},h,\ell} < \tau \right\}=& \left\{  \theta^{\top}\phi_{A_{h,\ell},h,\ell} < (1-\epsilon)\theta^{\top}\phi_{\bpi_0} \right\}\\
    =&\left\{  \theta^{\top}\phi_{A_{h,\ell},h,\ell}-\theta^{\top}\feature{\bestarm}+\theta^{\top}\feature{\bestarm} -\theta^{\top}\phi_{\bpi_0}< -\epsilon \mu_0 \right\}\\
    =&\left\{ -\truegaprs + \theta^{\top}\feature{\bestarm}-\theta^{\top}\phi_{\bpi_0} < -\epsilon \mu_0 \right\}\\
    \subseteq &\left\{ -\truegaprs + \min_{\ell\in[L]}\theta^{\top}\feature{\bestarm}-\theta^{\top}\phi_{\bpi_0} < -\epsilon \mu_0 \right\}\\
    =&\left\{ -\truegaprs + \Dpirs < -\epsilon \mu_0 \right\}\\
    =&\left\{ \truegaprs > \Dpirs +\epsilon \mu_0 \right\}\,,
\end{align*}
$(b)$ holds due to the definition of good event $\cG_1$ in Equation~\eqref{eq:good_event_hl}, and finally $(c)$ holds because we apply Boole's inequality, $\forall h,\ell\ge 1, \beta_{h,\ell}(\alpha_{h,\ell})\geq \beta_{1,1}(\alpha_{1,1})$. 

It is easy to see if we put $\lambda = \frac{2S_*^2}{\min\left\{\left( \frac{\Dpirs+\epsilon\mu_0}{1+c} - B\right)^2 , \left( {\Dpirs+\epsilon\mu_0} - \frac{B}{1+c}\right)^2 \right\}}$, then 
\begin{align*}
    B_1\mid_{\cG_1} = \expect\!\left[\suml\sumh \indicator\left\{  \|\phi_{A_{h,\ell},h,\ell}\|_{{(\gram{h}{\ell})}^{-1}}^2>1\right\}\right]
\end{align*}

Leveraging the elliptical potential count lemma (Lemma~\ref{lemma:epc_count}) and Lemma~\ref{lem:conf}, we get the final bound on $B_1$ as:
\begin{align*}
    B_1 \leq B_1\mid_{\cG_1} + B_1\mid_{\overline{\cG_1}} \leq 6d\log\left(1+\frac{2}{\lambda}\right) + \suml\sumh \alpha_{h,\ell}
\end{align*}

\textit{Upper Bound on $B_2$.} We again divide the second term $B_2$ in two parts: one under the good event $\cG_1$ and otherwise. We similarly denote them by $B_2\mid_{\cG_1}$ and $B_2\mid_{\overline{\cG_1}}$, respectively.

First, we bound $B_2\mid_{\cG_1}$ as follows:
\begin{align*}
    B_2\mid_{\cG_1} =& \expect\!\left[\suml\sumh \indicator\left\{{\cV_{h,\ell}(A_{h,\ell})}\right\}\indicator\{\overline{\cU_{h,\ell}(A_{h,\ell})}\}\right]\\
    =&\expect\!\left[\suml\sumh \indicator\left\{\truegaprs > \Dpirs+\epsilon\mu_0\right\}\indicator\left\{\estgap < \frac{\truegaprs}{1+c}\right\}\right]\\
    \leq & \expect\!\left[\suml\sumh \indicator\left\{\estgap-\truegaprs \leq \frac{\truegaprs}{1+c} - \Dpirs-\epsilon\mu_0\right\}\right]\\
    \leq & \expect\!\left[\suml\sumh \indicator\left\{\estgap-\truegaprs \leq \frac{B}{1+c} - \Dpirs-\epsilon\mu_0\right\}\right]\\
    = &\expect\!\left[\suml\sumh \indicator\left\{\truegaprs-\estgap \geq  \Dpirs+\epsilon\mu_0-\frac{B}{1+c}  \right\}\right]\\
    \leq&2\expect\!\left[\suml\sumh \indicator\left\{  \|\phi_{A_{h,\ell},h,\ell}\|_{{(\gram{h}{\ell})}^{-1}}^2\geq\frac{1}{{2\beta_{h,\ell}(\alpha_{h,\ell})}}\left(\Dpirs+\epsilon\mu_0-\frac{B}{1+c}\right)^2 \right\}\right]\\
    \leq&2\expect\!\left[\suml\sumh \indicator\left\{  \|\phi_{A_{h,\ell},h,\ell}\|_{{(\gram{h}{\ell})}^{-1}}^2\geq\frac{1}{\lambda S_*^2}\left(\Dpirs+\epsilon\mu_0-\frac{B}{1+c}\right)^2 \right\}\right]
\end{align*}
Similar to the analysis $B_1$, we use Lemma~\ref{lemma:epc_count} with $\lambda = \frac{2S_*^2}{\min\left\{\left( \frac{\Dpirs+\epsilon\mu_0}{1+c} - B\right)^2 , \left( {\Dpirs+\epsilon\mu_0} - \frac{B}{1+c}\right)^2 \right\}}$ to get the final bound on $B_2$ as:
\begin{align*}
    B_2  \leq   6d\log\left( 1+\frac{2}{ \lambda}\right) + \suml\sumh \alpha_{h,\ell}\,.
\end{align*}

\textbf{Step 2: Upper Bound on $A_2$.} 


To find upper bound on $A_2$, we closely follow the proof structure of the standard regret in~\citep[Lemma 1 and Theorem 1]{balagopalan2024minimum} but remove the need of the peeling event. \citet{balagopalan2024minimum} leverage peeling to lower bound the gap $\truegap$, since it appears in the denominator of the pulling probability of arm $A_{h,\ell}$. We get the lower bound due to the presence of $\bpi_0$. 

We use the event $\cE_{h,\ell}$ to decompose $A_2$ as:
\begin{align*}
    A_2 =& \expect\!\left[\suml\sumh (\truegaprs-\Dpirs)\indicator\left\{{\cV_{h,\ell}(A_{h,\ell})}\right\}\right]\\
    \leq & \underbrace{\expect\!\left[\suml\sumh (\truegaprs-\Dpirs)\indicator\left\{{\cV_{h,\ell}(A_{h,\ell})}\right\}\indicator\left\{\cE_{h,\ell}\right\}\right]}_{\defn C_1} \\&\qquad\qquad+ \underbrace{\expect\!\left[\suml\sumh (\truegaprs-\Dpirs)\indicator\left\{{\cV_{h,\ell}(A_{h,\ell})}\right\}\indicator\left\{\overline{\cE_{h,\ell}}\right\}\right]}_{\defn C_2}
\end{align*}

\textit{Upper bound on $C_1$.} Now, we focus on the term $C_1$:

\begin{align*}
    C_1 =& \expect\!\left[\suml\sumh (\truegaprs-\Dpirs)\indicator\left\{{\cV_{h,\ell}(A_{h,\ell})}\right\}\indicator\left\{\cE_{h,\ell}\right\}\right]\\
    =& \expect\!\left[\suml\sumh (\truegaprs-\Dpirs)\indicator\left\{\truegaprs > \Dpirs+\epsilon\mu_0\right\}\indicator\left\{\cE_{h,\ell}\right\}\right]
\end{align*}
As we assign the probability of pulling $A_{h,\ell}$ from $B_{h,\ell}$ as exactly $\frac{1}{2}$, upper bound analysis of term $C_1$ closely matches with the analysis for term $A_2$ in~\citep{balagopalan2024minimum}. Thus, we have--

$$C_1 \leq 6(B-\Dpirs)d\log\left(1+\frac{2}{\lambda} \right) \leq \frac{6(B-\Dpi)d}{\sigma_{\max}}\log\left(1+\frac{2}{\lambda} \right)$$

\textit{Upper bound on $C_2$.} We further decompose $C_2$ based on the event based on the event $\cF_{h,\ell}(A_{h,\ell})$--

\begin{align*}
    C_2 =& \expect\!\left[\suml\sumh (\truegaprs-\Dpirs\indicator\left\{{\cV_{h,\ell}(A_{h,\ell})}\right\}\indicator\left\{\overline{\cE_{h,\ell}}\right\}\right]\\
    \leq & \underbrace{\expect\!\left[\suml\sumh (\truegaprs-\Dpirs)\indicator\left\{{\cV_{h,\ell}(A_{h,\ell})}\right\}\indicator\left\{\overline{\cE_{h,\ell}}\right\}\indicator\left\{ \cF_{h,\ell}(A_{h,\ell}) \right\}\right] }_{\defn D_1}\\
    &\qquad\qquad+ \underbrace{\expect\!\left[\suml\sumh (\truegaprs-\Dpirs)\indicator\left\{{\cV_{h,\ell}(A_{h,\ell})}\right\}\indicator\left\{\overline{\cE_{h,\ell}}\right\}\indicator\left\{ \overline{\cF_{h,\ell}(A_{h,\ell})} \right\}\right]}_{\defn D_2}
\end{align*}

\textit{Upper bound on $D_1$.} Now, we focus on the term $D_1$--

\begin{align*}
    D_1 =& \expect\!\left[\suml\sumh (\truegaprs-\Dpirs)\indicator\left\{{\cV_{h,\ell}(A_{h,\ell})}\right\}\indicator\left\{\overline{\cE_{h,\ell}}\right\}\indicator\left\{ \cF_{h,\ell}(A_{h,\ell}) \right\}\right] 
\end{align*}

To upper bounding this term, we follow the upper bound formalisation of Term $D_2$ in~\citep{balagopalan2024minimum} without the peeling argument to get--

\begin{align*}
    D_1 \leq& 3d\frac{(B-\Dpirs)}{\varepsilon_2}\log\left( 1 + \frac{2}{\lambda\varepsilon_2}\right) + 3(B-\Dpirs)d\log\left( 1 + \frac{2}{\lambda}\right)\\
    \leq& 3d\frac{(B-\Dpi)}{\sigma_{\max}\varepsilon_2}\log\left( 1 + \frac{2}{\lambda\varepsilon_2}\right) + \frac{3(B-\Dpi)d}{\sigma_{\max}}\log\left( 1 + \frac{2}{\lambda}\right)
\end{align*}

\textit{Upper bound on $D_2$.} To upper bound this term, we again decompose it by leveraging the event $\cU_{h,\ell}(A_{h,\ell})$--

\begin{align*}
    D_2 =& \expect\!\left[\suml\sumh (\truegaprs-\Dpirs)\indicator\left\{{\cV_{h,\ell}(A_{h,\ell})}\right\}\indicator\left\{\overline{\cE_{h,\ell}}\right\}\indicator\left\{ \overline{\cF_{h,\ell}(A_{h,\ell})} \right\}\right]\\
    \leq & \underbrace{\expect\!\left[\suml\sumh (\truegaprs-\Dpirs)\indicator\left\{{\cV_{h,\ell}(A_{h,\ell})}\right\}\indicator\left\{\overline{\cE_{h,\ell}}\right\}\indicator\left\{ \overline{\cF_{h,\ell}(A_{h,\ell})} \right\}\indicator\left\{ \cU_{h,\ell}(A_{h,\ell})\right\}\right] }_{\defn E_1}\\
    &\qquad\qquad \underbrace{\expect\!\left[\suml\sumh (\truegaprs-\Dpirs)\indicator\left\{{\cV_{h,\ell}(A_{h,\ell})}\right\}\indicator\left\{\overline{\cE_{h,\ell}}\right\}\indicator\left\{ \overline{\cF_{h,\ell}(A_{h,\ell})} \right\}\indicator\left\{ \overline{\cU_{h,\ell}(A_{h,\ell})}\right\}\right] }_{\defn E_2}
\end{align*}

\textit{Upper bound on $E_1$.} This upper bound can be obtained following the steps in~\citep{balagopalan2024minimum} for their term $F_1$ ignoring the peeling steps. Thus, we have--

\begin{align*}
    E_1 \leq& \frac{1}{\alpha_{\rm emp}}(B-\Dpirs)HL\exp\left(-\frac{(\Dpirs+\epsilon\mu_0)^2}{4\varepsilon_2 \beta_{h,\ell}(\alpha_{h,\ell})(1+c)^2} \right)\\
    &\qquad\qquad+\frac{3d(B-\Dpirs)}{\alpha_{\rm emp}^2}\left( \frac{1}{\varepsilon_2}\log\left( 1 + \frac{2}{\lambda\varepsilon_2}\right) + \log\left( 1 + \frac{2}{\lambda}\right)\right)\\
    \leq&\frac{1}{\sigma_{\max}\alpha_{\rm emp}}(B-\Dpi)HL\exp\left(-\frac{(\Dpi+\epsilon\mu_0)^2}{4\sigma_{\max}^2 \varepsilon_2 \beta_{h,\ell}(\alpha_{h,\ell})(1+c)^2} \right)\\
    &+\frac{3d(B-\Dpi)}{\sigma_{\max}\alpha_{\rm emp}^2}\left( \frac{1}{\varepsilon_2}\log\left( 1 + \frac{2}{\lambda\varepsilon_2}\right) + \log\left( 1 + \frac{2}{\lambda}\right)\right)
\end{align*}

\textit{Upper bound on $E_2$.} We finally use the event $\cJ_{h,\ell}(h,\ell)$ to decompose the event $E_2$ as--
\begin{align*}
    &E_2 =  \expect\!\left[\suml\sumh (\truegaprs-\Dpirs)\indicator\left\{{\cV_{h,\ell}(A_{h,\ell})}\right\}\indicator\left\{\overline{\cE_{h,\ell}}\right\}\indicator\left\{ \overline{\cF_{h,\ell}(A_{h,\ell})} \right\}\indicator\left\{ \overline{\cU_{h,\ell}(A_{h,\ell})}\right\}\right]\\
    \leq & \underbrace{\expect\!\left[\suml\sumh (\truegaprs-\Dpirs)\indicator\left\{{\cV_{h,\ell}(A_{h,\ell})}\right\}\indicator\left\{\overline{\cE_{h,\ell}}\right\}\indicator\left\{ \overline{\cF_{h,\ell}(A_{h,\ell})} \right\}\indicator\left\{ \overline{\cU_{h,\ell}(A_{h,\ell})}\right\}\indicator\left\{ \cJ_{h,\ell}(A_{h,\ell})\right\}\right]}_{\defn F_1}\\
    &+\underbrace{\expect\!\left[\suml\sumh (\truegaprs-\Dpirs)\indicator\left\{{\cV_{h,\ell}(A_{h,\ell})}\right\}\indicator\left\{\overline{\cE_{h,\ell}}\right\}\indicator\left\{ \overline{\cF_{h,\ell}(A_{h,\ell})} \right\}\indicator\left\{ \overline{\cU_{h,\ell}(A_{h,\ell})}\right\}\indicator\left\{ \overline{\cJ_{h,\ell}(A_{h,\ell})}\right\}\right]}_{\defn F_2}
\end{align*}

\textit{Upper bound on $F_1$.} It is easy to see that in $F_1$, if the good event in Equation~\eqref{eq:good_event_hl} does not happen, then under $\overline{\cG_1}$, 
\[F_1\mid_{\overline{\cG_1}}~ \leq (B-\Dpirs)\sumh\suml {\alpha_{h,\ell}}\leq \leq \frac{(B-\Dpi)}{\sigma_{\max}}\sumh\suml {\alpha_{h,\ell}}.\] 

On the other hand, under the event $\cG_1$, following~\citep{balagopalan2024minimum}, we write
\begin{align*}
    F_1 \mid_{\cG_1}\leq& \expect\left[ \suml\sumh (\truegaprs-\Dpirs) \indicator\left\{ (\thetahat_{h,\ell}-\theta)^{\top}\feature{A_{h,\ell}} > \frac{c(\Dpirs+\epsilon\mu_0)}{1+c}-\varepsilon_1 \right\} \indicator\{\cG_1\}\right]\\
    \leq&\expect\left[ \suml\sumh (\truegaprs-\Dpirs) \indicator\left\{ \|\phi_{A_{h,\ell},h,\ell}\|_{{(\gram{h}{\ell})}^{-1}}^2\geq\frac{1}{\beta_{h,\ell}(\alpha_{h,\ell})}\left(\frac{c(\Dpirs+\epsilon\mu_0)}{2(1+c)}\right)^2  \right\} \indicator\{\cG_1\}\right]\\
    \stackrel{(a)}{\leq }&\expect\left[ \suml\sumh (\truegaprs-\Dpirs) \indicator\left\{ \|\phi_{A_{h,\ell},h,\ell}\|_{{(\gram{h}{\ell})}^{-1}}^2\geq\frac{1}{\lambda S_*^2}\left(\frac{c(\Dpirs+\epsilon\mu_0)}{2(1+c)}\right)^2  \right\} \indicator\{\cG_1\}\right]\\
    \leq& 3(B-\Dpirs)d\log\left( 1 + \frac{2}{\lambda} \right) \leq \frac{3(B-\Dpi)d}{\sigma_{\max}}\log\left( 1 + \frac{2}{\lambda} \right) \,,
\end{align*}
choosing $\varepsilon_1 = \frac{c(\Dpirs+\epsilon\mu_0)}{2(1+c)}$.

Thus, final upper on $F_1$ is given by,
\begin{align*}
    F_1 \leq \frac{3(B-\Dpi)d}{\sigma_{\max}}\log\left( 1 + \frac{2}{\lambda} \right)+ \frac{(B-\Dpi)}{\sigma_{\max}}\sumh\suml {\alpha_{h,\ell}}
\end{align*}

\textit{Bound on $F_2$.} We follow the analysis of term $F_3$ of~\citep{balagopalan2024minimum} to state the final upper bound on $F_2$ with $H_{\max}=e$ in our case--

\begin{align*}
    F_2 \leq& \frac{128 e(B-\Dpi) C_{\rm opt}d\log d}{\sigma_{\max}\alpha_{\rm opt}\varepsilon_1^2}\left(\frac{\lambda S_*^2}{2}+d\log\left(1+\frac{\sumh\suml \sigma_{h,\ell}^2}{d\lambda}\right)\right)
\end{align*}

Hence, putting all the components in Equation~\eqref{eq:constr-decomp}, we get the final bound on $\Reg^{\rm constr}$ as:
\begin{align*}
    \Reg^{\rm constr}
    \leq& \frac{128 e(B-\Dpi) C_{\rm opt}d\log d}{\alpha_{\rm opt}\varepsilon_1^2}\left(\frac{\lambda S_*^2}{2}+d\log\left(1+\frac{\sumh\suml \sigma_{h,\ell}^2}{d\lambda}\right)\right)\\
    &+12(B-\Dpi)d\log\left( 1 + \frac{2}{\lambda} \right) + (B-\Dpi)\sumh\suml {\alpha_{h,\ell}}\\
    &+\frac{1}{\alpha_{\rm emp}}(B-\Dpi)HL\exp\left(-\frac{(\Dpi+\epsilon\mu_0)^2}{4\sigma_{\min}^2\varepsilon_2 \beta_{h,\ell}(\alpha_{h,\ell})(1+c)^2} \right)\\
    &+\frac{3d(B-\Dpi)}{\alpha_{\rm emp}^2}\left( \frac{1}{\varepsilon_2}\log\left( 1 + \frac{2}{\lambda\varepsilon_2}\right) + \log\left( 1 + \frac{2}{\lambda}\right)\right)\\
    &+3d\frac{(B-\Dpi)}{\varepsilon_2}\log\left( 1 + \frac{2}{\lambda\varepsilon_2}\right)  \\
    &+ 12\Dpi d\log\left(1+\frac{2}{\lambda}\right) + 2\Dpi\suml\sumh \alpha_{h,\ell}\\
    &=  \tilde{\bigO}\left( \frac{B-\Dpi}{(\Dpi+\epsilon\mu_0)^2} d^2\left(\log^2(LH) + \log\left(\sumh\suml \sigma_{h,\ell}^2\right) \right)\right)
\end{align*}
where the last inequality holds due to putting $\varepsilon_2 = \frac{(\Dpi+\epsilon\mu_0)^2}{4\sigma_{\max}^2\beta_{h,\ell}(\alpha_{h,\ell})\log(LH)}$, $c=1$, and $\alpha_{h,\ell} = \frac{1}{LH}$. Recall, $\lambda  = \frac{2S_*^2}{\sigma_{\max}\min\left\{\left( \frac{\Dpi+\epsilon\mu_0}{2} - B\right)^2 , \left( {\Dpi+\epsilon\mu_0} - \frac{B}{2}\right)^2 \right\}}$.

\begin{tcolorbox}[top=1pt,bottom=1pt,left=2pt,right=2pt]
Thus, Algorithm~\ref{alg:episodic-pref-imed-full} and~\ref{alg:med_step} jointly satisfies --
    $$\Reg^{\rm constr} =  \tilde{\bigO}\left( \frac{\sigma_{\max}^2(B-\Dpi)}{(\Dpi+\epsilon\mu_0)^2} d^2\left(\log^2(LH) + \log\left(\sumh\suml \sigma_{h,\ell}^2\right) \right)\right).$$
\end{tcolorbox}

\subsection{Part II: Constraint Satisfying Regret Upper Bound}
\begin{reptheorem}{thm:opt_reg_id}
Thus, Algorithm~\ref{alg:episodic-pref-imed-full} and~\ref{alg:med_step} jointly satisfies --

    \begin{align*}
        \Reg^{\rm safe} = \tilde{\bigO}\left( \frac{d^2\left(\log^2(LH)+\log\left( \suml {(\discount)}^{L-\ell} \sumh \sigma^2_{h,\ell}\right)\right)}{\Delta_{\min}} \right)
    \end{align*}
\end{reptheorem}

\begin{proof}

In this section, we provide a complete proof of the regret upper bound under constraint satisfaction, i.e, upper bound on $\Reg^{\rm opt}$. The proof structure involves two steps:

\begin{enumerate}
    \item Then for the upper bound, we first define the conditioning event that will be used to decompose the regret expression further down the line.
    \item We bound each part of the decomposition step by step to get a final accumulated upper bound.
\end{enumerate}

\subsubsection{Step 1: Defining the Conditioning Events}
To prove upper bound on $\Reg^{\rm safe}$, we recall the events defined in Part I (Definition~\ref{def:cond_events}) with additional peeling events.
\begin{align*}
        &\cU_{h,\ell}(a) \defn \left\{ \estgaprs \ge \frac{\truegaprs}{1+c} \right\} \quad\quad \text{for some } c\geq 0 \notag\\
        & \cE_{h,\ell} \defn \left\{ |B_{h,\ell}|>0\right\}, \quad \text{where } B_{h,\ell} = \bigl\{a:\|\phi_{a,h,\ell}\|^{2}_{(V_{h,\ell}^{\gamma_{\rm decay}})^{-1}}>1\bigr\}\notag\\
        & \cC_{h,\ell,k}(A_{h,\ell}) \defn \left\{\|\phi_{a,h,\ell}\|_{(\gram{h}{\ell})^{-1}} > \varepsilon_k \right\}\\
        & \cD_{h,\ell,k}(A_{h,\ell}) \defn \left\{ (\Dpirs+\epsilon\mu_0)2^{-k} < \bar{\Delta}_{A_{h,\ell},h,\ell} < (\Dpirs+\epsilon\mu_0)2^{-k+1} \right\} \\
        &\overline{\cD_{h,\ell,K}(A_{h,\ell})} \defn \left\{ \bar{\Delta}_{A_{h,\ell}} \leq (\Dpirs+\epsilon\mu_0)2^{-K} \right\}\\
        &\cJ_{h,\ell}(a) \defn \left\{ \theta^{\top}\phi_{a_{h,\ell}^*,h,\ell} - \thetahat_{h,\ell}^{\top}\phi_{a_{h,\ell}^*,h,\ell} \leq \varepsilon_{2,k} \right\}\,,
\end{align*}

where we define $\bar{\Delta}_{A_{h,\ell},h,\ell} \defn \max\{\truegaprs,\Dpirs+\epsilon\mu_0\}$.

\subsubsection{Decomposition of $\Reg^{\rm safe}$}

We follow the proof structure from~\citep{balagopalan2024minimum} for this part. We use similar peeling technique and decomposition of regret using conditioning events. As we have constraint satisficing in this case, we use $(\Dpirs+\epsilon\mu_0)$ as the upper bound for $\truegaprs$.

\textit{Final Regret Bound under Constraint Satisficing.}
\begin{align*}
        \frac{1}{\sigma_{\max}}\Reg^{\rm safe} \leq& 18(\sigma_{\max}\Dpi+\epsilon\mu_0)d\log\left(1+\frac{2}{\lambda}\right)\\
        &+ LH(\sigma_{\max}\Dpi+\epsilon\mu_0)2^{-P} \indicator\left\{ (\sigma_{\max}\Dpi+\epsilon\mu_0)2^{-P}>\Delta_{\min}\right\}\\
        &+\frac{12d(\sigma_{\max}\Dpi+\epsilon\mu_0)}{2^{-P}\varepsilon}\log\left(1+\frac{2}{\lambda2^{-2P }\varepsilon}\right)\\
        &+\frac{1}{\alpha_{\rm emp}}4LH(\sigma_{\max}\Dpi+\epsilon\mu_0)\exp\left( -\frac{(\sigma_{\max}\Dpi+\epsilon\mu_0)^2}{16\varepsilon\beta_{h,\ell}(\alpha_{h,\ell})}\right)\\
        &+ \frac{1}{\alpha_{\rm emp}^2}\Bigg( \frac{12d(\sigma_{\max}\Dpi+\epsilon\mu_0)}{2^{-P}\varepsilon}\log\left(1+\frac{2}{\lambda2^{-2P}\varepsilon}\right)\\
        &+6(\sigma_{\max}\Dpi+\epsilon\mu_0)d\log\left(1+\frac{2}{\lambda}\right)\Bigg)\\
        &+ 2(\sigma_{\max}\Dpi+\epsilon\mu_0)\log(LH)\\
        &+ \frac{192\beta_{h,\ell}(\alpha_{h,\ell})d}{(\sigma_{\min}\Dpi+\epsilon\mu_0)2^{-P}}\log\left( 1+\frac{32\beta_{h,\ell}(\alpha_{h,\ell}}{\lambda(\sigma_{\min}\Dpi+\epsilon\mu_0)^2 2^{-2P}}\right)\\
        &+ \frac{512d\log(d) e C_{\rm opt}}{\alpha_{\rm opt}(\sigma_{\min}\Dpi+\epsilon\mu_0)2^{-P}}\left( \frac{\lambda S_*}{2}+d\log\left( 1 +\frac{\suml\sumh \sigma_{a,h,\ell}^2}{d\lambda} \right)\right)
    \end{align*}
\end{proof}

Putting $\epsilon = \frac{(\sigma_{\min}\Dpi+\epsilon\mu_0)^2}{16\beta_{h,\ell}(\alpha_{h,\ell})}$

\begin{tcolorbox}[top=1pt,bottom=1pt,left=2pt,right=2pt]
Algorithm~\ref{alg:episodic-pref-imed-full} and~\ref{alg:med_step} jointly satisfy--
\begin{align*}
    \Reg^{\rm safe} = \tilde{\bigO}\left( \kappa\frac{d^2\left(\log^2(LH)+\log\left( \sum\sumh \sigma^2_{a,h,\ell}\right)\right)}{\Delta_{\min}} \right) 
\end{align*}
    where $\kappa \defn \frac{\max_{a,h,\ell}\sigma_{a,h,\ell}^2}{\min_{a,h,\ell}\sigma_{a,h,\ell}^2}$.
\end{tcolorbox}

\section{Upper Bound on Expected Constraint Violation}

In this section, we prove the upper bound guaranty on the expected number of constraint violation of \framework.

\begin{reptheorem}{thm:cont_viol}

\framework~ enjoys  
    \begin{align*}
        \text{Violation}(\bpi_0) = \tilde{\bigO}\left( d\right)\,,
    \end{align*}
\end{reptheorem}

\begin{proof}
    We start from the definition of expected constraint violation--
    \begin{align*}
        \text{Violation}(\bpi_0) = \expect\left[ \indicator\left\{ \cV_{h,\ell}(A_{h,\ell}) \right\} \right]
    \end{align*}
From the standard regret analysis in Section~\ref{supp:regret_ub}, we know $\text{Violation}(\bpi_0) = A_1$. 

Thus, setting $\lambda = \frac{2S_*^2}{\min\left\{\left( \frac{\Dpi+\epsilon\mu_0}{2} - B\right)^2 , \left( {\Dpi+\epsilon\mu_0} - \frac{B}{2}\right)^2 \right\}}$ to get the final bound on $B_2$ as:
\begin{align*}
     \text{Violation}(\bpi_0)  \leq   12d\log\left( 1+\frac{\min\left\{\left( \frac{\Dpi+\epsilon\mu_0}{2} - B\right)^2 , \left( {\Dpi+\epsilon\mu_0} - \frac{B}{2}\right)^2 \right\}}{S_*}\right) + 2 = \tilde{\bigO}(d)\,.
\end{align*}
\end{proof}\clearpage
\section{Discussion on the Lagrangian Dual Variable}

\paragraph{Lyapunov Drift for the Dual Variable.} Recall the dual update with $\rho \defn \frac{\epsilon}{(1-\epsilon)}$ in Algorithm~\ref{alg:episodic-pref-imed-full}:
\begin{equation}\label{eq:dual-update}
  \nu_{h+1,\ell} = \max\!\bigl\{0,\;\nu_{h,\ell} + \eta_{h}(\xi_{h,\ell} - \rho)\bigr\},
  \qquad
  \xi_{h,\ell} := \indicator\left[\inner{\hat\theta_{h,\ell}}{\phi_{A_{h,\ell},h}} < \tau_\ell\right],
\end{equation}
with $\eta_{h,\ell} = \eta/\sqrt{h}$, $\eta > 0$ to be chosen.
Note, $\xi_{h,\ell} \in \{0,1\}$ regardless of the reward distribution.
Thus, the dual update depends only on whether the \emph{estimated} reward crosses the threshold,
not on the magnitude of the reward itself. 

\begin{lemma}[Lyapunov drift inequality]\label{lem:lyapunov}
  For all $t \ge 1$:
  \begin{align*}
    \E[\nu_{h+1,\ell}^2 - \nu_{h,\ell}^2 \mid \cF_t]
    \;\le\; 2\eta_{h,\ell}\bigl(\E[\xi_{h,\ell} \mid \cF_{h,\ell}] - \rho\bigr)\nu_{h,\ell} + \eta_{h,\ell}^2.
  \end{align*}
\end{lemma}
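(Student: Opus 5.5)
The plan is to use the elementary fact that projection onto $[0,\infty)$ is nonexpansive, square the update, and then take conditional expectations. Write $x \defn \nu_{h,\ell} + \eta_{h,\ell}(\xi_{h,\ell}-\rho)$. By \eqref{eq:dual-update}, $\nu_{h+1,\ell} = \max\{0,x\}$. Since $\nu_{h,\ell}\ge 0$ (the iterates are initialised at $\nu_0>0$ and the update keeps them nonnegative), we get $\max\{0,x\}^2 \le x^2$ for every real $x$. Expanding gives
\begin{align*}
\nu_{h+1,\ell}^2 \le \nu_{h,\ell}^2 + 2\eta_{h,\ell}(\xi_{h,\ell}-\rho)\nu_{h,\ell} + \eta_{h,\ell}^2(\xi_{h,\ell}-\rho)^2 .
\end{align*}

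The second step controls the quadratic term deterministically. Because $\xi_{h,\ell}\in\{0,1\}$ and $\rho = \epsilon/(1-\epsilon)$, we have $|\xi_{h,\ell}-\rho|\le \max\{\rho,1-\rho\}$. This is at most $1$ whenever $\rho\le 1$, that is, whenever $\epsilon\le 1/2$, and so $(\xi_{h,\ell}-\rho)^2\le 1$. For larger $\epsilon$, one would either restrict to the regime $\rho\in[0,1]$, which is the natural regime since $\rho$ plays the role of a target violation frequency, or absorb a constant $\max\{1,\rho\}^2$ into $\eta_{h,\ell}^2$. I would state the restriction $\rho\le1$ explicitly as the working assumption.

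The third step conditions on the history $\cF_{h,\ell}$, by which I mean the sigma-algebra generated by everything observed before $A_{h,\ell}$ is drawn. I read the $\cF_t$ in the statement as this $\cF_{h,\ell}$. Under this filtration, $\nu_{h,\ell}$ and $\eta_{h,\ell}=(\ell h)^{-1/2}$ are $\cF_{h,\ell}$-measurable, so both can be pulled out of the conditional expectation. Linearity then yields $\E[\nu_{h+1,\ell}^2-\nu_{h,\ell}^2\mid\cF_{h,\ell}] \le 2\eta_{h,\ell}(\E[\xi_{h,\ell}\mid\cF_{h,\ell}]-\rho)\nu_{h,\ell}+\eta_{h,\ell}^2$, which is the claim.

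The only delicate point is measurability. The indicator $\xi_{h,\ell}$ involves $\hat\theta_{h,\ell}$, which in Algorithm~\ref{alg:episodic-pref-imed-full} is updated after $Y_{h,\ell}$ is observed, and also involves the random action $A_{h,\ell}$. So $\xi_{h,\ell}$ is not $\cF_{h,\ell}$-measurable, and it is important to leave it inside the expectation. This is harmless, because the inequality only uses the multiplier $\nu_{h,\ell}$ and the step size as predictable quantities. One must also check the episode boundary convention $\nu_{1,\ell}=\nu_{H,\ell-1}$: this only relabels indices and preserves predictability.
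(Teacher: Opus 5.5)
Your proof is correct and follows the paper's argument: use $\max\{0,x\}^2\le x^2$, expand the square, bound $(\xi_{h,\ell}-\rho)^2\le 1$, and take conditional expectations with $\nu_{h,\ell}$ and $\eta_{h,\ell}$ treated as predictable. Your caveat that $(\xi_{h,\ell}-\rho)^2\le 1$ actually requires $\rho=\epsilon/(1-\epsilon)\le 1$, i.e.\ $\epsilon\le 1/2$, is a genuine refinement the paper glosses over (it simply asserts $\rho\in[0,1)$), while the nonnegativity of $\nu_{h,\ell}$ you invoke is not needed, since $\max\{0,x\}^2\le x^2$ holds for every real $x$.
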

\begin{proof}
  Using $\max\{0,x\}^2 \le x^2$:
  \begin{align*}
    \nu_{h+1,\ell}^2 \le \bigl(\nu_{h,\ell} + \eta_{h,\ell}(\xi_{h,\ell} - \rho)\bigr)^2
    = \nu_{h,\ell}^2 + 2\eta_{h,\ell}(\xi_{h,\ell} - \rho)\nu_{h,\ell} + \eta_{h,\ell}^2(\xi_{h,\ell} - \rho)^2.
  \end{align*}
  Since $\xi_{h,\ell} \in \{0,1\}$, we have $(\xi_{h,\ell} - \rho)^2 \le 1$.
  Taking conditional expectation gives the result.
\end{proof}

\begin{lemma}[Dual variable growth]\label{lem:nu-bound}
  With $\nu_0 = 0$ and $\eta_{h,\ell} = \eta/\sqrt{\ell h}$, we have $\nu_{h,\ell} \le 2\eta\sqrt{\ell h}$.
\end{lemma}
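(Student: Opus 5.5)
The proof is a purely deterministic induction on the within-episode index $h$, for fixed $\ell$. It uses only that each dual increment has bounded size, and none of the Lyapunov structure of Lemma~\ref{lem:lyapunov}. I read the hypothesis $\nu_0=0$ as fixing the dual variable to zero at the start of the recursion in which the bound is indexed, so the base case is $\nu_{0,\ell}=0$.

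\emph{Caveat.} If instead the algorithm carries $\nu$ over from the previous episode, as in the initialisation line of Algorithm~\ref{alg:episodic-pref-imed-full}, the base case $\nu_{0,\ell}=0$ fails. The carried value could be of order $\eta\sqrt{(\ell-1)H}$, which exceeds $2\eta\sqrt{\ell h}$ for small $h$. The argument below therefore does not establish the stated bound under carry-over.

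\textbf{One-step increment bound.} Let $\nu_{h,\ell}$ denote the value after the update that uses step size $\eta_{h,\ell}=\eta/\sqrt{\ell h}$. From the update \eqref{eq:dual-update}, and because $\max\{0,\cdot\}$ is monotone and $1$-Lipschitz with $\nu_{h-1,\ell}\ge 0$, we get
\begin{align*}
0\le \nu_{h,\ell}\le \nu_{h-1,\ell}+\eta_{h,\ell}\,(\xi_{h,\ell}-\rho)^+\le \nu_{h-1,\ell}+\frac{\eta}{\sqrt{\ell h}}.
\end{align*}
This uses $\xi_{h,\ell}\in\{0,1\}$ and $\rho\ge 0$, so $(\xi_{h,\ell}-\rho)^+\le 1$. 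The bound holds pathwise, whatever the reward or constraint outcomes.

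\textbf{Induction.} The base case is $\nu_{0,\ell}=0\le 0$. For the inductive step, assume $\nu_{h-1,\ell}\le 2\eta\sqrt{\ell(h-1)}$. It then suffices to show
\begin{align*}
2\sqrt{\ell h}-2\sqrt{\ell(h-1)}\ \ge\ \frac{1}{\sqrt{\ell h}}.
\end{align*}
Rationalising the left-hand side gives
\begin{align*}
2\sqrt{\ell h}-2\sqrt{\ell(h-1)}=\frac{2\ell}{\sqrt{\ell h}+\sqrt{\ell(h-1)}}\ \ge\ \frac{2\ell}{2\sqrt{\ell h}}=\frac{\sqrt{\ell}}{\sqrt{h}}.
\end{align*}
Finally $\sqrt{\ell}/\sqrt{h}\ge 1/\sqrt{\ell h}$, since this is equivalent to $\ell\ge 1$. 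The induction closes and yields $\nu_{h,\ell}\le 2\eta\sqrt{\ell h}$ for all $h$. In fact the sharper bound $2\eta\sqrt{h/\ell}$ also follows, because $\sum_{h'\le h}1/\sqrt{h'}\le 2\sqrt h$.

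\textbf{Main obstacle.} The delicate point is the indexing convention, not the analysis. One must match the step size used in the update producing $\nu_{h,\ell}$ to $\eta/\sqrt{\ell h}$, rather than $\eta/\sqrt{\ell(h-1)}$. With the latter, the case $\ell=1$ of the key inequality fails, since $2(\sqrt{h+1}-\sqrt h)<1/\sqrt h$.
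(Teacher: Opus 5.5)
Your argument is correct under the reading you adopt, namely that the dual variable is reset to $0$ at the start of each episode. It reaches the statement by a different route from the paper.

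Both proofs use the same pathwise increment bound $\nu_{h+1,\ell}\le\nu_{h,\ell}+\eta_{h,\ell}$. The paper then sums increments over all rounds of all episodes, consistent with the carry-over $\nu_{h,\ell}=\nu_{\ell-1}$ in Algorithm~\ref{alg:episodic-pref-imed-full}. It concludes $\nu_{h,\ell}\le\sum_s\eta/\sqrt{s}\le 2\eta\sqrt{LH}$. That is a horizon-uniform bound, not the index-dependent $2\eta\sqrt{\ell h}$ of the statement. It also tacitly switches to a global-time step $\eta/\sqrt{s}$; with $\eta_{h,\ell}=\eta/\sqrt{\ell h}$, the double sum is only bounded by $4\eta\sqrt{LH}$.

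Your caveat explains why nothing sharper is possible under carry-over. Take $\epsilon=0$ and a path with $\xi_{h,\ell}\equiv1$. The value entering episode $\ell$ is then of order $\eta\sqrt{(\ell-1)H}$. For $H\ge2$ and moderately large $\ell$, this already exceeds $2\eta\sqrt{\ell}$. So the paper's route matches the algorithm as written but only gives $O(\eta\sqrt{LH})$ growth. Yours delivers the stated bound, indeed the sharper $2\eta\sqrt{h/\ell}$, but for a per-episode-reset variant with $\nu_0=0$ rather than the algorithm's $\nu_0>0$ input.

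One small correction: the indexing obstacle in your last paragraph is not real. Consider the convention $\nu_{h+1,\ell}=\max\{0,\nu_{h,\ell}+\eta_{h,\ell}(\xi_{h,\ell}-\rho)\}$. The failure there comes only from inducting on the loose hypothesis $2\eta\sqrt{\ell h}$. Summing the increments directly gives $\nu_{h,\ell}\le\sum_{h'<h}\eta/\sqrt{\ell h'}\le 2\eta\sqrt{(h-1)/\ell}\le2\eta\sqrt{\ell h}$ under that convention too.
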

\begin{proof}
  Since $\xi_{h,\ell} \le 1$ we have $\nu_{h+1,\ell} \le \nu_{h,\ell} + \eta_{h,\ell}$, so
  $\nu_{h,\ell} \le \suml\sumh\eta/\sqrt{s} \le 2\eta\sqrt{LH}$.
\end{proof}
The dual update in Equation~\eqref{eq:dual-update} is exactly projected online gradient descent
(OGD) on the sequence of losses $\ell_{h,\ell}(\nu) \defn  -\nu(\xi_{h,\ell} - \rho)$ over $\nu\in[0,\infty)$.
The gradient at $\nu_{h,\ell}$ is $-(\xi_{h,\ell}-\rho)$, which lies in $(-1, 1]$ since $\xi_{h,\ell}\in\{0,1\}$
and $\rho\in[0,1)$. 

\begin{lemma}[OGD regret bound]\label{lem:ogd}
  For any $\nu^* \ge 0$:
  \begin{align*}
    \suml\sumh \cL_t(\nu_{h,\ell}) - \suml\sumh \cL_t(\nu^*)
    \;\le\; \frac{{\nu^*}^2}{2\eta_{LH}} + \eta\sqrt{LH} .
  \end{align*}
\end{lemma}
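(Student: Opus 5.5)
The plan is the standard potential-function analysis of projected online gradient descent on the half-line $[0,\infty)$, carried out along the single flattened time index $t \leftrightarrow (h,\ell)$ in lexicographic order. Since the update in Equation~\eqref{eq:dual-update} carries $\nu_{H,\ell}$ into $\nu_{1,\ell+1}$, there is one scalar sequence $\nu_t$, $t=1,\dots,LH$. Write $g_t \defn -(\xi_t-\rho)$ for the gradient of the linear loss $\cL_t(\nu)=-\nu(\xi_t-\rho)$. Then $\nu_{t+1}=\max\{0,\nu_t-\eta_t g_t\}$. Convexity (here linearity) of each $\cL_t$ gives
\[
\cL_t(\nu_t)-\cL_t(\nu^*) = g_t(\nu_t-\nu^*).
\]
It therefore suffices to bound $\sum_t g_t(\nu_t-\nu^*)$.

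First, I use non-expansiveness of the projection onto $[0,\infty)$. Because $\nu^*\ge 0$, we have $|\max\{0,x\}-\nu^*|\le |x-\nu^*|$, and hence
\[
(\nu_{t+1}-\nu^*)^2 \le (\nu_t-\nu^*)^2 - 2\eta_t g_t(\nu_t-\nu^*) + \eta_t^2 g_t^2 .
\]
Rearranging and using $|g_t|\le 1$, since $\xi_t\in\{0,1\}$ and $\rho\in[0,1)$, this gives
\[
g_t(\nu_t-\nu^*) \le \frac{(\nu_t-\nu^*)^2-(\nu_{t+1}-\nu^*)^2}{2\eta_t} + \frac{\eta_t}{2}.
\]
This is the same computation as in Lemma~\ref{lem:lyapunov}, now centred at $\nu^*$ instead of $0$.

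Second, I sum over $t$ and use Abel summation with non-increasing step sizes $\eta_t$. With $D_t \defn (\nu_t-\nu^*)^2$, the telescoping part becomes
\[
\frac{D_1}{2\eta_1}+\sum_{t\ge2} D_t\Big(\frac{1}{2\eta_t}-\frac{1}{2\eta_{t-1}}\Big).
\]
Each increment is nonnegative. Bounding every $D_t$ by a uniform radius $D^2$ collapses this to $D^2/(2\eta_{LH})$. The natural choice is $D\approx\nu^*$, using $\nu_1=\nu_0=0$ so that $D_1={\nu^*}^2$. The step-size part is handled by $\sum_{t\le LH}\eta/(2\sqrt t)\le \eta\sqrt{LH}$.

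The main obstacle is justifying the uniform bound $D_t\lesssim {\nu^*}^2$ in the Abel step. On $[0,\infty)$ the domain is unbounded, and Lemma~\ref{lem:nu-bound} only gives $\nu_t\le 2\eta\sqrt{t}$, which may exceed $\nu^*$. I would try two routes.

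The first route takes $\eta_t$ constant within the horizon, equal to $\eta_{LH}$. Then the telescoping is exact and yields ${\nu^*}^2/(2\eta_{LH})+\sum_t\eta_t/2$ with no diameter term. For the decreasing schedule $\eta_{h,\ell}=\eta/\sqrt{\ell h}$, this has to be reconciled with the indexing.

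The second route keeps the decreasing schedule. I would restrict to the effective domain $[0,2\eta\sqrt{LH}]$ from Lemma~\ref{lem:nu-bound}, so that $D_t\le \max\{\nu^*,2\eta\sqrt{LH}\}^2$. This yields the stated bound up to absorbing $\eta^2 LH/\eta_{LH}$-type terms into the $\eta\sqrt{LH}$ term, with constants adjusted.

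I would present the first route as the primary argument and note the constant-factor caveat for the second.
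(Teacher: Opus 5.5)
\textbf{There is a genuine gap: neither route proves the inequality for the iterates in the statement, and for the paper's decreasing schedule the inequality is false.}

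Your per-step inequality and the Abel-summation bookkeeping are correct. You also isolate the real crux: you need a uniform bound $D_t=(\nu_t-\nu^*)^2\le {\nu^*}^2$ for every $t$, not just $t=1$. The paper's own proof simply invokes Hazan's OGD theorem (Theorem 3.1). That theorem presupposes a bounded domain of diameter $D$ and yields a $D^2/\eta_T$ term. Writing ${\nu^*}^2$ in place of $D^2$ on $[0,\infty)$ is exactly the unjustified step.

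\emph{Route 1} is a valid proof of a different statement. With the constant step $\eta_{LH}=\eta/\sqrt{LH}$ the telescoping is exact and gives ${\nu^*}^2/(2\eta_{LH})+\eta\sqrt{LH}/2$. However, the lemma concerns the $\nu_{h,\ell}$ produced by the dual update with $\eta_{h,\ell}=\eta/\sqrt{\ell h}$, so this amounts to changing the algorithm.

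\emph{Route 2} fails quantitatively. The term $\max\{\nu^*,2\eta\sqrt{LH}\}^2/(2\eta_{LH})$ is of order $\eta(LH)^{3/2}$. That is a factor $LH$ larger than $\eta\sqrt{LH}$, not something absorbed by adjusting constants.

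Separately, $\eta/\sqrt{\ell h}$ is not monotone in the flattened lexicographic order: it jumps back up to $\eta/\sqrt{\ell+1}$ at each new episode. So your claim that every increment $\frac{1}{2\eta_t}-\frac{1}{2\eta_{t-1}}$ is nonnegative also fails once $L,H>1$.

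No argument can rescue the decreasing schedule, because the inequality is false there. Take:
\begin{itemize}
\item $L=1$, so $\eta_t=\eta/\sqrt{t}$;
\item $\rho=1/2$, $\nu_1=0$, $\nu^*=0$;
\item $\xi_t=1$ for $t\le T/4$ and $\xi_t=0$ afterwards.
\end{itemize}
The proof can only use $\xi_t\in\{0,1\}$. Moreover, ``many estimated violations early, none later'' is the natural pattern for this algorithm.

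Then $\nu_t\approx \eta\sqrt{t}$ during the first phase. During the second phase $\nu_t\approx \eta(\sqrt{T}-\sqrt{t})$, returning to $0$ near $t=T$. Writing $g_t=\rho-\xi_t$, so that $\mathcal{L}_t(\nu)=g_t\nu$, we get
\[
\sum_{t\le T}\mathcal{L}_t(\nu_t)-\sum_{t\le T}\mathcal{L}_t(0)=\sum_{t\le T} g_t\nu_t\approx -\tfrac{1}{24}\eta T^{3/2}+\tfrac{1}{12}\eta T^{3/2}=\tfrac{1}{24}\eta T^{3/2}.
\]
The claimed right-hand side is $0+\eta\sqrt{T}$, which this exceeds for large $T$.

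So the statement holds only in one of two forms:
\begin{itemize}
\item with a constant step size, which is your Route 1 and requires modifying the dual update in the algorithm; or
\item with ${\nu^*}^2$ replaced by $\max_t(\nu_t-\nu^*)^2$, which with Lemma~\ref{lem:nu-bound} gives only an $O(\eta(LH)^{3/2})$ bound for the given schedule.
\end{itemize}
Your write-up should state this explicitly rather than present Route 2 as a constant-factor caveat.
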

\begin{proof}
  The loss $\cL_{h,\ell}(\nu) = -\nu(\xi_{h,\ell} - \rho)$ is linear in $\nu$, so it is
  convex, and its gradient is $\nabla_t = \nabla_\nu \cL_{h,\ell}(\nu_{h,\ell}) = -(\xi_{h,\ell} - \rho)$.
  Since $\xi_{h,\ell} \in \{0,1\}$ and $\rho \in [0,1)$, we have $\nabla_{h,\ell} \in [-(1-\rho), \rho]
  \subset (-1,1]$, so $\nabla_{h,\ell}^2 \le 1$. We use~\cite[Theorem 3.1]{hazan2016introduction} with $\nu_{1,1} = 0$ to directly get the desired result.
\end{proof}
\clearpage
\section{Dri-IMED: Drift Adaptive Indexed Minimum Empirical Divergence}\label{supp:dri-imed}

For empirical evaluation, in this section we provide the implemented pseudocode of the \texttt{Dri-IMED} algorithm, that is an Indexed Minimum Empirical Divergence version of MED strategy used in this paper. The main difference from Algorithm~\ref{alg:med_step} is in Line 4 of \texttt{Dri-IMED}. It computes the Lagrangian dual-augmented IMED index being optimistic about the action, but pessimistic with respect to the constraint. Whenever a pulled arm violates the constraint the IMED index is penalised. At the end it plays the arm that has the minimum index.   

\begin{algorithm}[h]
    \caption{\texttt{Dri-IMED}: \textbf{Dri}ft adaptive \textbf{I}ndexed \textbf{M}inimum \textbf{E}mpirical \textbf{D}ivergence (IMED) Step}
    \label{alg:imed_step}

\begin{algorithmic}[1]
    \REQUIRE Episode $\ell \in [L]$, user id $h\in[H]$, user's rescaled features $\left\{\phi_{a,h,\ell}\right\}_{a=1}^{|\cA|}$, $\thetahat_{h,\ell}, V_{h,\ell}^{\gamma_{\rm decay}},\lambda$.
    \REQUIRE Constraint threshold $\tau_\ell$, Lagrangian multiplier $\nu_\ell$.
    \STATE  \textcolor{blue}{\textbf{Compute confidence radius and upper and lower confidence bound:}} For each arm $a\in\cA$,
     \begin{align*}
        \beta_{h,\ell}(\alpha_{h,\ell})  &= \left( \sqrt{\log\frac{\det V_{h,\ell}^{\gamma_{\rm decay}}}{\det V_0}} + 2\log\frac{1}{\alpha_{h,\ell}} + \sqrt{\lambda} S\right)^2\\
        \text{UCB}(a,h,\ell) &= \langle \hat{\theta}_{h,\ell}, \phi_{a,h,\ell} \rangle + \sqrt{\beta_{h,\ell}(\alpha_{h,\ell})} \|\phi_{a,h,\ell}\|_{(V_{h,\ell}^{\gamma_{\rm decay}})^{-1}} \\
        \text{LCB}(a,h,\ell) &= \langle \hat{\theta}_{h,\ell}, \phi_{a,h,\ell} \rangle - \sqrt{\beta_{h,\ell}(\alpha_{h,\ell})} \|\phi_{a,h,\ell}\|_{(V_{h,\ell}^{\gamma_{\rm decay}})^{-1}}
        \end{align*}
    \STATE Empirical best action: $\hat{a}_{h,\ell} = \argmax_{a\in\cA} \langle \thetahat_{h,\ell}, \phi_{a,h,\ell} \rangle$
    \STATE Maximum empirical reward: $\hat{\mu}_{\max}(h,\ell) = \max_{a\in\cA} \langle \hat{\theta}_{h,\ell}, \phi_{a,h,\ell} \rangle$
    \STATE \textcolor{blue}{\textbf{Lagrangian-augmented IMED index:}} For each arm $ a \in \cA$:
        \begin{align*}
            I_{h,\ell}^{\text{aug}}(a) = \begin{cases}
                \nu_\ell \cdot \max\{0, \tau_\ell - \text{LCB}(a,h,\ell)\} & \text{if } a = \hat{a}_{h,\ell} \\
                N_{h,\ell}(a) \cdot \frac{(\text{UCB}(a,h,\ell) - \hat{\mu}_{\max}(h,\ell))^2}{2\sigma_{h,\ell}^2} & \\
                \quad + \nu_\ell \cdot \max\{0, \tau_\ell - \text{LCB}(a,h,\ell)\} & \text{if UCB}(a,h,\ell) \geq \mu_{\max}(h,\ell) 
            \end{cases}
        \end{align*}
    \STATE Pull $A_{h,\ell} = \argmin_{a \in \cA} I_{h,\ell}^{\text{aug}}(a) $
\end{algorithmic}
\end{algorithm}\clearpage
\section{Experimental Analysis}

\label{app:experiments}

To assess the performance of \texttt{Dri-MED} and \texttt{Dri-IMED}, we conduct 
numerical experiments on a synthetic episodic contextual linear bandit with 
preference feedback and context drift. The environment consists of $H$ users, $A$ arms, and $L$ 
episodes, with a $d$-dimensional parameter $\theta^*$ and $M$-dimensional feedback 
signals. The mean feedback signal $\mathbb{E}[Y_{h,\ell} \mid a] = \Phi(a)\theta^*$ 
is context-independent, while the covariance 
scales with the observed context norm, inducing heteroscedastic noise. User 
preference vectors $\{p_h\}$ are drawn from a Dirichlet distribution and held 
fixed across episodes. 

\paragraph{Setup.} The full environment generation is detailed in Algorithm~\ref{alg:env-gen}.
The baseline policy $\pi_0$ is set as the arm at the median performance quantile
for each user, i.e.\ $\pi_0(h) = \arg\min_a |\phi_{a,h}^\top\theta^\star - q_{0.5}^h|$
where $q_{0.5}^h$ is the median of $\{\phi_{a,h}^\top\theta^\star\}_{a \in [A]}$.
We use the following environment parameters throughout all experiments:
feature dimension $d = 4$, feedback dimension $M = 4$, number of arms $A = 5$,
number of users $H = 10$, number of episodes $L = 1000$, parameter norm $B = 1$,
preference scale $s = 2$, context dimension $D = 2$, context noise $\sigma_C = 1$,
and reward noise $\sigma_r = 0.1$. Results that are presented in Figure~\ref{fig:results} are averaged over $128$ independent seeds.

\begin{algorithm}[htbp]
\caption{Environment Generation}
\begin{algorithmic}[1]
\REQUIRE $d, M, A, H, L, f, \sigma_C, \sigma_r, s, B$
\STATE Draw $\theta^\star \sim \mathcal{N}(0, I_d)$, normalise to $\|\theta^\star\| = B$
\STATE Draw row-normalised $\Phi(a) \in \mathbb{R}^{M \times d}$ and $\Sigma_a = S_a S_a^\top / M$ for each $a \in [A]$
\STATE Draw $\omega_h \sim \mathrm{Dir}(s \cdot p_h)$ with $p_h \sim \mathrm{Dir}(\mathbf{1}_M)$ for each $h \in [H]$
\STATE Set $\kappa_\ell \leftarrow 1 + f(\ell, L)$ for $\ell \in [L]$
\FOR{$\ell = 1$ \TO $L$}
    \STATE Sample $C_{h,\ell} \sim \mathcal{N}(0,\, \sigma_C^2 \kappa_\ell^2\, I_D)$ for each $h \in [H]$
    \STATE Set $\phi_{a,h} \leftarrow \Phi(a)^\top \omega_h$ for all $a \in [A],\, h \in [H]$
    \STATE Draw $Y_{h,\ell} \sim \mathcal{N}\!\left(\Phi(A_{h,\ell})\theta^\star,\; \sigma_r(1+\|C_{h,\ell}\|)\,\Sigma_{A_{h,\ell}}\right)$, set $r_{h,\ell} \leftarrow \omega_h^\top Y_{h,\ell}$
\ENDFOR
\end{algorithmic}
\label{alg:env-gen}
\end{algorithm}

\paragraph{Drifting.} We consider four drift regimes for the context distribution, parameterized by a 
magnitude $\kappa > 0$ and a shared drift direction $v \in \mathbb{R}^C$, 
$\|v\|=1$, so that $\mu_\ell = f(\ell) \cdot v$:
\begin{itemize}
    \item \emph{No drift}: $f(\ell) = 0$,
    \item \emph{Gradual drift}: $f(\ell) = \kappa \cdot \ell / L$,
    \item \emph{Periodic drift}: $f(\ell) = \kappa \cdot \dfrac{1}{2}\left(1 + 
    \sin\!\left(\dfrac{2\pi \ell}{L/50}\right)\right)$,
    \item \emph{Abrupt drift}: $f(\ell) = \kappa \cdot \sum_{i} i \cdot 
    \mathbf{1}\{\ell \geq \tau_i\}$, where $\tau_1 = \lfloor L/3 \rfloor$ and 
    $\tau_2 = \lfloor 2L/3 \rfloor$ are fixed change-points.
\end{itemize}

\paragraph{Baselines.} To the best of our knowledge, no prior algorithm addresses this combined setting of preference structure, heteroscedastic 
noise, and non-stationary context distributions;
Since no existing algorithm is designed for this setting, we evaluate 
\texttt{Dri-MED} and \texttt{Dri-IMED} against four stationary linear bandit 
baselines that ignore both the drift and the preference structure: \texttt{OFUL} 
\citep{abbasi2011improved}, \texttt{LinMED} \citep{balagopalan2024minimum}, 
\texttt{LinIMED} \citep{bian2024indexed}, and \texttt{LinTS} 
\citep{agrawal2013thompson}. 
For all drift types, we set magnitude $\kappa = 100$, so the maximum context scale is
$s_{\max} = 1 + \kappa = 101$.
The per-arm noise standard deviation is
$\sigma_{a,h,\ell} = \sqrt{\omega_h^\top \Sigma_a \cdot \sigma_r \cdot (1 + \|C_{h,\ell}\|) \cdot \omega_h}$,
which is bounded by
\[
    \bar\sigma
    = \sqrt{\sigma_r \cdot (1 + \sigma_c \cdot s_{\max} \cdot \sqrt{D}) \cdot \max_{a,i} [\Sigma_a]_{ii}}
\]
where $\sigma_r = 0.1$ is the reward noise, $\sigma_c = 1.0$ the context noise scale,
$D = 2$ the context dimension, and $\max_{a,i}[\Sigma_a]_{ii}$ the largest diagonal entry
across all arm covariance matrices (evaluated once at initialisation).
In our experiments this yields $\bar\sigma \approx 1.92$, which we use as the
sub-Gaussianity parameter for all algorithms.
We report cumulative regret and cumulative 
satisficing constraint violations, averaged over independent random seeds. All specific hyperparameters used by the algorithm are summarized in Table \ref{tab:hyperparams}

\begin{table}[h]
\centering
\caption{Hyperparameters used per algorithm. Parameters shared by all algorithms:
regularisation $\lambda = \bar\sigma^2 / S^2$, confidence $\delta = 0.01$, $S = 1$.}
\label{tab:hyperparams}
\small
\begin{tabular}{lcccccc}
\toprule
\textbf{Algorithm} & $\gamma_{\text{decay}}$ & $\varepsilon$ & $\nu_0$ & $\alpha_{\text{opt}}$ & $\alpha_{\text{emp}}$ & $C$ \\
\midrule
\texttt{OFUL}     & --     & --    & --    & --      & --      & -- \\
\texttt{LinTS}    & --     & --    & --    & --      & --      & -- \\
\texttt{LinMED}   & --     & --    & --    & $0.99$  & $0.005$ & -- \\
\texttt{LinIMED}  & --     & --    & --    & --      & --      & $30$ \\
\texttt{Dri-MED}  & $0.99$ & $0.1$ & $1.0$ & $0.99$  & $0.005$ & -- \\
\texttt{Dri-IMED} & $0.99$ & $0.1$ & $1.0$ & --      & --      & -- \\
\bottomrule
\end{tabular}
\end{table}

\paragraph{Results.} Figure~\ref{fig:results} report cumulatieve regret and constraint violation rates across all drift regimes. \texttt{Dri-MED} and and \texttt{Dri-IMED} consistently 
achieve very low cumulative regret across all settings, outperforming all 
stationary baselines by a large margin. Among baselines, \texttt{OFUL} performs 
best but still incurs regret an order of magnitude larger than our methods; 
\texttt{LinMED} and \texttt{LinTS} perform worst, highlighting the cost of 
ignoring the preference and heteroscedastic structure. Importantly, both 
\texttt{Dri-MED} and \texttt{Dri-IMED} maintain near-zero true constraint 
violation throughout all episodes, whereas baselines show persistent violations. 
The results are stable across drift regimes: even under abrupt drift with 
magnitude $\kappa = 100$, the performance of our methods is largely unaffected, 
confirming that the drift-adaptive design successfully absorbs the non-stationarity 
through the discounted regression and context-scaled normalisation. 
Figure~\ref{fig:arm_allocation} further illustrates that \texttt{Dri-MED} 
concentrates pulls on the oracle-optimal arm for every user, while \texttt{LinMED} 
spreads mass across suboptimal arms, and $\pi_0$ is always fixed at the median arm.

\paragraph{Ablations.} Figure~\ref{fig:ablation} examines the sensitivity of \texttt{Dri-MED} and 
\texttt{Dri-IMED} to the baseline quantile $q$ and satisficing tolerance 
$\varepsilon$ under abrupt drift. As $q$ increases, the baseline policy 
$\pi_0$ becomes stronger, tightening the constraint and reducing the feasible 
arm set; this forces the algorithm to focus on higher-quality arms earlier, 
yielding lower final regret. Conversely, increasing $\varepsilon$ loosens the 
constraint, allowing more exploration of suboptimal arms and increasing regret. 
\texttt{Dri-IMED} consistently achieves lower regret and tighter interquartile 
ranges than \texttt{Dri-MED} across all ablation settings, suggesting it is 
more sample-efficient under tighter constraints.

\begin{figure}[htbp]
    \centering
    \includegraphics[width=\textwidth]{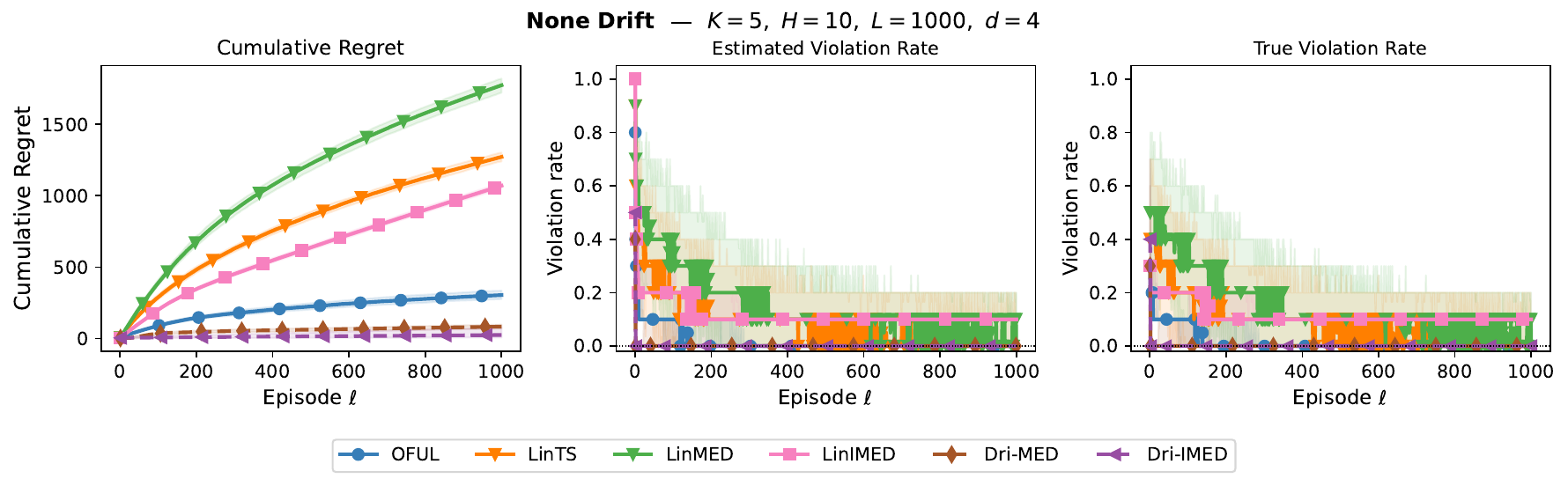}\\[4pt]
    \includegraphics[width=\textwidth]{figures/results_gradual.pdf}\\[4pt]
    \includegraphics[width=\textwidth]{figures/results_periodic.pdf}\\[4pt]
    \includegraphics[width=\textwidth]{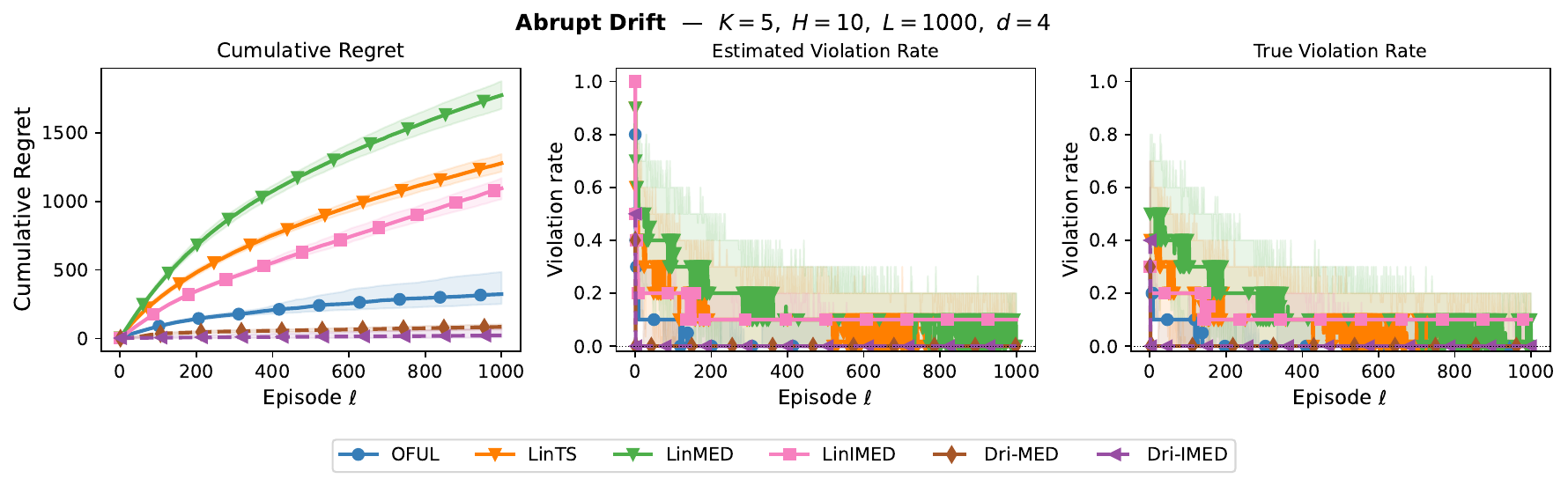}
    \caption{Cumulative regret (left), estimated violation rate (center), and true
    violation rate (right) for each drift regime. Shaded bands show the 5--95\%
    quantile range over 128 seeds.}
    \label{fig:results}
\end{figure}

\begin{figure}[htbp]
    \centering
    \includegraphics[width=\textwidth]{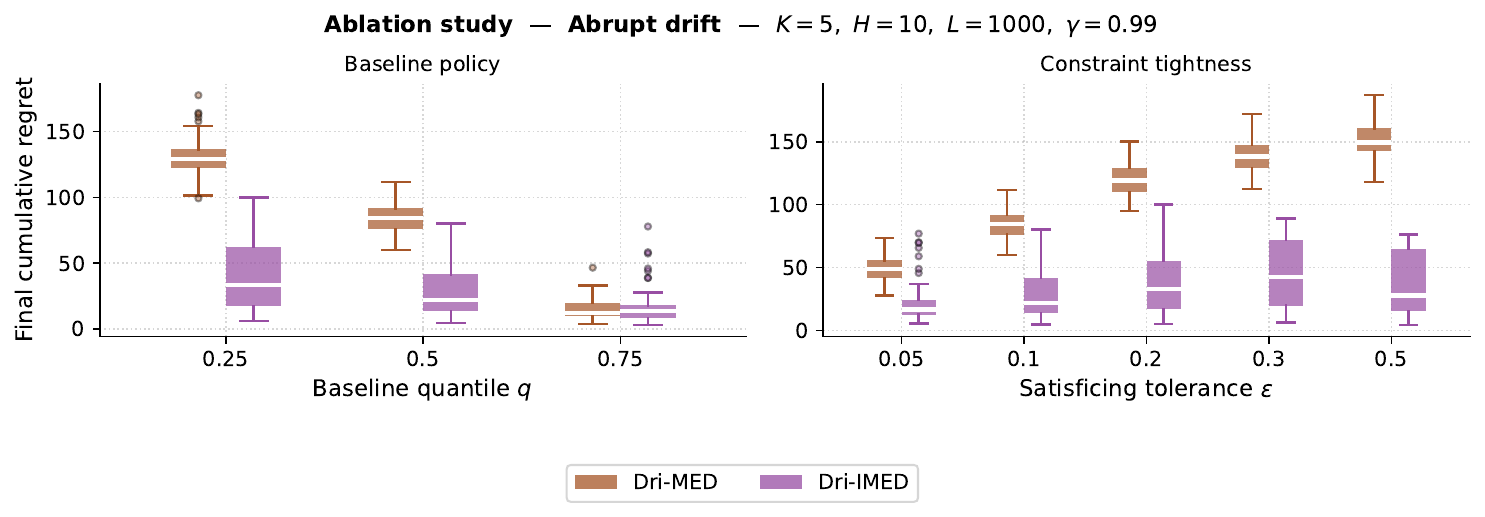}
    \caption{Ablation study on abrupt drift: effect of the baseline policy
    quantile $q$ (left) and the satisficing tolerance $\varepsilon$ (right)
    on the final cumulative regret of \texttt{Dri-MED} and \texttt{Dri-IMED}.}
    \label{fig:ablation}
\end{figure}

\begin{figure}[htbp]
    \centering
    \includegraphics[width=0.7\textwidth]{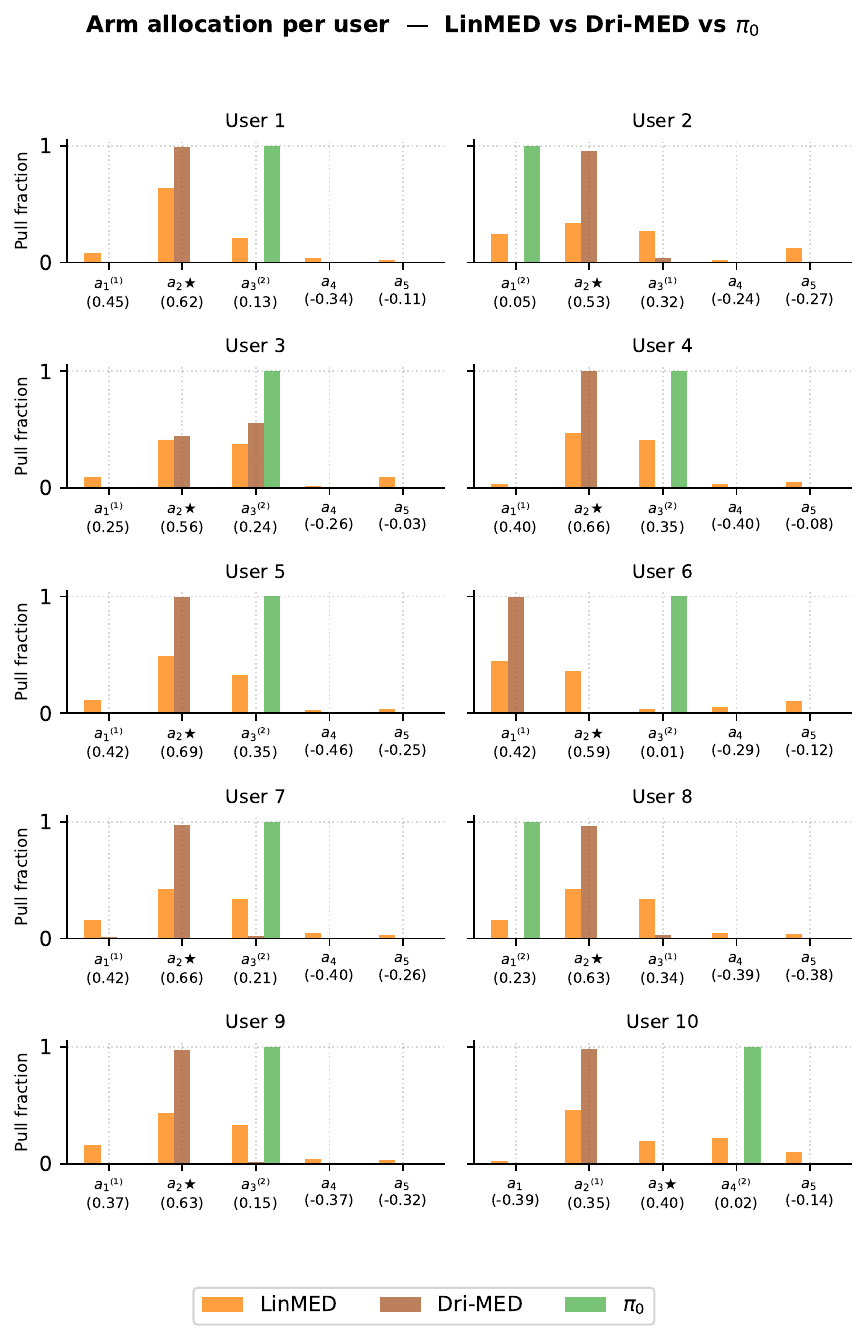}
    \caption{Arm pull fraction per user over 1000 episodes on abrupt drift.
    $\star$ denotes the oracle best arm; ${}^{(1)}$ and ${}^{(2)}$ the first
    and second suboptimal arms. Mean rewards are shown in parentheses.}
    \label{fig:arm_allocation}
\end{figure}

\newpage

\section{{\color{red}ApproxDesign}}\label{supp;app_design}

\begin{algorithm}[h!]
    \caption{\color{red}ApproxDesign}
    \label{alg:approx_design}
\begin{algorithmic}[1]
    \REQUIRE Scaled arm matrix $\mathcal{A}^{(h)} = \{\tilde\phi_a\}_{a=1}^{|\mathcal{A}|}$
    \STATE Project onto active subspace: compute PCA of $\mathcal{A}^{(h)}$, retain eigenvectors with eigenvalue $> \epsilon_0$, obtain low-dimensional matrix $\tilde X$
    \STATE \textbf{Initialise} via volume approximation: select $2d'$ arms $\mathcal{S}_0 \subseteq \mathcal{A}$ spanning the subspace by greedily picking $\arg\max_a \langle \tilde\phi_a, b_i\rangle$ and $\arg\min_a \langle \tilde\phi_a, b_i\rangle$ along each basis direction $b_i$
    \STATE Set $V \leftarrow \sum_{a \in \mathcal{S}_0} \tilde\phi_a \tilde\phi_a^\top$
    \WHILE{$\max_{a \in \mathcal{A}}\|\tilde\phi_a\|^2_{V^{-1}} > 1$}
        \STATE $a^\star \leftarrow \arg\max_{a \in \mathcal{A}} \|\tilde\phi_a\|^2_{V^{-1}}$
        \STATE $V \leftarrow V + \tilde\phi_{a^\star}\tilde\phi_{a^\star}^\top$,\quad $\mathcal{S}_0 \leftarrow \mathcal{S}_0 \cup \{a^\star\}$
    \ENDWHILE
    \STATE $q_h^{\mathrm{opt}}(a) \leftarrow |\{i : \mathcal{S}_0[i] = a\}| \;/\; |\mathcal{S}_0|$
    \RETURN $q_h^{\mathrm{opt}}$
\end{algorithmic}
\end{algorithm}

By the virtue of design of \framework, we retain the exact guaranty of eliminating highly suboptimal arms as same as~\citep{balagopalan2024minimum}. By the virtue of design of~\framework, we retain every guaranty on arm saturation by scaling both $\alpha_{\rm emp}$ and $\alpha_{\rm opt}$ by $\exp(-\bnu_{h,\ell})$ per step. Intuitively, more the value of the Lagrangian dual, more unsafe is the arm. As the Lagrangian dual grows at a rate $\bigO(\sqrt{\ell h})$ (Lemma~\ref{lem:nu-bound}), it ensures a truly unsafe arm is eliminated while augmentation of the arm set. This is a novel adaptation to make the design constraint-aware.

We omit these proofs as they directly follow from~\citep{balagopalan2024minimum} and we do not want to remain repetitive.\clearpage
\section{Useful Technical Results}
\begin{lemma}[Lower bound on gap for violating rounds]\label{lem:gap-lb}
  For any constraint violating index $(h,\ell)$, $\Delta_{A_{h,\ell},h,\ell} \ge \Dpi$.
\end{lemma}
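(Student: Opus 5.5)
The plan is to unfold the definition of a violating index and compare the pulled arm's mean with the best arm's mean through the baseline's mean. Fix $(h,\ell)$ with $\cV_{h,\ell}(A_{h,\ell})$ true, i.e.
\[
{\bf m}_{h,\ell}(A_{h,\ell}) < (1-\epsilon)\,\mathbb{E}_{a\sim\pi_0}[{\bf m}_{h,\ell}(a)].
\]
Write $\mu_{0,h,\ell} \defn \mathbb{E}_{a\sim\pi_0}[{\bf m}_{h,\ell}(a)]$ for the mean reward of the baseline at this index. By Assumption~\ref{ass:cont_indep}, ${\bf m}_{h,\ell}(a)=m_a^\top\bp_h$ does not depend on $\ell$, so $\mu_{0,h,\ell}$ depends only on $h$.

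The key step is the telescoping identity
\[
\Delta_{A_{h,\ell},h,\ell} = \bigl({\bf m}^\star_{h,\ell} - \mu_{0,h,\ell}\bigr) + \bigl(\mu_{0,h,\ell} - {\bf m}_{h,\ell}(A_{h,\ell})\bigr).
\]
The first bracket is the baseline gap at $(h,\ell)$. It is nonnegative because $\mu_{0,h,\ell}$ is a convex combination of arm means and so cannot exceed ${\bf m}^\star_{h,\ell}$. It is also at least $\Dpi$, since the Notations table defines $\Dpi$ as the minimum of this quantity over indices.

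For the second bracket, the violation event gives
\[
\mu_{0,h,\ell} - {\bf m}_{h,\ell}(A_{h,\ell}) > \mu_{0,h,\ell} - (1-\epsilon)\mu_{0,h,\ell} = \epsilon\,\mu_{0,h,\ell}.
\]
This is strictly positive if $\mu_{0,h,\ell}\ge 0$, which the paper implicitly assumes in its remark that ``the mean rewards must stay positive''. Combining the two brackets yields
\[
\Delta_{A_{h,\ell},h,\ell} > \Dpi + \epsilon\mu_0 \ge \Dpi.
\]
This is the strengthened form actually used in the proof of Theorem~\ref{thm:reg_const_id}.

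The main obstacle is bookkeeping rather than mathematics. The appendix mixes the raw gap $\Delta$ with the variance-normalised gap $\underline{\Delta}$. It also mixes $\phi_{\pi_0}$ per user with the averaged $\bar\phi$. I would state the lemma in the raw scale, where the argument is clean. I would then note that the same argument applies verbatim in the normalised features $\phi_{a,h,\ell}$ with $\theta$, because the constraint inequality is scale-consistent within a fixed $(h,\ell)$. If $\mu_0$ could be negative, I would handle that case separately: the bound $\mu_{0,h,\ell} - {\bf m}_{h,\ell}(A_{h,\ell}) \ge 0$ still follows whenever $(1-\epsilon)\mu_{0,h,\ell}\le\mu_{0,h,\ell}$. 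When that also fails, I would add the assumption $\mu_0\ge0$ to the hypotheses.
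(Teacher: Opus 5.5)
Your proof is correct and is essentially the paper's argument: the same telescoping split of $\Delta_{A_{h,\ell},h,\ell}$ through the baseline mean, with the first bracket bounded below by $\Dpi$ because $\Dpi$ is defined as a minimum, and the second shown nonnegative from the violation event. You note that the second bracket exceeds $\epsilon\mu_0$, so its nonnegativity needs $\mu_0\ge 0$ (or $\epsilon=0$); this makes explicit an assumption the paper's underbraced ``$\ge 0$'' uses silently, and your intermediate bound $\Delta_{A_{h,\ell},h,\ell} > \Dpi+\epsilon\mu_0$, which needs no sign condition, is the form the proof of Theorem~\ref{thm:reg_const_id} actually relies on.
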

\begin{proof}
     The proof is straightforward. We decompose the gap as:

\begin{align*}
    \Delta_{A_{h,\ell},h,\ell}&= \theta^\top(\phi_{a^*_\ell,h,\ell}-\phi_{A_{h,\ell},h,\ell})\\
    &=\theta^\top(\phi_{a^*_\ell,h,\ell} -\vp_{\pi_0})
    + \underbrace{\theta^\top(\vp_{\pi_0} - \vp_{A_{h,\ell},h,\ell})}_{\ge 0} \quad \quad \text{, iff } (h,\ell) \in \cV\\
    &\geq \min_{\ell\in[L]}\theta^{\top}(\phi_{a^*_\ell,h,\ell} -\vp_{\pi_0})  =\Dpi
\end{align*}
\end{proof}

\begin{definition}[Good event at the end of $\ell$-th episode]
    \begin{align}
        \cG_2 \defn \left\{h=H, \forall  \ell\ge 1: \|\theta - \thetahat_{\ell}\|_{\graml{\ell}}^2 \leq \beta_{\ell}(\delta_{\ell})  \right\}\label{eq:good_event_l}
    \end{align}
    Note, the index $(H,\ell)$ denotes end of the episode $\ell$. Thus for brevity, we remove the index $h$ for the event $\cG_2$. 
\end{definition}
\begin{lemma}[Heteroscedastic weighted $(H,\ell)$-th confidence set]\label{lem:conf}
    Following the information acquisition rule at the end of episode $\ell\in[L]$, i.e., $\graml{\ell} = \discount\graml{\ell-1}$, we define
\begin{align*}
    {\beta_{\ell}(\delta_{\ell})}^{1/2} \defn 
      \sqrt{2\log\frac{\det \graml{\ell-1}}{\det V_0}
           + 2\log\frac{1}{\delta_{\ell}}}
      + \sqrt{\lambda}\,S_* \,,
\end{align*}

where $\delta_{\ell}\in (0,1)$ is to characterised later on. Then $\Prob(\cG_2) \ge 1 - \suml\delta_{\ell}$.
\end{lemma}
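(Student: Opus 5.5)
The plan is to reduce the episode-level confidence set to the self-normalized martingale bound of \citep[Theorem 2]{abbasi2011improved}, applied to the discounted Gram matrix $\graml{\ell-1}$ and the matching discounted noise process. Then we take a union bound over episodes.

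\textbf{Setting up the martingale.} For a fixed episode $\ell$, write the normalized reward as $\tilde r_{h,s} = \theta^\top\phi_{A_{h,s},h,s} + \tilde\eta_{h,s}$. By Assumption~\ref{ass:linear}, $\tilde\eta_{h,s}$ is conditionally $\cN(0,1)$, hence $1$-sub-Gaussian given the filtration $\cF_{h,s}$ generated by past actions, observations and the current contexts. The heteroskedasticity has already been absorbed by the $1/\sigma_{a,h,\ell}$ normalization in Phase 1. Order the pairs $(h,s)$ lexicographically, with $s\le \ell-1$ and $h\le H$. Expanding the ridge solution gives
\begin{align*}
\thetahat_{\ell-1}-\theta = (\graml{\ell-1})^{-1}\Big(\sum_{s,h}\gamma_{\rm decay}^{\ell-1-s}\phi_{A_{h,s},h,s}\tilde\eta_{h,s}\Big) - \lambda\gamma_{\rm decay}^{\ell-1}(\graml{\ell-1})^{-1}\theta .
\end{align*}
Taking the $\graml{\ell-1}$-norm and using the triangle inequality splits the error into a noise term and a bias term. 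The bias term is at most $\sqrt{\lambda}S_*$, because $\lambda\gamma_{\rm decay}^{\ell-1} I\preceq \graml{\ell-1}$ and $\gamma_{\rm decay}\le1$.

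\textbf{Handling the discount.} This is the main obstacle. With discount, the noise term is weighted by $(\graml{\ell-1})^{-1}$, but the natural covariance of the weighted sum $\sum w_{s}\phi\tilde\eta$ is $\tilde V=\sum w_s^2\phi\phi^\top+\cdot$. In general $\tilde V\neq \graml{\ell-1}$. I would follow \citet{russac2019weighted} and rescale so that within episode $\ell-1$ the most recent weights equal one, i.e.\ work with weights $w_s=\gamma_{\rm decay}^{-(s)}$ up to a common factor. Since $w_s^2\le w_s\cdot \max w$, one gets $\tilde V\preceq \graml{\ell-1}$ up to the common rescaling. Then $\|S\|_{(\graml{\ell-1})^{-1}}$ is controlled by $\|S\|_{\tilde V^{-1}}$, and Theorem~2 of \citet{abbasi2011improved}, applied with regularizer $\lambda\gamma^{\ell-1}I$, yields with probability $1-\delta_\ell$
\[\|S\|_{(\graml{\ell-1})^{-1}}\le \sqrt{2\log\frac{\det \graml{\ell-1}}{\det V_0}+2\log\frac1{\delta_\ell}}.\]
Here the determinant ratio is bounded using $\det\tilde V\le \det\graml{\ell-1}$ and by comparing the regularizer to $V_0$. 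The constant $2$ inside the root matches the stated $\beta_\ell$, which is why the definition carries the factor $2$ on the log-determinant. If the exact weight comparison fails, the fallback is to apply the theorem directly to the process with weights $\gamma^{\ell-1-s}\le1$. That is legitimate because weights that are fixed in advance (predictable) preserve the martingale structure and sub-Gaussianity, at the cost of the same determinant term.

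\textbf{Conclusion.} Combining the noise and bias bounds gives $\|\theta-\thetahat_{\ell-1}\|_{\graml{\ell-1}}\le\beta_\ell(\delta_\ell)^{1/2}$ on an event of probability at least $1-\delta_\ell$ for each $\ell$. A union bound over $\ell\in[L]$ then gives $\Prob(\cG_2)\ge 1-\suml\delta_\ell$. The index convention $\graml{\ell}=\discount\graml{\ell-1}$ at episode boundaries only rescales the matrix and the regularizer together, so it does not affect the ratio argument.
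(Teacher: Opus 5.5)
Your route is the paper's route. The paper's proof simply invokes \citep[Theorem 2]{abbasi2011improved} on the discounted Gram matrix and takes a union bound over $\ell$. Three parts of your proposal are correct and more careful than what the paper writes: the normalized $1$-sub-Gaussian noise, the bias bound $\sqrt{\lambda\discount^{\ell-1}}S_*\le\sqrt{\lambda}S_*$, and $\tilde V\preceq\graml{\ell-1}$ (same regularizer $\lambda\discount^{\ell-1}I$, $w^2\le w$).

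The step that fails is ``comparing the regularizer to $V_0$''. The self-normalized bound with regularizer $\lambda\discount^{\ell-1}I$ gives
\[
\|S\|^2_{(\graml{\ell-1})^{-1}}\le\log\frac{\det\tilde V}{\det(\lambda\discount^{\ell-1}I)}+2\log\frac{1}{\delta_\ell}=\log\frac{\det\tilde V}{\det V_0}+d(\ell-1)\log\frac{1}{\discount}+2\log\frac{1}{\delta_\ell}.
\]
Since $\lambda\discount^{\ell-1}I\preceq V_0=\lambda I$, replacing the denominator by $\det V_0$ goes in the wrong direction. The extra term $d(\ell-1)\log(1/\discount)$ grows linearly in $\ell$. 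The factor $2$ on the log-determinant cannot absorb it; that factor is pure slack, since the theorem only produces coefficient $1$. The reason is that $\log(\det\graml{\ell-1}/\det V_0)$ can be negative: every under-explored direction of $\graml{\ell-1}$ has eigenvalue close to $\lambda\discount^{\ell-1}$. For large $\ell$ the quantity under the root in $\beta_\ell$ can even become negative. The same objection applies to your rescaling to weights $\discount^{-s}$ and to your closing remark. Rescaling multiplies $\det\graml{\ell}$ and its regularizer together but leaves $\det V_0$ fixed, so the ratio is not scale-invariant.

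This cannot be patched while keeping the statement as written, with a decaying regularizer and a fixed $V_0$. Take $d=1$, $\delta_\ell\equiv\delta$, and all arms sharing a scalar feature $\phi_{h,s}$ with $\sum_h\phi_{h,s}^2=\lambda\delta(1-\discount)$ in every episode. As $\ell$ grows:
\begin{itemize}
\item $\graml{\ell-1}\to\lambda\delta$, so the logarithms in $\beta_\ell$ cancel in the limit;
\item the bias $\lambda\discount^{\ell-1}|\theta|/(\graml{\ell-1})^{1/2}$ tends to $0$;
\item $\|\theta-\thetahat_{\ell-1}\|_{\graml{\ell-1}}$ converges in law to $|Z|/\sqrt{1+\discount}$ with $Z\sim\cN(0,1)$.
\end{itemize}
This exceeds the limiting radius $\sqrt{\lambda}S_*$ with probability $\Prob(|Z|>\sqrt{(1+\discount)\lambda}\,S_*)$. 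That probability is a constant independent of $\delta$, and close to one when $\lambda S_*^2$ is small.

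The paper's proof does not confront this either; it just asserts that the cited theorem ``yields the stated bound''. What your argument actually proves is the bound with $\det(\lambda\discount^{\ell-1}I)$ kept in the denominator. Alternatively, adopt the convention of \citet{russac2019weighted} with a non-decaying regularizer:
\[
\graml{\ell-1}=\lambda I+\sum_{s}\discount^{\ell-1-s}\sum_h\phi_{A_{h,s},h,s}\phi_{A_{h,s},h,s}^\top,
\]
i.e., add $(1-\discount)\lambda I$ at each discount step. Then $\tilde V=\lambda I+\sum_s\discount^{2(\ell-1-s)}\sum_h\phi_{A_{h,s},h,s}\phi_{A_{h,s},h,s}^\top\preceq\graml{\ell-1}$ shares the regularizer of $V_0$, and $\log(\det\graml{\ell-1}/\det V_0)\ge 0$. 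Your argument then goes through verbatim, even with coefficient $1$ on the log-determinant.
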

\begin{proof}
     For this proof, we again apply the self-normalized martingale inequality of~\citep[Theorem 2]{abbasi2011improved} to the discounted Gram matrix $\graml{\ell}$. At round $(H,\ell)$, the accumulated noise over $H$ reward signal $\eta_{H,\ell} = \sumh \eta_{h,\ell}$ is $\sumh\sigma^2_{A_{h,\ell},h,\ell}$-sub-Gaussian and $\cF_{\ell}$-measurable. 
     The weighted process $\suml \gamma^\ell  \eta_{H,\ell}\bar{\phi_{\ell}}$ is a martingale, and the theorem yields
    the stated bound round-by-round. Thus, a union bound over all $\ell$ gives $\Prob(\cG_2) \ge
    1 - \suml\delta_{\ell}$.
\end{proof}

\begin{lemma}[Elliptical Potential Count Lemma: Lemma C.2 of~\citep{jun2024noise}]\label{lemma:epc_count}
    Let $x_1, x_2, \ldots, x_t \in \mathbb{R}^d$ be a sequence of vectors with $\left\|x_s\right\|_2 \leq 1, \forall s \in[t]$. Let $V_t=\lambda I+\sum_{s=1}^t x_s x_s^{\top}$ for some $\lambda>0$. Let $J=\left\{s \in[t]:\left\|x_s\right\|_{V_{s-1}^{-1}}^2 \geq L^2\right\}$ for some $L^2 \leq 1$. Then,
$$
|J| \leq 3 \frac{d}{L^2} \ln \left(1+\frac{2}{L^2 \lambda}\right)
$$
\end{lemma}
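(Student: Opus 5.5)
The statement to prove is the Elliptical Potential Count Lemma (Lemma~\ref{lemma:epc_count}). The plan is a standard determinant-potential argument restricted to the rounds in $J$. Let $L^2\le 1$ be the threshold, and enumerate $J=\{s_1<\dots<s_{|J|}\}$. Define the auxiliary matrix
\[
W_k=\lambda I+\sum_{i\le k}x_{s_i}x_{s_i}^\top .
\]
Since $W_{k-1}\preceq V_{s_k-1}$, we have $\|x_{s_k}\|^2_{W_{k-1}^{-1}}\ge \|x_{s_k}\|^2_{V_{s_k-1}^{-1}}\ge L^2$. So along the subsequence each new vector has large weighted norm in its own Gram matrix, and it suffices to count such rounds.

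\textbf{Lower bound on the log-determinant.} By the matrix determinant lemma,
\[
\log\det W_{|J|}-\log\det(\lambda I)=\sum_{k}\log\bigl(1+\|x_{s_k}\|^2_{W_{k-1}^{-1}}\bigr)\ge |J|\log(1+L^2).
\]

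\textbf{Upper bound on the log-determinant.} By AM-GM on the eigenvalues together with $\|x_s\|\le1$,
\[
\log\det W_{|J|}\le d\log\bigl(\lambda+|J|/d\bigr).
\]
Combining the two bounds gives the implicit inequality
\[
|J|\log(1+L^2)\le d\log\Bigl(1+\frac{|J|}{d\lambda}\Bigr).
\]

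\textbf{Solving the implicit inequality.} This is where the constants are decided. Using $\log(1+L^2)\ge L^2/2$ for $L^2\le1$, set $u=|J|/d$ to get $uL^2/2\le\log(1+u/\lambda)$. I would then invoke the elementary fact that $x\le A\log(1+Bx)$ implies $x\le cA\log(1+cAB)$ for a small absolute constant $c$.

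\textbf{Main obstacle.} The crude solution yields $u\le \frac{c}{L^2}\log\bigl(1+\frac{c}{L^2\lambda}\bigr)$ with $c$ around $4$. Tightening this to the stated $3\frac{d}{L^2}\ln\bigl(1+\frac{2}{L^2\lambda}\bigr)$ needs care. I would first try $\log(1+L^2)\ge L^2\log 2$, which gives $A=1/(L^2\log 2)$, and then a one-step fixed-point substitution, checking that $3\log(1+2y)$ dominates the resulting expression for all $y=1/(L^2\lambda)>0$. If the constants do not close, I would fall back to citing Lemma C.2 of \citep{jun2024noise} verbatim, since the lemma is taken from there.
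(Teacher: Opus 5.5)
Your proof is correct, and it is self-contained where the paper gives no proof at all. The paper imports the statement verbatim as Lemma C.2 of \citep{jun2024noise}. Your determinant potential restricted to the subsequence $J$ is the standard argument behind that lemma: $W_{k-1}\preceq V_{s_k-1}$, then the matrix determinant lemma for the lower bound, then AM--GM on the trace for the upper bound.

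The step you left open, the constants, closes along the route you propose, so you do not need the fallback to the citation. Using $\ln(1+L^2)\ge L^2\ln 2$ (by concavity on $[0,1]$), set $u=|J|/d$, $v=u/\lambda$ and $c=1/(\lambda L^2\ln 2)$. Your implicit inequality then reads $v\le c\ln(1+v)$.

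The map $v\mapsto v-c\ln(1+v)$ is strictly convex and vanishes at $0$. So it suffices to show it is nonnegative at $w=2c\ln(1+c)$. That condition is equivalent to $(1+c)^2\ge 1+2c\ln(1+c)$, i.e. to $2+c\ge 2\ln(1+c)$. This holds for all $c\ge 0$, since the difference $2+c-2\ln(1+c)$ has minimum $3-2\ln 2>0$, attained at $c=1$. Hence
$u\le \frac{2}{\ln 2}\,\frac{1}{L^2}\ln\bigl(1+\frac{1}{\ln 2}\,\frac{1}{L^2\lambda}\bigr)\le \frac{3}{L^2}\ln\bigl(1+\frac{2}{L^2\lambda}\bigr)$,
because $2/\ln 2<3$ and $1/\ln 2<2$. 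Multiplying by $d$ gives the stated bound.

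You were also right to abandon the cruder relaxation $\ln(1+L^2)\ge L^2/2$. From that relaxation alone, the constant $3$ cannot be recovered: when $2/(L^2\lambda)\approx 10$, the largest $v$ allowed by the relaxed inequality exceeds the target.

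Compared with the paper's bare citation, your write-up makes the origin of the constants $3$ and $2$ explicit.
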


\begin{lemma}[Elliptical Potential Lemma: Proposition 2 of~\citep{abeille2017linear}]\label{lemma:epc}
    Let $x_1, x_2, \ldots, x_t \in \mathbb{R}^d$ be a sequence of vectors with $\left\|x_s\right\|_2 \leq 1, \forall s \in[t]$. Let $V_t=\lambda I+\sum_{s=1}^t x_s x_s^{\top}$ for some $\lambda>0$. Then,
$$
\sum_{s=1}^t\left\|x_s\right\|_{V_s^{-1}}^2 \leq 2 d \log \left(1+\frac{t}{d \lambda}\right) .
$$
\end{lemma}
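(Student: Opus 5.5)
The plan is the classical determinant-telescoping argument. Its key feature is that the norm is taken with respect to $V_s$, which already contains $x_s$; this makes each summand bounded by $1$ without any clipping. Throughout, write $u_s \defn \|x_s\|^2_{V_{s-1}^{-1}}\ge 0$, with $V_0=\lambda I$.

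\emph{Step 1 (rank-one identities).} From $V_s = V_{s-1}+x_sx_s^\top$, the matrix determinant lemma gives
\[
\det V_s=\det V_{s-1}\,(1+u_s).
\]
The Sherman--Morrison formula gives
\[
V_s^{-1}=V_{s-1}^{-1}-\frac{V_{s-1}^{-1}x_sx_s^\top V_{s-1}^{-1}}{1+u_s},
\]
and hence
\[
\|x_s\|^2_{V_s^{-1}} = u_s-\frac{u_s^2}{1+u_s}=\frac{u_s}{1+u_s}.
\]

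\emph{Step 2 (scalar inequality and telescoping).} For $u\ge 0$ we have $\frac{u}{1+u}\le \log(1+u)$. Both sides vanish at $0$, and the derivative of the right-hand side, $\frac{1}{1+u}$, dominates the derivative of the left-hand side, $\frac{1}{(1+u)^2}$. Summing over $s$ and using Step 1,
\[
\sum_{s=1}^t \|x_s\|^2_{V_s^{-1}}\le \sum_{s=1}^t\log(1+u_s)=\log\frac{\det V_t}{\det \lambda I}.
\]

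\emph{Step 3 (determinant--trace bound).} By AM--GM on the eigenvalues of $V_t$, $\det V_t\le (\mathrm{tr}V_t/d)^d$. Since $\|x_s\|_2\le 1$, we have $\mathrm{tr}V_t\le d\lambda+t$. Therefore
\[
\log\frac{\det V_t}{\lambda^d}\le d\log\!\left(1+\frac{t}{d\lambda}\right).
\]
This is already smaller than the stated $2d\log(1+\frac{t}{d\lambda})$, so the factor $2$ is slack.

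\emph{Main obstacle.} There is no real obstacle; the only point needing care is Step 1. The familiar version of this lemma uses $V_{s-1}^{-1}$, and there one must use $\min(1,u)\le 2\log(1+u)$, which is where the factor $2$ comes from. Here the identity $\|x_s\|^2_{V_s^{-1}}=u_s/(1+u_s)$ removes the need for that truncation. If one prefers to follow Abeille--Lazaric verbatim, one can instead bound $u_s/(1+u_s)\le\min(1,u_s)\le 2\log(1+u_s)$ and recover the constant exactly as stated.
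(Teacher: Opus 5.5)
Your proof is correct and is essentially the standard determinant-telescoping argument behind Proposition 2 of Abeille--Lazaric, which the paper cites without proof. The one nuance is the indexing. Under the paper's literal convention, $V_s$ contains $x_s$, and your identity $\|x_s\|^2_{V_s^{-1}}=u_s/(1+u_s)$ then gives the bound with constant $1$ for every $\lambda>0$. The convention in which $V_s$ excludes $x_s$ (the Abeille--Lazaric one) instead needs $u_s\le 1$, for example via $\lambda\ge 1$, and pays the factor $2$ through $u\le 2\log(1+u)$ on $[0,1]$.
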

\begin{corollary}[Elliptical Potential Lemma for rescaled features]\label{cor:epc_rescaled}
For $\|\phi_{A_{h,\ell},h,\ell}\|2^2 \leq 1,\forall h,\ell ge 1$,
\begin{align*}
    \|\phi_{A_{h,\ell},h,\ell}\|_{(\gram{h}{\ell})^{-1}}^2 \leq 2 d \log \left(1+\frac{t}{d \lambda}\right)
\end{align*}

\end{corollary}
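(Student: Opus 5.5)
The plan is to deduce the corollary from the Elliptical Potential Lemma (Lemma~\ref{lemma:epc}, Proposition 2 of~\citep{abeille2017linear}). We apply it to the chronologically ordered sequence of played rescaled features $x_s \defn \phi_{A_{h,\ell},h,\ell}$, indexed by $s=(\ell-1)H+h$, so that $t\le LH$. The stated inequality controls a single term $\|\phi_{A_{h,\ell},h,\ell}\|^2_{(\gram{h}{\ell})^{-1}}$, which is nonnegative. Hence it is enough to show that the whole sum $\sum_{s\le t}\|x_s\|^2_{V_s^{-1}}$ is bounded by $2d\log(1+t/(d\lambda))$. Each individual term is then bounded by the same quantity. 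The hypothesis $\|x_s\|_2\le 1$ is exactly Assumption~\ref{ass:gap_upper_bound}(1). In the undiscounted case $\discount=1$, the matrix $\gram{h}{\ell}$ is exactly $\lambda I+\sum_{u\le s}x_ux_u^\top$, and the corollary is a literal instance of Lemma~\ref{lemma:epc}.

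For $\discount<1$, $\gram{h}{\ell}$ is not of the form required by Lemma~\ref{lemma:epc}, so I would redo the determinant argument directly. First, within an episode the update is rank one, $V_s=V_{s-1}+x_sx_s^\top$. The matrix determinant lemma gives $\det V_s=\det V_{s-1}\,(1+\|x_s\|^2_{V_{s-1}^{-1}})$. Also $\|x_s\|^2_{V_s^{-1}}=u/(1+u)\le \log(1+u)$ for $u\ge 0$, with $u=\|x_s\|^2_{V_{s-1}^{-1}}$. Together these give
\begin{align*}
\|x_s\|^2_{V_s^{-1}}\le \log\det V_s-\log\det V_{s-1}.
\end{align*}
Second, at an episode boundary the matrix is multiplied by $\discount$, which shifts $\log\det$ by $d\log\discount\le 0$. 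Telescoping over all $s\le t$ therefore yields
\begin{align*}
\sum_{s\le t}\|x_s\|^2_{V_s^{-1}}\le \log\frac{\det V_t}{\det V_0}+d\,(\ell-1)\log\frac{1}{\discount}.
\end{align*}
Third, we bound $\log\det V_t$ by AM--GM on the eigenvalues: $\det V_t\le(\mathrm{tr}V_t/d)^d$, and $\mathrm{tr}V_t\le d\lambda+t$ since discounting only shrinks the trace and each $\|x_s\|_2\le1$. This gives $\log(\det V_t/\det V_0)\le d\log(1+t/(d\lambda))$. That is within the factor $2$ claimed, which absorbs the $u/(1+u)$ versus $u$ slack if one prefers the version with $V_{s-1}$.

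The main obstacle is the discount bookkeeping in the second step, because the regularizer is also discounted. I would first try to normalise the problem: multiply every Gram matrix by $\discount^{-(\ell-1)}$, so that the regularizer is undiscounted, and absorb the factor $\discount^{L-\ell}$ from the episode initialisation into the rescaled features. The check is whether those features can be kept of norm at most $1$. If that fails, I would accept the additive $d(\ell-1)\log(1/\discount)$ term. That term is of the same order as the $\log\Sigma_{LH}$ and $\log(LH)$ factors already appearing in Theorems~\ref{thm:opt_reg_id}--\ref{thm:reg_const_id}, and under a choice such as $\discount\ge 1-1/L$ it is $\bigO(d)$. In either case the conclusion is read with $t=LH$, as the corollary is used in the bounds on $F_2$.
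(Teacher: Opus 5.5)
\textbf{Verdict.} For $\discount=1$ your argument is the paper's. For $\discount<1$ there is a genuine gap, and it lies in the statement itself, not only in your argument.

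\textbf{The undiscounted case.} Corollary~\ref{cor:epc_rescaled} is stated without proof, as a direct instance of Lemma~\ref{lemma:epc} for the rescaled features. These have norm at most $1$ by Assumption~\ref{ass:gap_upper_bound}, and a single nonnegative term is bounded by the whole sum. Your argument does exactly this.

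\textbf{Why your fallbacks fail for $\discount<1$.} Your proposal does not establish the displayed inequality in this case, and neither fallback can. Normalising by $\discount^{-(\ell-1)}$ does undo the boundary multiplications and restores the regulariser $\lambda$. However, the features of episode $\ell'$ become $\discount^{-(\ell'-1)/2}\phi_{A_{h,\ell'},h,\ell'}$, whose norm exceeds $1$, so your check fails. The determinant and trace argument then gives only $d\log\bigl(1+\frac{H}{d\lambda}\sum_{\ell'\le\ell}\discount^{-(\ell'-1)}\bigr)$. This is a slightly sharper form of your additive bound, and it still contains essentially $d(\ell-1)\log(1/\discount)$. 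For fixed $\discount<1$ that term grows linearly in $\ell$, while $2d\log(1+t/(d\lambda))$ grows only logarithmically, so the factor $2$ does not absorb it. Also, $\discount\ge 1-1/L$ is an assumption the paper does not make. Its experiments use $\discount=0.99$ and $L=1000$, where $d(L-1)\log(1/\discount)\approx 10d$.

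\textbf{The statement fails for $\discount<1$.} No sharper argument can close this gap. Take $d=H=1$, $\discount=1/2$, $\ell=L=21$ and $\lambda=2^{20}$. Then the discounted regulariser $\lambda\discount^{\ell-1}$ in force at episode $\ell$ (as in the definition of $\graml{\ell-1}$) equals $1$. Let all earlier features have negligible norm and the current feature have norm $1$. Then $\|\phi_{A_{h,\ell},h,\ell}\|^2_{(\gram{h}{\ell})^{-1}}$ is about $1/2$, or about $1$ if the current feature is not yet included. By contrast, for $t=21$,
$2d\log(1+t/(d\lambda))\le 2t/\lambda\approx 4\times 10^{-5}$.

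\textbf{Consequence.} The paper's appeal to Lemma~\ref{lemma:epc} silently treats the discounted Gram matrix as if it were $\lambda I+\sum\phi\phi^\top$. What your telescoping actually proves is the sum bound with the extra $d(\ell-1)\log(1/\discount)$ term, or its sharper form $d\log\bigl(1+\discount^{-(\ell-1)}t/(d\lambda)\bigr)$. That is the correct discounted statement, but it is a different and weaker corollary. Substituting it would add a discount-dependent term to every bound that uses Corollary~\ref{cor:epc_rescaled}.
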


\begin{lemma}[OFUL confidence bound lemma adapted from Theorem 2 of~\citep{abbasi2011improved}]
    Assume $\forall s \in[t], \quad\left\|a_s\right\| \leq 1$, and $\left\|\theta^*\right\|_2 \leq S$, for some fixed $\mathrm{S}>0$. We also assume $\Delta_a:=\max _{a^{\prime} \in \mathcal{A}_t}\left\langle a, \theta^*\right\rangle-\left\langle a, \theta^*\right\rangle \leq$ 1, $\forall a \in \mathcal{A}$
$$
\forall t \geq 1, \quad \mathbb{P}\left(\left\|\hat{\theta}_{t-1}-\theta^*\right\|_{V_{t-1}} \leq \sqrt{\beta_{t-1}\left(\delta_{t-1}\right)}\right) \geq 1-\delta .
$$

\end{lemma}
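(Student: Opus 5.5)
The plan is to reproduce the self-normalized argument of \citep[Theorem 2]{abbasi2011improved} for the normalized rewards. By Assumption~\ref{ass:linear}, after dividing by $\sigma_{a,h,\ell}$ the noise $\eta_s \defn \tilde r_s - \langle \theta^*, a_s\rangle$ is conditionally $\cN(0,1)$, hence $1$-sub-Gaussian given the past filtration $\cF_{s-1}$. Write $S_t \defn \sum_{s\le t} a_s\eta_s$ and $V_t = \lambda I + \sum_{s\le t} a_s a_s^\top$.

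\textbf{Step 1 (error decomposition).} From the closed form of the ridge estimate,
\[
\hat\theta_t - \theta^* = V_t^{-1}S_t - \lambda V_t^{-1}\theta^*.
\]
Taking the $V_t$-norm and using the triangle inequality gives
\[
\|\hat\theta_t-\theta^*\|_{V_t} \le \|S_t\|_{V_t^{-1}} + \lambda\|\theta^*\|_{V_t^{-1}} .
\]
Since $V_t \succeq \lambda I$, the second term is at most $\sqrt{\lambda}\,\|\theta^*\|_2 \le \sqrt{\lambda}S$. This is exactly the additive $\sqrt{\lambda}S_*$ term appearing in $\beta_{t}(\delta_{t})^{1/2}$, so it remains to control $\|S_t\|_{V_t^{-1}}$.

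\textbf{Step 2 (self-normalized bound).} For any fixed $x\in\reals^d$, define
\[
M_t(x) = \exp\!\left(x^\top S_t - \tfrac12\|x\|^2_{V_t-\lambda I}\right).
\]
Conditional $1$-sub-Gaussianity makes $M_t(x)$ a nonnegative supermartingale with $\E[M_0(x)]=1$. Integrate $x$ against the Gaussian prior $\cN(0,\lambda^{-1}I)$; the mixture $\bar M_t$ is again a nonnegative supermartingale. A Gaussian integral evaluates it in closed form:
\[
\bar M_t = \left(\frac{\det \lambda I}{\det V_t}\right)^{1/2}\exp\!\left(\tfrac12\|S_t\|^2_{V_t^{-1}}\right).
\]
Ville's maximal inequality (equivalently, the stopping-time argument of \citep{abbasi2011improved}) gives $\Prob(\exists t:\bar M_t\ge 1/\delta)\le\delta$. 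Rearranging, with probability at least $1-\delta$, simultaneously for all $t$,
\[
\|S_t\|^2_{V_t^{-1}} \le 2\log\frac{\det V_t}{\det V_0} + 2\log\frac1\delta .
\]
Combined with Step 1 this yields $\|\hat\theta_{t-1}-\theta^*\|_{V_{t-1}}\le \sqrt{\beta_{t-1}(\delta_{t-1})}$ for all $t$, up to the constant in front of the $\log\det$ term, which should be matched to the definition of $\beta$ used in Lemma~\ref{lem:conf}. The gap assumption $\Delta_a\le 1$ is not needed for this statement; it only matters downstream.

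\textbf{Main obstacle.} The delicate step is the uniform-in-time claim in Step 2. It requires the supermartingale and maximal-inequality argument rather than a per-$t$ bound followed by a union bound, which would cost an extra $\log t$. If $\delta_{t-1}$ is allowed to vary with $t$, I would instead apply the fixed-$t$ version of the mixture bound for each $t$ and union bound, obtaining confidence $1-\sum_t\delta_t$, consistent with how Lemma~\ref{lem:conf} is used.

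For the discounted Gram matrices actually used by \framework, the same mixture argument does not apply verbatim, because the weights $\discount^{L-\ell}$ break the supermartingale structure. There I would follow the weighted construction of \citep{russac2019weighted}: use the auxiliary matrix $\tilde V_t=\sum_s w_s^2 a_sa_s^\top+\lambda I$ for the noise term and bound the bias from the weights separately. I expect this adaptation, not the unweighted case stated here, to be the real technical burden.
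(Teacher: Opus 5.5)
Your proposal is correct and follows the paper's route: the paper gives no separate argument and simply invokes \citep[Theorem 2]{abbasi2011improved}, and you unpack that theorem's proof (ridge error decomposition, Gaussian-mixture supermartingale, Ville's inequality), with the varying-$\delta_{t-1}$ case handled sensibly. One small tidy-up: rearranging $\bar M_t<1/\delta$ exactly gives coefficient $1$ (not $2$) on $\log\frac{\det V_t}{\det V_0}$, so your bound already fits either version of $\beta$ in the paper and no constant needs matching.
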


\begin{lemma}[Adapted from Lemma 3 in~\cite{balagopalan2024minimum}]\label{lem:prob_lb}
    Let $\tilde{f}_{h}(a), q_h$ are defined as per Algorithm~\ref{alg:med_step} where $A_{h,\ell} \neq a$ . Then $\forall h>1$,
$$
1 \geq \sum_{b \in B_h} q_h(b) f_t(b) \geq \alpha_{\mathrm{emp}} e^{-\nu_{h,\ell}}
$$
\end{lemma}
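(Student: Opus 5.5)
The plan is to read the sum in Lemma~\ref{lem:prob_lb} as running over all arms $b\in\cA$ (the restriction to $B_h$ only shrinks the sum, which helps the upper bound). Both inequalities then follow from the form of $q_h$ and $\tilde f_h$ in Algorithm~\ref{alg:med_step}, Line~8 and Equation~\eqref{eq:design}. I would first prove the upper bound. By construction $q_h$ is a convex combination, with weights $\alpha_{\rm opt}$, $\alpha_{\rm emp}$ and $1-\alpha_{\rm opt}-\alpha_{\rm emp}$, of three probability vectors: $q_h^{\rm opt}$ (a normalised count from Algorithm~\ref{alg:approx_design}), the point mass at $\hat a_{h,\ell}$, and the uniform distribution. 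Hence $q_h$ is itself a probability vector. Moreover $\tilde f_h(b)=f_h(b)\exp(-\nu_{h,\ell}\max\{0,\tau_\ell-\mathrm{LCB}(b,h,\ell)\})$ is a product of two exponentials of non-positive quantities, since $\nu_{h,\ell}\ge 0$ by the projection in the dual update. So $\tilde f_h(b)\in(0,1]$, and $\sum_b q_h(b)\tilde f_h(b)\le\sum_b q_h(b)=1$.

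For the lower bound I would keep only the term $b=\hat a_{h,\ell}$, which is a valid lower bound because all summands are non-negative. From Line~8, $q_h(\hat a_{h,\ell})\ge\alpha_{\rm emp}$. Next, $\hat\Delta_{\hat a_{h,\ell},h,\ell}=0$, so $f_h(\hat a_{h,\ell})=1$. Here I adopt the convention, as in LinMED, that the exponent $0/0$ is read as $0$; equivalently, the empirical leader gets weight one. It remains to show
\begin{align*}
\exp\!\bigl(-\nu_{h,\ell}\max\{0,\tau_\ell-\mathrm{LCB}(\hat a_{h,\ell},h,\ell)\}\bigr)\ge e^{-\nu_{h,\ell}},
\end{align*}
that is, $\tau_\ell-\mathrm{LCB}(\hat a_{h,\ell},h,\ell)\le 1$. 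Combining the three pieces gives $\sum_b q_h(b)\tilde f_h(b)\ge q_h(\hat a_{h,\ell})\tilde f_h(\hat a_{h,\ell})\ge\alpha_{\rm emp}e^{-\nu_{h,\ell}}$.

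The main obstacle is the bound $\tau_\ell-\mathrm{LCB}(\hat a_{h,\ell},h,\ell)\le 1$, because neither $\tau_\ell$ nor the LCB is a priori normalised. My first attempt would use the structure of the threshold. The threshold is $\tau_\ell=(1-\epsilon)\bigl(\hat\mu_{0,\ell}+\sqrt{\beta_\ell}\|\cdot\|\bigr)$, where $\hat\mu_{0,\ell}$ is a $\pi_0$-average of estimated rewards. It is therefore at most $\max_a\langle\hat\theta,\phi_{a,h,\ell}\rangle=\langle\hat\theta,\phi_{\hat a_{h,\ell},h,\ell}\rangle$ plus a confidence width. After subtracting the LCB of $\hat a_{h,\ell}$, the empirical means cancel up to a factor $\epsilon$, and what remains is a sum of two confidence widths. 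Outside the saturation event $\cE_{h,\ell}$ we have $\|\phi_{a,h,\ell}\|_{(V_{h,\ell}^{\gamma_{\rm decay}})^{-1}}\le 1$, which controls these widths by $\sqrt{\beta}$. Under $\cG_1$ and the boundedness in Assumption~\ref{ass:gap_upper_bound}, together with the choice of $\lambda$ made in Part~I, the widths are small.

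If this only yields a bound of the form $\tau_\ell-\mathrm{LCB}\le D$ with $D=\bigO(\sqrt{\beta_{h,\ell}})$ rather than $1$, I would state the lemma with $e^{-D\nu_{h,\ell}}$. Alternatively, I would clip the penalty at $1$ in Equation~\eqref{eq:design}, which leaves the algorithm's constraint-awareness intact and gives exactly $\alpha_{\rm emp}e^{-\nu_{h,\ell}}$. Finally, I would note that replacing $\alpha_{\rm emp}$ by $\alpha_{\rm opt}$ in the lower bound is not generally possible this way, since $q_h^{\rm opt}$ need not charge $\hat a_{h,\ell}$.
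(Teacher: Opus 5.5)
Your skeleton is the argument the paper intends. The paper gives no proof beyond citing LinMED's Lemma~3 and saying that $\alpha_{\rm emp}$ is ``scaled by $\exp(-\nu_{h,\ell})$''. That is: keep the term $b=\hat a_{h,\ell}$, use $q_h(\hat a_{h,\ell})\ge\alpha_{\rm emp}$ and $f_h(\hat a_{h,\ell})=1$, then bound the penalty factor by $e^{-\nu_{h,\ell}}$. Your upper bound is fine. Summing over all of $\cA$ is actually necessary, since over the literal saturation set $B_{h,\ell}$, which can be empty, the lower bound fails.

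The gap is the step you flagged, $\max\{0,\tau_\ell-\mathrm{LCB}(\hat a_{h,\ell},h,\ell)\}\le 1$, and your sketch does not establish it. The threshold $\tau_\ell=(1-\epsilon)\hat\mu^+_{0,\ell}$ is built from the episode-level estimate $\hat\theta_{\ell-1}$ and the $\pi_0$-weighted features of all users. In Phase~2 the sum over $h$ is not even divided by $H$. So $\tau_\ell$ is not bounded by $\langle\hat\theta_{h,\ell},\phi_{\hat a_{h,\ell},h,\ell}\rangle$ plus a width, and nothing cancels. Also, $\cG_1$ and $\overline{\cE_{h,\ell}}$ are random events. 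The lemma, by contrast, is stated for every history and used that way to lower-bound the normaliser of $p'_h$ in Line~9 of Algorithm~\ref{alg:med_step}. Even on those events, the widths are only $\bigO(\sqrt{\beta_{h,\ell}(\alpha_{h,\ell})})$, as you suspected.

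In fact no argument can close this step, because the inequality can fail for the algorithm as written. Suppose that for some user, $\tau_\ell-\max_a\langle\hat\theta_{h,\ell},\phi_{a,h,\ell}\rangle=1+\delta$ with $\delta>0$. This happens when all of user $h$'s arms lie below the cross-user baseline, or simply through noise in $\hat\theta_{h,\ell}$ and $\hat\theta_{\ell-1}$. Since $\mathrm{LCB}(b,h,\ell)\le\langle\hat\theta_{h,\ell},\phi_{b,h,\ell}\rangle$, every arm then has penalty at least $1+\delta$, and
\[
\sum_{b\in\cA}q_h(b)\tilde f_h(b)\;\le\;e^{-\nu_{h,\ell}(1+\delta)}\;<\;\alpha_{\rm emp}\,e^{-\nu_{h,\ell}}
\qquad\text{whenever } \nu_{h,\ell}\,\delta>\log(1/\alpha_{\rm emp}).
\]
Nothing in the algorithm rules this out: $\nu_{h,\ell}$ increases whenever $\xi_{h,\ell}=1$ (for $\epsilon<1/2$), and $\delta$ is not bounded either.

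For the same reason, your first fallback is unavailable. The penalty is driven by the Gaussian-noise estimates $\hat\theta_{h,\ell}$ and $\hat\theta_{\ell-1}$, so there is no deterministic $D$, let alone $D=\bigO(\sqrt{\beta_{h,\ell}})$. The statement holds only under your second fallback: clip the penalty in Equation~\eqref{eq:design} at $1$, or replace it by $\indicator\{\mathrm{LCB}(a,h,\ell)<\tau_\ell\}$. Then your argument is complete, and the paper's appeal to LinMED silently assumes exactly this. Your closing remark is also correct: the main text's $\alpha_{\rm opt}e^{-\nu_{h,\ell}}$ version does not follow.
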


\begin{lemma}[Adapted from Lemma 5 in~\citep{balagopalan2024minimum}]
    For \framework, we have--
$$
\left\|a_{h,\ell}^*\right\|_{(V\left(p_h\right)^{\discount})^{-1}}^2 \leq \frac{2}{\alpha_{\mathrm{opt}}} \exp \left(\frac{\left\|\hat{\theta}_{h,\ell}-\theta\right\|_{\gram{h}{\ell}}^2}{\beta_{h,\ell}\left(\alpha_{h,\ell}\right)}\right) \cdot C_{\mathrm{opt}} \cdot d \log (d)
$$
where $a_{h,\ell}^*$ is true best arm at user $h$ in episode $\ell$.
\end{lemma}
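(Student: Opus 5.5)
This bound is an adaptation of Lemma~5 of \citep{balagopalan2024minimum} to the constraint-aware weights $\tilde f_h$ and the discounted, heteroskedastically rescaled Gram matrix. The plan is to lower bound the design matrix $V(p_h)^{\discount}=\sum_a p_h(a)\phi_{a,h,\ell}\phi_{a,h,\ell}^\top$ by a multiple of the design that {\color{red}ApproxDesign} builds on the rescaled arm set $\mathcal{A}^{(h)}$, and then invoke the guarantee of that design.

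\textbf{Step 1 (lower bound $p_h$ by the design).} When $B_{h,\ell}=\emptyset$ we have $p_h=p_h'$; otherwise $p_h\ge \tfrac12 p_h'$, and this accounts for the factor $2$. Since $\sum_b q_h(b)\tilde f_h(b)\le 1$ (Lemma~\ref{lem:prob_lb}, upper bound), we get $p_h'(a)\ge q_h(a)\tilde f_h(a)\ge \alpha_{\rm opt}q_h^{\rm opt}(a)\tilde f_h(a)$. Hence
\[
V(p_h)^{\discount}\succeq \tfrac{\alpha_{\rm opt}}{2}\sum_a q_h^{\rm opt}(a)\,\tilde f_h(a)\,\phi_{a,h,\ell}\phi_{a,h,\ell}^\top = \tfrac{\alpha_{\rm opt}}{2}\,V_{\mathcal{A}^{(h)}}(q_h^{\rm opt}),
\]
where $V_{\mathcal{A}^{(h)}}(q)$ is the design matrix on the rescaled arms $\sqrt{\tilde f_h(a)}\phi_{a,h,\ell}$.

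\textbf{Step 2 (design guarantee).} The $C_{\rm opt}$-optimality of {\color{red}ApproxDesign} gives $\max_a \tilde f_h(a)\|\phi_{a,h,\ell}\|^2_{V_{\mathcal{A}^{(h)}}(q_h^{\rm opt})^{-1}}\le C_{\rm opt}d\log d$. This follows the Kiefer--Wolfowitz argument, and the greedy-volume initialisation supplies the $\log d$ factor. Applying it at $a=a^*_{h,\ell}$ and combining with Step~1 gives
\[
\|a^*_{h,\ell}\|^2_{(V(p_h)^{\discount})^{-1}}\le \tfrac{2}{\alpha_{\rm opt}}\,\tilde f_h(a^*_{h,\ell})^{-1}\,C_{\rm opt}d\log d.
\]

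\textbf{Step 3 (lower bound $\tilde f_h(a^*_{h,\ell})$; the main obstacle).} Write $\tilde f_h(a^*)=f_h(a^*)\cdot e^{-\nu_{h,\ell}\max\{0,\tau_\ell-\mathrm{LCB}(a^*,h,\ell)\}}$.

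For the MED factor, $\hat\Delta_{a^*}=\langle\hat\theta_{h,\ell},\phi_{\hat a}-\phi_{a^*}\rangle\le \langle\hat\theta_{h,\ell}-\theta,\phi_{\hat a}-\phi_{a^*}\rangle$, because $a^*$ is optimal under $\theta$. Cauchy--Schwarz then yields
\[
\frac{\hat\Delta_{a^*}^2}{\beta_{h,\ell}\|\phi_{\hat a}-\phi_{a^*}\|^2_{(\gram{h}{\ell})^{-1}}}\le \frac{\|\hat\theta_{h,\ell}-\theta\|^2_{\gram{h}{\ell}}}{\beta_{h,\ell}(\alpha_{h,\ell})},
\]
so $f_h(a^*)^{-1}$ is at most exactly the exponential factor in the statement.

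The delicate part is the penalty factor, which must be shown to be $\le 1$ in the relevant regime. On the event that $\theta$ lies in the confidence ellipsoid and $\tau_\ell$ is sandwiched appropriately, $\mathrm{LCB}(a^*)\ge$ the optimistic threshold can fail. I would first try to use that $a^*$ dominates $\pi_0$: $\theta^\top\phi_{a^*}\ge\mu_0$. On $\cG_1\cap\cG_2$, $\tau_\ell\le(1-\epsilon)(\mu_0+2\sqrt{\beta_\ell}\|\cdot\|)$, so the penalty is controlled once the widths are below $\epsilon\mu_0$.

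Failing a clean argument, I would absorb the residual factor $e^{\nu_{h,\ell}(\cdot)}$ as the paper does with $\alpha_{\rm opt}$. Following the remark in Section~\ref{supp;app_design}, this means rescaling $\alpha_{\rm opt}$ by $e^{-\nu_{h,\ell}}$, using Lemma~\ref{lem:nu-bound} to bound $\nu_{h,\ell}$, and noting that off the good events the stated inequality is only used through its exponential, which already dominates.
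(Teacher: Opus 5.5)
\textbf{The gap is the constraint penalty on $a^*_{h,\ell}$.} Your Steps 1--2 and the $f_h$-half of Step 3 are exactly the argument of Lemma~5 in \citep{balagopalan2024minimum}. That is all the paper offers: it omits the proof as following ``directly'' from LinMED, with only a remark that $\alpha_{\rm opt},\alpha_{\rm emp}$ are rescaled by $e^{-\nu_{h,\ell}}$. Run with $\tilde f_h$ in place of $f_h$, the argument proves only
\[
\|\phi_{a^*_{h,\ell},h,\ell}\|^2_{(V(p_h)^{\discount})^{-1}}\le\frac{2}{\alpha_{\rm opt}}\exp\left(\frac{\|\hat\theta_{h,\ell}-\theta\|^2_{\gram{h}{\ell}}}{\beta_{h,\ell}(\alpha_{h,\ell})}\right)e^{\nu_{h,\ell}\max\{0,\,\tau_\ell-\mathrm{LCB}(a^*_{h,\ell},h,\ell)\}}\,C_{\rm opt}\,d\log d .
\]
The whole adaptation consists in removing the last factor. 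Note the direction: $e^{-\nu_{h,\ell}\max\{0,\cdot\}}\le 1$ always, so the penalty can only shrink $\tilde f_h(a^*_{h,\ell})$. What you need is for it to be inactive, i.e.\ $\mathrm{LCB}(a^*_{h,\ell},h,\ell)\ge\tau_\ell$ at every $(h,\ell)$.

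This is not an artifact of discarding the normaliser in Step~1. Take the orthogonal embedding $\phi_{a,h,\ell}=c_ae_a$, so that $\|\phi_{a^*}\|^2_{V(p_h)^{-1}}=1/p_h(a^*)$. If $B_{h,\ell}=\emptyset$ and $\mathrm{LCB}(\hat a_{h,\ell},h,\ell)\ge\tau_\ell$, then $\sum_b q_h(b)\tilde f_h(b)\ge\alpha_{\rm emp}$. Hence the left side is at least $\alpha_{\rm emp}e^{\nu_{h,\ell}\max\{0,\tau_\ell-\mathrm{LCB}(a^*_{h,\ell},h,\ell)\}}$. On $\cG_1$, by contrast, the right side is at most $2e\,C_{\rm opt}d\log d/\alpha_{\rm opt}$.

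\textbf{Your first remedy does not make the penalty vanish at every step.} It needs three things:
\begin{itemize}
\item $\theta^\top\phi_{a^*_{h,\ell},h,\ell}\ge\mu_0$, which is not implied, since $\mu_0$ and $\tau_\ell$ average $\pi_0$ over users while $a^*_{h,\ell}$ is per-user;
\item $\mu_0>0$;
\item both confidence widths below roughly $\epsilon\mu_0$.
\end{itemize}
So it gives the inequality only eventually. It already fails at the first step, where $\hat\theta=0$ makes $\mathrm{LCB}(a^*)<0<\tau_1$ while $\nu_0>0$. It also fails precisely when $a^*_{h,\ell}$ is under-explored, which is where the lemma is invoked (the $F_2$ term in Part~I, where $C_{\rm opt}d\log d/\alpha_{\rm opt}$ enters).

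\textbf{Your fallback proves a different statement.} Rescaling $\alpha_{\rm opt}$ by $e^{-\nu_{h,\ell}}$, which is the paper's informal remark, yields $2e^{\nu_{h,\ell}}/\alpha_{\rm opt}$ in place of $2/\alpha_{\rm opt}$. Even that requires $\tau_\ell-\mathrm{LCB}\le 1$. Lemma~\ref{lem:nu-bound} only caps $\nu_{h,\ell}$ at $2\eta\sqrt{\ell h}$, so the extra factor is not a constant.

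Your last sentence does not rescue it either. The inequality is pointwise. The MED exponential depends on $\|\hat\theta_{h,\ell}-\theta\|_{\gram{h}{\ell}}$ alone and equals $1$ when $\hat\theta_{h,\ell}=\theta$, while the penalty can still be active then.

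So the missing step is exactly the one the paper's own justification glosses over. As stated, the lemma needs either an argument that $a^*_{h,\ell}$ is never penalised (not available for \framework{} as written), or the penalty factor carried explicitly in the bound.
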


\begin{lemma}[\citep{balagopalan2024minimum}]
    $$\left\|a_{h,\ell}^*\right\|_{(V\left(p_h\right)^{\discount})^{-1}}^2 \leq \frac{2 e}{\alpha_{\mathrm{opt}}} \cdot C_{\mathrm{opt}} \cdot d \log (d)$$
\end{lemma}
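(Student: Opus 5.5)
The plan is to specialise the preceding lemma (adapted from Lemma 5 of \citep{balagopalan2024minimum}) to the good event $\cG_1$ of Lemma~\ref{lem:conf}. The statement should be read as holding on $\cG_1$. On $\overline{\cG_1}$ it is not claimed, and that event is paid for separately in the regret decomposition through $\suml\sumh\alpha_{h,\ell}$. On $\cG_1$ we have $\|\hat\theta_{h,\ell}-\theta\|^2_{\gram{h}{\ell}}\le \beta_{h,\ell}(\alpha_{h,\ell})$. The exponential factor in the previous lemma is therefore at most $e$, and the claimed bound $\frac{2e}{\alpha_{\rm opt}}C_{\rm opt}\,d\log d$ follows at once.

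For completeness I would also re-derive the previous lemma in our setting, in three steps.

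\emph{Step 1: compare $V(p_h)$ to the design matrix.} First, $p_h(a)\ge \tfrac12 p'_h(a)$ for every arm, since the saturation step mixes with weight at least $1/2$. Second, since $q_h\ge \alpha_{\rm opt}q_h^{\rm opt}$ and $\sum_b q_h(b)\tilde f_h(b)\le 1$, we get $p'_h(a)\ge \alpha_{\rm opt}\,q_h^{\rm opt}(a)\tilde f_h(a)$. Combining the two gives
$V(p_h)\succeq \frac{\alpha_{\rm opt}}{2}\sum_a q_h^{\rm opt}(a)\,\tilde f_h(a)\,\phi_{a,h,\ell}\phi_{a,h,\ell}^\top$.
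The right-hand side is $\frac{\alpha_{\rm opt}}{2}$ times the design matrix of the rescaled arm set $\mathcal{A}^{(h)}$.

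\emph{Step 2: apply the design guarantee.} The ApproxDesign guarantee gives $\|\sqrt{\tilde f_h(a^*)}\phi_{a^*,h,\ell}\|^2\le C_{\rm opt}d\log d$ in the inverse design norm. Hence $\|\phi_{a^*,h,\ell}\|^2_{V(p_h)^{-1}}\le \frac{2C_{\rm opt}d\log d}{\alpha_{\rm opt}\tilde f_h(a^*)}$.

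\emph{Step 3: lower-bound $\tilde f_h(a^*)$.} Since $\theta^\top(\phi_{\hat a}-\phi_{a^*})\le 0$, we have
$\hat\Delta_{a^*,h,\ell}\le(\hat\theta_{h,\ell}-\theta)^\top(\phi_{\hat a}-\phi_{a^*})\le\|\hat\theta_{h,\ell}-\theta\|_{\gram{h}{\ell}}\,\|\phi_{\hat a}-\phi_{a^*}\|_{(\gram{h}{\ell})^{-1}}$.
Squaring and dividing by $\beta_{h,\ell}(\alpha_{h,\ell})\,\|\phi_{\hat a}-\phi_{a^*}\|^2_{(\gram{h}{\ell})^{-1}}$ shows $f_h(a^*)\ge \exp\big(-\|\hat\theta_{h,\ell}-\theta\|^2_{\gram{h}{\ell}}/\beta_{h,\ell}(\alpha_{h,\ell})\big)$. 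On $\cG_1$ this is at least $e^{-1}$.

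The main obstacle is the Lagrangian penalty $\exp(-\nu_{h,\ell}\max\{0,\tau_\ell-\mathrm{LCB}(a^*,h,\ell)\})$ inside $\tilde f_h(a^*)$. It is not automatically $1$: the threshold $\tau_\ell$ is built from the optimistic baseline, and even on $\cG_1$ the LCB of the best arm can fall below it early on. I would first try to show that on $\cG_1\cap\cG_2$, once $a^*$ is unsaturated, $\mathrm{LCB}(a^*)\ge\tau_\ell$. This would rely on $a^*$ being strictly safe, since $\theta^\top\phi_{a^*}\ge\mu_0>(1-\epsilon)\mu_0$, together with the shrinking confidence widths, so that the penalty vanishes. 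In the remaining (saturated) rounds the saturation step already controls $a^*$ directly.

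If that argument fails, the fallback is to follow the paper's remark in Section~\ref{supp;app_design}. There $\alpha_{\rm opt}$ and $\alpha_{\rm emp}$ are effectively rescaled by $e^{-\nu_{h,\ell}}$, consistent with Lemma~\ref{lem:prob_lb}, and the residual factor is absorbed into $C_{\rm opt}$. With the penalty handled either way, Steps 1--3 combine to give the stated bound $\frac{2e}{\alpha_{\rm opt}}C_{\rm opt}\,d\log d$ on $\cG_1$.
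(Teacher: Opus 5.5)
Your first paragraph matches the intended derivation. The paper gives no separate proof: the statement is the preceding lemma (adapted from Lemma~5 of LinMED) evaluated where $\|\thetahat_{h,\ell}-\theta\|^2_{\gram{h}{\ell}}\le\beta_{h,\ell}(\alpha_{h,\ell})$, so the exponential is at most $e$. This is the ``$H_{\max}=e$'' invoked for $F_2$. Reading the lemma as a statement on $\cG_1$ is the right interpretation. Steps~1--3 also correctly account for the factor $2$, the factor $\alpha_{\rm opt}$, and $f_h(a^*)\ge\exp(-\|\thetahat_{h,\ell}-\theta\|^2_{\gram{h}{\ell}}/\beta_{h,\ell}(\alpha_{h,\ell}))$.

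The gap is in your treatment of the Lagrangian penalty. Your closing claim that it can be ``handled either way'' does not hold. What Steps~1--3 actually give on $\cG_1$ is
\[
\|\phi_{a^*_{h,\ell},h,\ell}\|^2_{V(p_h)^{-1}}\le\frac{2e}{\alpha_{\rm opt}}\exp\bigl(\nu_{h,\ell}\max\{0,\tau_\ell-\mathrm{LCB}(a^*_{h,\ell},h,\ell)\}\bigr)\,C_{\rm opt}\,d\log d .
\]

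\emph{The fallback.} It cannot remove the extra factor. Rescaling $\alpha_{\rm opt}$ by $e^{-\nu_{h,\ell}}$ multiplies the bound by $e^{\nu_{h,\ell}}$. Since $\nu_{h,\ell}$ is not a constant (Lemma~\ref{lem:nu-bound} only bounds it by a quantity growing like $\sqrt{\ell h}$), it cannot be absorbed into $C_{\rm opt}$.

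\emph{The first route.} It fails as stated, for three reasons.
\begin{itemize}
\item In a non-saturated round you only know $\|\phi_{a^*_{h,\ell},h,\ell}\|_{(\gram{h}{\ell})^{-1}}\le 1$. The LCB width is then bounded only by $\sqrt{\beta_{h,\ell}(\alpha_{h,\ell})}$, which is at least $\sqrt{\lambda}S_*$ and grows with $\log(1/\alpha_{h,\ell})$, so it need not be below any fixed margin.
\item To get $\mathrm{LCB}(a^*_{h,\ell},h,\ell)\ge\tau_\ell$, both $\sqrt{\beta_{h,\ell}(\alpha_{h,\ell})}\|\phi_{a^*_{h,\ell},h,\ell}\|_{(\gram{h}{\ell})^{-1}}$ and the baseline width inside $\hat\mu^+_{0,\ell}$ must fall below a constant fraction of the margin. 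Elliptical-potential counting controls only played features, not $\phi_{a^*_{h,\ell},h,\ell}$ in rounds where $a^*_{h,\ell}$ is not drawn. Guaranteeing that $a^*_{h,\ell}$ is well explored is exactly what this lemma is meant to deliver, so the argument is circular.
\item The margin itself is not guaranteed. $a^*_{h,\ell}$ dominates user $h$'s own baseline, but $\tau_\ell$ is built in Phase~2 from the user-averaged baseline. Also, $\mu_0>(1-\epsilon)\mu_0$ requires $\mu_0>0$ and $\epsilon>0$.
\end{itemize}

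This hole comes from the paper: its preceding lemma simply drops the penalty factor, so your reduction is only as sound as that lemma. A rigorous version has two options. One is to restrict the constant bound to rounds with $\mathrm{LCB}(a^*_{h,\ell},h,\ell)\ge\tau_\ell$ and pay for the remaining rounds separately in the regret decomposition. The other is to carry the factor $e^{\nu_{h,\ell}(\tau_\ell-\mathrm{LCB})_+}$ through the $F_2$-type terms.
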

\end{document}